\documentclass{selfevolagent}

\usepackage{amsmath}
\usepackage{amssymb}
\usepackage{tabularx,array}
\usepackage{dsfont}

\usepackage{algorithm}
\usepackage{algorithmic}

\usepackage{amsmath}
\usepackage{amssymb}
\usepackage{mathtools}
\usepackage{amsthm}

\usepackage{comment}

\theoremstyle{plain}
\newtheorem{theorem}{Theorem}[section]

\newtheorem{lemma}[theorem]{Lemma}

\theoremstyle{definition}
\newtheorem{definition}[theorem]{Definition}

\theoremstyle{remark}
\newtheorem{remark}[theorem]{Remark}

\usepackage[textsize=tiny]{todonotes}

\newcommand{\E}{\mathbb{E}}

\newcommand{\zijin}[1]{{\textcolor{blue} {#1}}}

\begin{document}

\title{Stabilizing Reinforcement Learning for Diffusion Language Models}
\author[1,2]{Jianyuan Zhong\cofirst}
\author[1,3]{Kaibo Wang\cofirst}
\author[1,3]{Ding Ding\cofirst}
\author[1]{Zijin Feng\corrauthor}
\author[1]{Haoli Bai}
\author[3]{Yang Xiang}
\author[1]{Jiacheng Sun\corrauthor}
\author[2]{Qiang Xu\corrauthor}
\affiliation[1]{Huawei Foundation Model Department}
\affiliation[2]{The Chinese University of Hong Kong}
\affiliation[3]{The Hong Kong University of Science and Technology}
\code{\url{https://github.com/JianyuanZhong/StableDRL}}

\abstract{%
Group Relative Policy Optimization (GRPO) is highly effective for post-training autoregressive (AR) language models, yet its direct application to diffusion large language models (dLLMs) often triggers reward collapse. We identify two sources of incompatibility. First, GRPO relies on importance ratios defined by sequence probabilities, which are intractable in dLLMs and must be estimated (e.g., via ELBO-based or mean-field likelihood proxies), yielding inherently noisy ratios. Second, standard GRPO's formulation is not designed for estimated ratios: its conditional clipping can be anomalously bypassed by model-agnostic estimation noise, producing gradient spikes, while its fixed group-size normalization amplifies gradient-magnitude fluctuations under high-variance ratio estimates. We show these effects form a self-reinforcing instability loop that drives policy drift and further increases ratio variance. To break this loop, we propose StableDRL, a reformulation of GRPO tailored for dLLMs that uses (i) unconditional clipping to suppress outlier-induced spikes and (ii) self-normalization to constrain updates within the convex hull of per-sample gradients. We further extend StableDRL to block-wise diffusion models via a staircase attention mechanism.%
}

\maketitle

\begingroup
\renewcommand{\thefootnote}{\fnsymbol{footnote}}
\footnotetext[2]{Co-first authors.}
\footnotetext[3]{Corresponding authors.}
\endgroup

\begin{figure}[t]
  \centering
  \includegraphics[width=\linewidth]{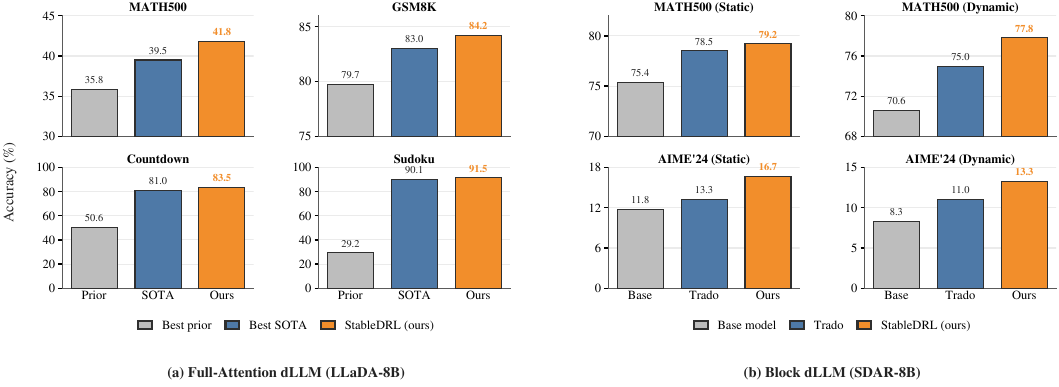}
  \caption{\textbf{StableDRL} is the first method to enable stable full-parameter RL training on both full-attention and block dLLMs, better unlocking reasoning capability for dLLMs. The \textbf{left} panel reports performance on full-attention dLLMs (LLaDA-8B~\cite{nie2025llada}). Based on Table~1, \emph{Best Prior} corresponds to WD1~\cite{tang2025wd1}, and \emph{Best SOTA} corresponds to the best performance between ESPO and SPG~\cite{ou2025espo, wang2025spg} for each task. The \textbf{right} panel demonstrates results for block diffusion models (SDAR-8B~\cite{cheng2025sdar}).}
  \label{fig:teaser}
\end{figure}

\section{Introduction}
\label{sec:intro}

Discrete Diffusion Large Language Models (dLLMs) have emerged as a compelling alternative to autoregressive (AR) models, intrinsically supporting parallel decoding and bidirectional context modeling~\citep{sahoo2024simple,nie2025llada,wu2025fast,yang2025mmada}. While Group Relative Policy Optimization (GRPO) has proven highly effective for reinforcement learning (RL) in the AR paradigm, its direct application to dLLMs leads to severe instability. As shown in Figure~\ref{fig:overview}(a), full-parameter GRPO training  on dLLMs exhibits an abrupt reward collapse at $\sim 300$ steps.

The incompatibility between GRPO and dLLMs stems from two factors: (i) the intractability of importance ratios in dLLMs~\cite{ou2025absorbingdiscretediffusionsecretly, ou2025espo} and (ii) GRPO's lack of adaptation to estimated importance ratios (Section ~\ref{sec:loop_analysis}). GRPO updates a target policy using data sampled from a behavior policy based on the importance ratios, defined as the ratio of their sequence probabilities. While this probability is tractable for AR models, it is intractable for dLLMs and often computed via estimations. Prior research has focused on the dLLM aspect, refining importance ratio estimation using mean-field approximations~\citep{zhao2025d1,tang2025wd1} or Evidence Lower Bound (ELBO) estimations~\citep{yang2025mmada,wang2025spg,ou2025espo}. Although these approaches yield performance gains, they empirically remain prone to training instability.

We attribute the instability in dLLMs to two design flaws in standard GRPO, which is inherently sensitive to the noisy importance ratios. First, the clipping mechanism in GRPO is conditional. In AR models, this mechanism accelerates the policy's return to the trust region. In dLLMs, however, model-agnostic estimation noise allows the clipping condition to be anomalously bypassed, triggering gradient spikes. Second, GRPO normalizes updates by the fixed group size. Given the high variance of importance ratio estimation in dLLMs, this static normalization results in drastic fluctuations in gradient magnitude, destabilizing the optimization process. To address the instability, we first analyze the underlying mechanism and then propose a stable GRPO variant tailored for dLLMs.

We theoretically and empirically demonstrate how these flaws precipitate a self-reinforcing \textit{instability loop}, leading to the reward collapse. As shown in Figure~\ref{fig:overview}(b), noisy importance ratios first induce gradient spikes under the GRPO update (Link 1). These spikes degrade the target policy, causing it to deviate significantly from the behavior policy (Link 2). This deviation, in turn, exacerbates the variance of importance ratios in subsequent steps (Link 3). We have proven that once the gradient norm exceeds a critical threshold, the probability of continued divergence increases, driving the policy toward irreversible reward collapse.

To stabilize training, we propose \textbf{StableDRL} to break the instability loop at its source (Link 1). As illustrated in Figure~\ref{fig:overview}(c), StableDRL incorporates two components. (i) We introduce \textit{unconditional clipping}, which enforces strict bounds on importance ratios regardless of the advantage. This prevents outliers from generating gradient spikes. (ii) We employ \textit{self-normalization}. Instead of dividing by the group size, we normalize the update by the sum of clipped importance ratios. This constrains the update within the convex hull of per-sample gradients. Furthermore, we extend StableDRL to block diffusion models~\citep{cheng2025sdar} via a \textit{staircase attention} mechanism, enabling leakage-free probability estimation.

To the best of our knowledge, StableDRL is the first method to enable stable, full-parameter RL training on both full-attention and block dLLMs for over 1,000 steps. This sustained stability effectively increases the volume of valid training rollout samples, allowing the model to fully unlock its reasoning capabilities and empirically achieve state-of-the-art performance in dLLM reasoning tasks. Our contributions are summarized as follows:
\begin{itemize}
\item We theoretically and empirically identify the self-reinforcing instability loop that causes reward collapse when GRPO is applied to dLLMs.
\item We propose StableDRL, a novel reinforcement learning framework for stabilizing the full-parameter training of dLLMs through unconditional clipping and self-normalization.
\item Comprehensive experiments validate the effectiveness of our StableDRL on both full-attention and block dLLMs, showing higher training stability and significant accuracy gain over prior best-in-class methods. 
\end{itemize}

\section{Background}
\begin{figure*}[t]
    \centering
    \includegraphics[width=1.0\linewidth]{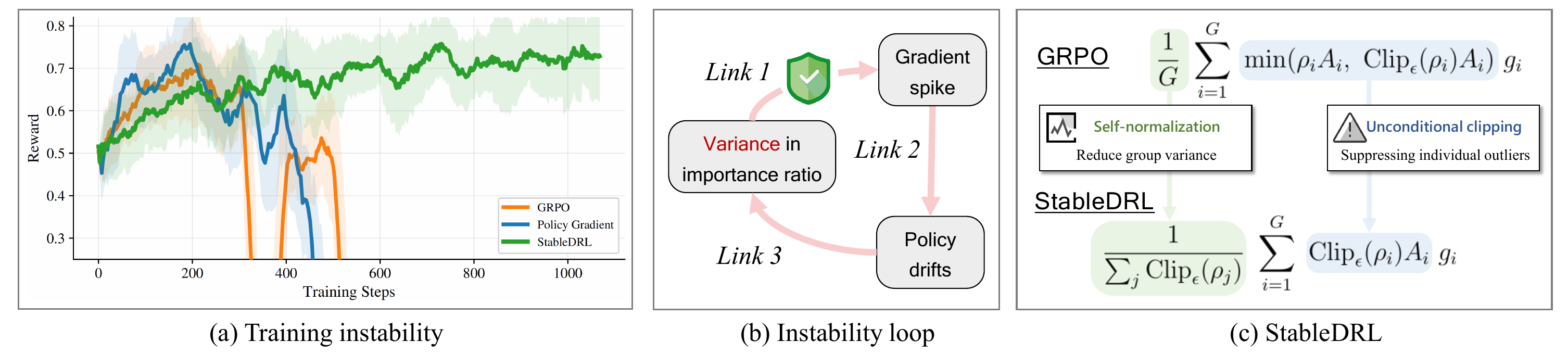}
    \caption{\textbf{(a)} \textbf{Training instability.} Naive integration of noisy importance ratios into GRPO leads to severe instability under full-parameter RL training with dLLMs. Notably, reward collapse occurs even with Policy Gradient, where the importance ratio is fixed at 1. \textbf{(b)} \textbf{Instability loop.} Estimation noise triggers gradient spikes and policy drift, creating a self-reinforcing cycle that amplifies the variance of future importance ratios. \textbf{(c)} \textbf{StableDRL.} To address this, we propose a reformulated GRPO for noisy importance ratios. By employing unconditional clipping and self-normalization, StableDRL effectively breaks the instability loop.}
    \label{fig:overview}
\end{figure*}
\label{sec:background}

\subsection{Masked Diffusion Language Models}
\label{subsec:mdlm}

Masked diffusion language models (MDLMs)~\citep{nie2025llada, sahoo2024simpleeffectivemaskeddiffusion} formulate text generation as a discrete diffusion process modeled by a continuous-time Markov chain. 
Given a clean sequence $x_0 \in \mathcal{V}^n$, the forward process $q(x_t|x_0)$ independently corrupts tokens by transitioning them to a special mask token \texttt{M} according to a schedule $t \in [0,1]$. The generative process reverses this corruption by learning a denoiser $\pi_\theta(x_0|x_t)$ to reconstruct the original data from the latent state.

Unlike AR models where the exact sequence log-likelihood $\log \pi_{\theta}(x_0)$ is computationally tractable via the chain rule, that of MDLMs is intractable, as it requires marginalizing over all $n!$ masking trajectories~\cite{ou2025absorbingdiscretediffusionsecretly}. Consequently, training optimizes the Evidence Lower Bound (ELBO)~\citep{wu2025fast,DBLP:conf/iclr/OuNXZSLL25}, 
denoted as $\mathcal{L}_\theta(x_0)$  with $\mathcal{L}_\theta(x_0) \leq \log \pi_{\theta}(x_0)$:
\begin{equation}\label{equ:elbo_objective}
    \mathcal{L}_\theta(x_0) = \E_{t,x_{t}}\left[\frac{1}{t}\sum_{i=1}^{n} \mathds{1}(x^i_{t}=\texttt{M})\log \pi_{\theta}(x^{i}_0|x_{t})\right].
\end{equation}

In practice, this expectation is approximated via Monte Carlo (MC) sampling. We denote the single-sample MC estimator of the ELBO as $\hat{\mathcal{L}}_\theta(x_0)$.
\subsection{Reinforcement Learning with dLLMs}

We focus on fine-tuning dLLMs to maximize a reward function $R(x)$ using policy gradient methods. The standard objective is to maximize the expected return $\mathcal{J}(\theta) = \mathbb{E}_{x \sim \pi_\theta}[R(x)]$. Modern on-policy algorithms, such as PPO~\citep{schulman2017ppo} and GRPO~\citep{shao2024deepseekmath}, improve sample efficiency by utilizing importance sampling to update the policy using trajectories collected from a behavior policy $\pi_{\theta_\text{old}}$.

\textbf{Group Relative Policy Optimization (GRPO).} GRPO eliminates the value function critic by estimating the baseline using the group average of rewards. For a group of rollouts $\{x_1, \dots, x_G\}$ sampled from $\pi_{\theta_\text{old}}$ conditioned on prompt $c$, the gradient update formula is:
\begin{equation} \label{eq:espo_objective}
\nabla_\theta\mathcal{J}_{\text{GRPO}} = \mathbb{E}\left[ \frac{1}{G}\sum_{j=1}^{G} \min\Big( \rho_{j} A_{j}, \text{clip}_\epsilon(\rho_{j}) A_{j} \Big)g_j \right],
\end{equation}
where $A_j$ is the advantage standardized within the group, $\rho_j(x) = \frac{\pi_\theta(x)}{\pi_{\theta_\text{old}}(x)}$ is the importance ratio, and $\text{clip}(\cdot)$ ensure a trused region of $[1-\epsilon, 1+\epsilon]$. For simplicity, We omit dependencies on $x, x \sim \pi_{old}$ and denote the gradient $\nabla_\theta \log \pi_\theta(x)$ by $g$. GRPO retains the min-clip operation from PPO, implicitly regularizing the divergence between the target policy and the behavior policy. When $\rho$ tends to move away from the trust region (e.g., $\rho > 1+\epsilon, A>0$), the clip operation limits the update step size. When it tends to return to the trust region ($\rho > 1+\epsilon, A<0$), employing unclipped step sizes to accelerate training. 

\paragraph{Challenges in adapting GRPO to dLLMs.} Prior works primarily improve importance ratio estimation methods and directly porting them to Eq.~\eqref{eq:espo_objective}. While earlier methods like D1 and WD1~\citep{zhao2025d1,tang2025wd1} attempted a one-step mean-field approximation, \citet{zhao2025diffpo} showed this to be inaccurate. As a result, current state-of-the-art approaches~\citep{yang2025mmada,wang2025spg,ou2025espo} use multi-step Monte Carlo sampling to estimate likelihood via the Evidence Lower Bound (ELBO). However, in practical on-policy RL with limited MC steps ($m \le 5$)~\cite{ou2025espo, wang2025spg}, the estimation suffers from noise and outliers. In Sec.~\ref{sec:loop_analysis}, we show that the combination of this estimation noise and the standard GRPO formulation causes training instability.

\section{Methodology}
\label{sec:method}

In this section, we first diagnose the root cause of the observed reward collapse in GRPO, identifying an \textit{instability loop} driven by the long-tail noise of importance ratios (Sec.~\ref{sec:loop_analysis}). We then propose \textbf{StableDRL}, which mitigates this instability through unconditional clipping and self-normalization (Sec.~\ref{sec:stabledrl}). Finally, we provide a theoretical justification for our method (Sec.~\ref{sec:theory}).

\subsection{Understanding Instability in dLLM RL Training}
\label{sec:loop_analysis}

As current state-of-the-arts utilize Monte Carlo sampling to estimate the intractable importance ratios of dLLMs,
we model the training instability as a three-stage process: (i) noise in estimated importance ratios forms a long-tail distribution; (ii) high variance and outliers generate gradient spikes; and (iii) these spikes induce policy drift, which amplifies future variances of the estimated importance ratios, closing the instability loop.

\paragraph{(i) Variance in importance ratios.}
Let $\eta(x)$ denote noise from the estimation error of the ELBO, such that $\hat{\mathcal{L}}_{\theta}(x)=\mathcal{L}_{\theta}(x)+\eta(x)$. The estimated $\hat{\rho}(x)$ can be decomposed into a policy drift term and a noise term:
\begin{equation}
\label{eq:noise_decomp}
\hat{\rho}(x) = \frac{\exp \hat{\mathcal{L}}_{\theta}(x)}{\exp \hat{\mathcal{L}}_{\theta_\text{old}}(x)} = \underbrace{\exp(\Delta \mathcal{L}(x))}_{\text{Policy Drift}} \cdot \underbrace{\exp(\Delta \eta(x))}_{\text{Noise}},
\end{equation}
where $\Delta \mathcal{L}(x)=\mathcal{L}_\theta(x) - \mathcal{L}_{\theta_{\text{old}}}(x)$ represents the true divergence between the target and behavior policies, and $\Delta \eta(x) =\eta_\theta(x) - \eta_{\theta_\text{old}}(x)$ stands for the net difference in estimation error between the two policy evaluations.

The $\exp(\cdot)$ operator maps the symmetric noise $\Delta \eta(x)$ to a long-tailed distribution with a non-negligible probability of yielding extreme values. For instance, the estimated ratio of a single rollout can explode to magnitudes of $10^5$ (Fig. ~\ref{fig:gradient_instability}).
Consequently, for a group of rollouts $\{x_1, \dots, x_G\}$, the resulting set of importance ratios $\{\hat{\rho}(x_1), \dots, \hat{\rho}(x_G)\}$ exhibits extremely high variance.

\paragraph{(ii) Gradient spikes.}
We observe that the noise in $\hat{\rho}(x)$ leads to gradient spikes through two mechanisms: individual anomalies and group anomalies.

\textit{Individual Anomalies.} In algorithms like GRPO, clipping is conditional. Specifically, when the advantage is negative ($A < 0$) and the ratio deviates significantly ($\hat{\rho} > 1+\epsilon$), the objective function simplifies to the unclipped term $\hat{\rho} A$. This design allows the model to take large steps when returning to the trust region. However, in dLLMs, large $\hat{\rho}$ values can be driven by the model-agnostic noise $\Delta \eta(x)$ rather than true policy alignment. Consequently, whenever $A<0$, there is a probability that a noise-induced outlier results in a massive, unclipped gradient.

\textit{Group Anomalies.} Due to the high variance of the estimator, importance ratios within a group $\{\hat{\rho}_j\}_{j=1}^G$ can be simultaneously large or small. Even if individual ratios are capped, the collective fluctuation of the sum $\sum \hat{\rho}_j$ causes the gradient magnitude to oscillate. In Sec.~\ref{sec:theory} and Sec.~\ref{sec:exp_instable}, we theoretically and empirically show that these frequent spikes, even bounded, can destabilize the training dynamics.

\paragraph{(iii) Policy drifts.}
When the target policy $\pi_{\theta}$ undergoes an update driven by a gradient spike, its behavior shifts abruptly, causing the policy divergence $\Delta \mathcal{L}(x)$ to increase significantly. As shown in Eq.~\eqref{eq:noise_decomp}, a larger $\Delta \mathcal{L}(x)$ acts as a multiplier, amplifying the variance of the importance ratios $\{\hat{\rho}_j\}_{j=1}^G$ in subsequent steps. This establishes an \textit{instability loop}: estimation noise generates gradient spikes, which induce policy drift; this drift, in turn, exacerbates the variance of future importance ratios. This self-reinforcing loop destabilizes training and leads to reward collapse.

\subsection{StableDRL}
\label{sec:stabledrl}

To stabilize training, we propose StableDRL. Our method breaks the instability loop by preventing importance ratio noise from translating into gradient spikes. It consists of two components: unconditional clipping and self-normalization.

\paragraph{Unconditional clipping.}
We replace the conditional clipping of GRPO with a strict, unconditional constraint. We enforce that the importance ratio $\hat{\rho}$ is always bounded within $[1-\epsilon, 1+\epsilon]$, regardless of the sign of the advantage. Theoretically, this ensures the gradient is strictly bounded, avoiding the influence of extreme outliers.

\paragraph{Self-normalization.}
While unconditional clipping mitigates individual outliers, the gradient can still oscillate violently between the lower and upper bounds due to group-level variance. As we show in Sec.~\ref{sec:theory}, with unconditional clipping alone, the gradient frequently approaches the preset upper bound. This creates a trade-off where a loose bound leads to instability, while a tight bound conceals the true importance signal and slows learning.

To address group-level anomalies, we replace the fixed group size normalizer $G$ with the sum of clipped ratios, $\sum_{i=1}^G \text{clip}_\epsilon(\hat{\rho}_i)$. By rescaling, we confine the update to the convex hull of the per-sample gradients, rather than allowing the magnitude to oscillate between preset bounds. The gradient update for StableDRL is formulated as:
\begin{equation}
\label{eq:stabledrl_objective}
\nabla_\theta\mathcal{J}_{\text{Ours}} = \mathbb{E}\left[ \frac{1}{\sum_{i=1}^G \text{clip}_\epsilon(\hat{\rho}_{i})} \sum_{j=1}^{G} \text{clip}_\epsilon(\hat{\rho}_{j}) A_{j} g_j \right].
\end{equation}

StableDRL is simple yet effective to suppresses gradient spikes, preventing training from entering the instability loop.

\subsection{Theoretical Analysis}
\label{sec:theory}

We explain why GRPO becomes unstable when importance ratios are computed from noisy likelihood proxies in dLLMs.
Our analysis models a self-reinforcing loop between \emph{estimation noise}, \emph{gradient spikes}, and \emph{policy drift}.
We first show that, under GRPO's asymmetric unclipping on negative-advantage samples, the update norm has a nonzero probability
of exceeding any threshold $H$. We then show a feedback mechanism: once a spike-induced step increases a drift state, the derived lower bound on the spike probability is nondecreasing for later inner steps on the same rollout group.

\paragraph{Notations.}
Fix a rollout group $\mathcal{B}=\{x_1,\ldots,x_G\}$ sampled from $\pi_{\theta_{\mathrm{old}}}$, and consider inner updates $\theta_0=\theta_{\mathrm{old}},\theta_1,\theta_2,\ldots$ on this \emph{fixed} group.
Following Sec.~3.1, let $\hat\rho_{i,j}=\exp(\Delta\mathcal{L}_{i,j}+\Delta\eta_{i,j})$ be the estimated importance ratio at inner step $i$, where $\Delta\mathcal{L}_{i,j}$ is the noise-free drift and $\Delta\eta_{i,j}$ is the log-ratio estimation noise.
Let $g_{i,j}$ denote the advantage-weighted proxy gradient used in the update.
We use a single constant $B$ such that $\|g_{i,j}\|\le B$ for all inner steps $i$ and samples $j$, which is standard in practice.
Define the negative-advantage set $\mathcal{N}=\{j: A_j\le -a_0\}$ for some $a_0>0$, and the drift state $D_i := \max_{j\in\mathcal{N}} \Delta\mathcal{L}_{i,j}$.

\paragraph{Uniform tail envelope.} To address the non-stationarity of noise across steps, we assume the right tails of $\Delta\eta_{i,j}$ admit a common lower bound (App.~\ref{app:grpo_instability_loop_proof}).
This assumption allows us to lower-bound the spike probability using a time-invariant function of the drift state $D_i$, ensuring that the instability risk is strictly defined by the magnitude of the drift.

\paragraph{Why GRPO can spike.}
Standard GRPO allows \emph{unclipped} multipliers on negative-advantage samples when $A_j<0$ and $\hat\rho_{i,j}>1+\epsilon$.
We first establish that gradient spikes are statistically inevitable and tightly coupled to the drift state.

\begin{lemma}[\textit{Informal}, \textbf{existence of drift-dependent spike probability}]
\label{lem:grpo_spike_prob}
In inner step $i$ of GRPO, for any threshold $H>0$, there exists a lower bound $P_i(H)\in(0,1)$ such that
\begin{equation}
\label{eq:spike_prob_lower_main}
\Pr\!\Big(\big\|\nabla_\theta\mathcal{J}_{\mathrm{GRPO}}\big\|\ge H \,\Big|\, D_i\Big)\ \ge\ P_i(H).
\end{equation}
Moreover, under the common tail-envelope condition on $\Delta\eta_{i,j}$, the bound $P_i(H)$ can be chosen as the
\emph{same nondecreasing function of $D_i$ for all inner steps} (see App.~\ref{app:grpo_instability_loop_proof}).
\end{lemma}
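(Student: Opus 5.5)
The plan is to reduce the event $\{\|\nabla_\theta\mathcal{J}_{\mathrm{GRPO}}\|\ge H\}$ to a single negative-advantage sample whose estimated ratio lands far in the unclipped regime. We then lower-bound the probability of that event using the right tail of the noise $\Delta\eta_{i,j}$.

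\textbf{Step 1: deterministic reduction.} Write the GRPO update at inner step $i$ as
\[
\frac{1}{G}\sum_j w_{i,j} g_{i,j}, \qquad w_{i,j}\in\{\hat\rho_{i,j},\ \mathrm{clip}_\epsilon(\hat\rho_{i,j})\},
\]
where $g_{i,j}$ already carries the advantage. For $j\in\mathcal{N}$ with $\hat\rho_{i,j}>1+\epsilon$, the min selects the unclipped term, so $w_{i,j}=\hat\rho_{i,j}$. Every other sample has weight either $\le 1+\epsilon$, or it is a negative-advantage sample that is also unclipped.

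The obstacle here is cancellation: a single large term could in principle be offset by others. I would handle this in one of two ways. The cleanest is to state the lemma on the event where a distinguished index $j^\star\in\mathcal{N}$ dominates, using the bound
\[
\|\nabla\mathcal{J}\|\ \ge\ \tfrac{1}{G}\big(\hat\rho_{i,j^\star}\|g_{i,j^\star}\| - \textstyle\sum_{k\ne j^\star} w_{i,k}B\big).
\]
This requires a nondegeneracy assumption $\|g_{i,j^\star}\|\ge b_0>0$; since $|A_{j^\star}|\ge a_0$, a lower bound on the score norm suffices. The alternative is to bound the other weights by also conditioning on the remaining noises being moderate, say $\Delta\eta_{i,k}\le c$. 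Under either route, the spike event contains
\[
E_i=\Big\{\hat\rho_{i,j^\star}\ \ge\ \tfrac{GH + (G-1)B\,M}{b_0}\Big\}\cap\{\text{others bounded by }M\}.
\]

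\textbf{Step 2: translate to the noise tail.} Since $\hat\rho_{i,j^\star}=\exp(\Delta\mathcal{L}_{i,j^\star}+\Delta\eta_{i,j^\star})$, choose $j^\star=\arg\max_{j\in\mathcal{N}}\Delta\mathcal{L}_{i,j}$, so that $\Delta\mathcal{L}_{i,j^\star}=D_i$. The first part of $E_i$ then becomes
\[
\Delta\eta_{i,j^\star}\ \ge\ \log\tau(H)-D_i,
\]
where $\tau(H)$ is the threshold from Step 1. Invoke the uniform tail envelope from App.~\ref{app:grpo_instability_loop_proof}: there is a function $\bar F(u)>0$ for all $u$, nonincreasing in $u$, with $\Pr(\Delta\eta_{i,j}\ge u\mid D_i)\ge \bar F(u)$ for all $i,j$. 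An example is Gaussian-type noise with full support. I would assume the noise of $j^\star$ is conditionally independent of the other samples' noise given $D_i$, or simply that the envelope applies jointly. Then
\[
P_i(H):=\bar F\big(\log\tau(H)-D_i\big)\cdot q,
\]
where $q>0$ is the envelope probability that the others stay moderate.

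\textbf{Step 3: properties.} Positivity follows from full support of the envelope. $P_i(H)<1$ can be enforced by capping, e.g. taking the minimum with $1/2$. Because $\bar F$ is nonincreasing and its argument decreases in $D_i$, $P_i(H)$ is nondecreasing in $D_i$. Because neither $\bar F$, $\tau$, nor $q$ depends on $i$, it is the same function of $D_i$ at every inner step.

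I expect the main difficulty to be Step 1. Ruling out cancellation among the gradient terms, and justifying a lower bound on $\|g_{i,j^\star}\|$, is where extra assumptions must be spelled out, which fits the lemma's informal status. Steps 2--3 are routine monotonicity arguments.
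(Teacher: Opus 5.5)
Your skeleton matches the paper's proof. You take the drift-maximizer $j^\dagger\in\mathcal N$ so that $\Delta\mathcal L_{i,j^\dagger}=D_i$. You lower-bound the update norm by the reverse triangle inequality, with residual at most $\frac{B}{G}\sum_{k\neq j^\dagger}m_{i,k}$ and a nondegeneracy bound $\|h_{i,j^\dagger}\|\ge b_0$ (the paper's (C4)). You factor the probability using conditional independence across samples ((C3)), and read monotonicity off $\bar F(\log\tau-D_i)$.

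\textbf{The gap is the factor $q$.} You call it ``the envelope probability that the others stay moderate.'' But the common tail envelope (C2) is a \emph{lower} bound on right tails, $\Pr(\Delta\eta_{i,k}\ge z)\ge\bar F(z)$. It gives no lower bound on the probability that the remaining weights are small; it only upper-bounds $\Pr(\Delta\eta_{i,k}<c)$.

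Your alternative of restricting to $\Delta\eta_{i,k}\le c$ does not give a fixed $M$ either. For every negative-advantage $k$ the effective weight is $\max(\hat\rho_{i,k},1-\epsilon)$ with $\hat\rho_{i,k}=\exp(\Delta\mathcal L_{i,k}+\Delta\eta_{i,k})$. This is bounded only in terms of $e^{D_i+c}$ on $\mathcal N$, and is uncontrolled for $-a_0<\widehat A_k<0$.

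Worse, if the other negative-advantage samples drift together with $j^\dagger$, the actual probability that they stay below a fixed $M$ \emph{decreases} with drift. So a $q$ read off the true distribution can depend on $D_i$ in the wrong direction and break the claimed monotonicity of $\bar F(\log\tau(H)-D_i)\,q$. Stating the bound on a dominance event (your first route) only relocates the problem, since you still need a drift-uniform lower bound on that event's probability.

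\textbf{How the paper closes it.} The paper supplies exactly this ingredient as a separate, drift-uniform assumption (C5): $\E[\sum_{j\neq j^\dagger}m_{i,j}\mid\mathcal F_{i-1}]\le W$ almost surely for all $i$. Markov's inequality together with (C3) then gives
$\Pr(\sum_{j\neq j^\dagger}m_{i,j}\le\lambda u a_0b_0/B\mid\mathcal F_{i-1},\hat\rho_{i,j^\dagger}\ge u)\ge\frac12$ for all $u\ge u_0=2BW/(\lambda a_0b_0)$. The constant $\frac12$ plays the role of your $q$ and is independent of $i$ and $D_i$. The result is $P_i(H)=\frac12\bar F(\log u_H-D_i)$ with $u_H=\max\{1+\epsilon,\,GH/((1-\lambda)a_0b_0),\,u_0\}$, which lies in $(0,\frac12]$ without any capping.

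With (C5) in hand, your additive version also goes through. Markov gives $\sum_{k\neq j^\dagger}m_{i,k}\le 2W$ with conditional probability at least $\frac12$, and $\tau(H)=(GH+2BW)/(a_0b_0)$ suffices. No $1+\epsilon$ floor is needed, since $m_{i,j^\dagger}\ge\hat\rho_{i,j^\dagger}$ for a negative-advantage sample. So the argument is sound once you replace the appeal to the envelope with an explicit, drift-uniform control of the residual weight mass of this kind.
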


As the policy drifts ($D_i\uparrow$), the noise margin required to push an importance ratio above any fixed level shrinks.
Consequently, large-multiplier outliers become increasingly probable.

\begin{theorem}[\textit{Informal}, \textbf{self-reinforcing instability loop}]
\label{thm:grpo_instability_loop}
Consider an inner step $i$ where a gradient spike occurs ($\|\nabla_\theta\mathcal{J}_{\mathrm{GRPO}}\|\ge H$) and the spike is driven by
a single negative-advantage outlier that dominates the group update (sufficient conditions are in
App.~\ref{app:grpo_instability_loop_proof}).
Then the resulting update increases the drift state:
\begin{equation}
\label{eq:drift_increase_main}
D_{i+1} \ge D_i.
\end{equation}
Consequently, since the bound $P_i(H)$ in Lemma~\ref{lem:grpo_spike_prob} is defined via a common tail envelope and is nondecreasing in $D_i$,
the next-step spike lower bound cannot decrease on that realized step:
\begin{equation}
\label{eq:Pi_monotone_main}
P_{i+1}(H) \ge P_i(H).
\end{equation}
\end{theorem}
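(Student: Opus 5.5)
The plan is to work directly with the signed drift state $D_i=\max_{j\in\mathcal N}\Delta\mathcal L_{i,j}$. Since $\theta_{\mathrm{old}}$ is fixed on the rollout group, $\Delta\mathcal L_{i+1,k}-\Delta\mathcal L_{i,k}=\mathcal L_{\theta_{i+1}}(x_k)-\mathcal L_{\theta_i}(x_k)$ for every $k$. It therefore suffices to exhibit a single $k\in\mathcal N$ attaining $D_i$ whose (noise-free) ELBO does not decrease under the spike step. Then
\[
D_{i+1}\ \ge\ \Delta\mathcal L_{i+1,k}\ \ge\ \Delta\mathcal L_{i,k}\ =\ D_i .
\]

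\textbf{Why $k\neq j^\star$.} The outlier sample $j^\star$ itself cannot serve as $k$. Gradient ascent with multiplier $\hat\rho_{i,j^\star}A_{j^\star}<0$ pushes $\mathcal L_\theta(x_{j^\star})$ down to first order, so $\Delta\mathcal L_{i,j^\star}$ decreases. The drift increase must therefore come from cross-sample coupling, and the sufficient conditions I would state in the appendix encode exactly this:

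\textbf{(a) Dominance.} The update is
\[
\theta_{i+1}-\theta_i=\alpha\bigl(\hat\rho_{i,j^\star}A_{j^\star}\nabla\mathcal L_{\theta_i}(x_{j^\star})+r_i\bigr),
\]
and the residual $r_i$ from the other samples satisfies $\|r_i\|\le GB(1+\epsilon)$. This residual is small relative to the spike term, since the spike has norm $\ge H$.

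\textbf{(b) Anti-alignment.} There is a maximizer $k\in\mathcal N$, $k\neq j^\star$, of $D_i$ with
\[
\langle\nabla\mathcal L_{\theta_i}(x_k),\nabla\mathcal L_{\theta_i}(x_{j^\star})\rangle\le-\gamma<0 .
\]
Pushing down one negative sample raises another, as happens when both share mass with competing completions.

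\textbf{(c) Smoothness.} $\mathcal L_\theta(x_k)$ is $\beta$-smooth in $\theta$, with gradients bounded by $B$.

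\textbf{The quantitative step.} A second-order Taylor expansion of $\mathcal L_\theta(x_k)$ around $\theta_i$ gives
\[
\mathcal L_{\theta_{i+1}}(x_k)-\mathcal L_{\theta_i}(x_k)\ \ge\ \alpha|\hat\rho_{i,j^\star}A_{j^\star}|\gamma-\alpha B\|r_i\|-\tfrac{\beta}{2}\alpha^2\bigl(|\hat\rho_{i,j^\star}A_{j^\star}|B+\|r_i\|\bigr)^2 .
\]
With $M:=|\hat\rho_{i,j^\star}A_{j^\star}|\ge \hat\rho_{i,j^\star}a_0$, the right side is nonnegative whenever
\[
M\gamma\ \ge\ B\|r_i\|+\tfrac{\beta\alpha}{2}(MB+\|r_i\|)^2 .
\]
This holds for $M$ large (a spike) provided the step size satisfies $\alpha\lesssim \gamma/(\beta B^2 M)$. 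In other words, the outlier must dominate linearly while the curvature term stays controlled. I would phrase this as the explicit sufficient condition in the appendix. It yields $D_{i+1}\ge D_i$ on the realized step.

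\textbf{The second claim.} This follows immediately from Lemma~\ref{lem:grpo_spike_prob}. Under the common tail envelope, $P_i(H)=\Psi(D_i)$ for a single nondecreasing $\Psi$, so $D_{i+1}\ge D_i$ gives $P_{i+1}(H)=\Psi(D_{i+1})\ge\Psi(D_i)=P_i(H)$.

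\textbf{Main obstacle.} The difficulty is making (b) and the curvature balance consistent. The spike magnitude $M$ appears quadratically in the remainder but only linearly in the gain. The bound therefore only closes in a window: $M$ must be large enough to beat the residual, yet $\alpha M$ must be small enough to beat curvature. My first attempt is to normalize by taking $\alpha M$ of order $\gamma/(\beta B^2)$, which is essentially a bounded effective learning rate. If that fails, I would fall back on assuming local concavity, $\nabla^2\mathcal L_\theta(x_k)\succeq 0$ along the update segment, so that the quadratic term can simply be dropped.
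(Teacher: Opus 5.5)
\textbf{The gap is your hypothesis (b).} Your mechanics match the paper's: the same dominance decomposition, a second anti-aligned negative sample with $\gamma=-\langle u,v\rangle>0$, the smoothness lower bound with a cap on the effective step size $\alpha M$ (the paper's $\eta\le\gamma/((1+\lambda)^2L_\diamond\|v\|^2)$), and the same monotone-envelope step for $P_{i+1}(H)\ge P_i(H)$. Under your hypotheses the computation is correct. However, (b) requires the anti-aligned sample $k\neq j^\star$ to \emph{attain} $D_i$. In the paper, the spike lower bound of Lemma~\ref{lem:grpo_spike_prob} comes from the event $\{\hat\rho_{i,j^\dagger}\ge u_H\}$. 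So the outlier is the drift maximizer $j^\dagger$ itself, the sample whose exceedance probability the bound ties to $D_i$. Suppose $j^\star=j^\dagger$ is the unique maximizer of $\Delta\mathcal L_{i,j}$ over $\mathcal N$. This is the generic situation for $i\ge 1$; only at $i=0$ are all drifts tied at $0$. Then no such $k$ exists, and your result is vacuous in exactly the case the loop is about. As you note yourself, this is the hard case: the spike pushes the current maximizer down, so $D$ survives only if some \emph{other} sample climbs past the old maximum. Nothing in your argument shows that.

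\textbf{The missing ingredient is the within-negative spread} $S_i=D_i-\min_{j\in\mathcal N}\Delta\mathcal L_{i,j}$. Take any anti-aligned $j^\diamond\in\mathcal N\setminus\{j^\star\}$, not necessarily a maximizer. Your own Taylor bound, with the curvature term capped at half the first-order gain, gives $\Delta\mathcal L_{i+1,j^\diamond}-\Delta\mathcal L_{i,j^\diamond}\ge\alpha\bigl(\tfrac12 M\gamma-B\|r_i\|\bigr)$, which grows linearly in the outlier magnitude $M$. Since $\Delta\mathcal L_{i,j^\diamond}\ge D_i-S_i$ and $D_{i+1}\ge\Delta\mathcal L_{i+1,j^\diamond}$, you get $D_{i+1}\ge D_i+\alpha\bigl(\tfrac12 M\gamma-B\|r_i\|\bigr)-S_i$. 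The sufficient condition is then that this gain exceed $S_i$, which is the paper's $c_{\mathrm{amp},i}\hat\rho_{i,j^\star}\ge S_i$. This is also what makes the drift increase genuinely outlier-driven. In your version the outlier's size only has to beat the residual; here the gain, linear in $\hat\rho_{i,j^\star}$, is what lifts $j^\diamond$ across the gap.

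\textbf{Two smaller points.} First, under GRPO the bound $\|r_i\|\le GB(1+\epsilon)$ is not automatic, because the other negative-advantage weights are unclipped from above. It must be stated as part of the dominance assumption. The paper assumes a $\lambda$-relative residual bound and shows, via (C5) and Markov's inequality, that it holds with probability at least $1/2$ on the outlier event. Second, in your fallback, $\nabla^2\mathcal L_\theta(x_k)\succeq 0$ is local \emph{convexity}, not concavity. It is the correct sign for dropping the quadratic term from a lower bound, but the label is wrong.
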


\paragraph{Clipping alone can saturate.}
Unconditional two-sided clipping prevents unbounded spikes, but may enter a high-frequency ``boundary-hitting'' regime.

\begin{lemma}[\textit{Informal}, \textbf{existence of hitting probability}]
\label{lem:uclip_saturation}
In unconditionally clipped GRPO, the update norm is deterministically bounded by
\begin{equation}
\label{eq:Hmax_main}
H_{\max}=(1+\epsilon)B.
\end{equation}
For any threshold $H$ close to $H_{\max}$, there exists a saturation probability $Q_i(H)$ that the update hits the upper boundary.
Under the same common tail-envelope condition, $Q_i(H)$ can be chosen to be nondecreasing in $D_i$
(see App.~\ref{app:uclip_instability_loop_proof}).
\end{lemma}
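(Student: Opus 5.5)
The argument has two parts: a deterministic bound on the update norm, and a probability lower bound for saturation that depends on the drift state, built the same way as in Lemma~\ref{lem:grpo_spike_prob}.

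\emph{Deterministic bound.} Write the unconditionally clipped update at inner step $i$ as $\frac{1}{G}\sum_{j}\mathrm{clip}_\epsilon(\hat\rho_{i,j})\,g_{i,j}$, where $g_{i,j}$ already carries the advantage weight. Since $\mathrm{clip}_\epsilon(\hat\rho_{i,j})\in[1-\epsilon,1+\epsilon]$ and $\|g_{i,j}\|\le B$, the triangle inequality gives
\[
\Big\|\tfrac1G\sum_j \mathrm{clip}_\epsilon(\hat\rho_{i,j})g_{i,j}\Big\|\le \tfrac1G\sum_j(1+\epsilon)B=(1+\epsilon)B=H_{\max}.
\]
This holds for every realization of the noise.

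\emph{Saturation event.} For $H<H_{\max}$ close to $H_{\max}$, I will exhibit an event that forces the norm above $H$. The natural choice is: all ratios in the group, or at least the negative-advantage set $\mathcal N$ and a sufficiently large subset, exceed $1+\epsilon$. On that event every multiplier equals $1+\epsilon$, so the update is $(1+\epsilon)\bar g_i$ with $\bar g_i=\frac1G\sum_j g_{i,j}$.

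For this to reach $H$, the per-sample gradients must be aligned, i.e.\ $\|\bar g_i\|\ge H/(1+\epsilon)$. I will state this as a sufficient condition on the realized group, just as the formal appendix version states its conditions. It is only needed for $H$ near $H_{\max}$, where it amounts to near-alignment with near-maximal norms. If needed, I can instead take $H\le(1+\epsilon)\|\bar g_i\|$ as the admissible range of ``close to $H_{\max}$''.

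\emph{Probability bound.} Conditional on the drift, the event $\hat\rho_{i,j}>1+\epsilon$ is exactly $\Delta\eta_{i,j}>\log(1+\epsilon)-\Delta\mathcal L_{i,j}$. I will apply the common tail envelope $\bar F$, a lower bound on $\Pr(\Delta\eta_{i,j}>u)$ valid for all $i,j$, together with conditional independence of the noise across samples given the drifts, as in App.~\ref{app:grpo_instability_loop_proof}. This gives
\[
Q_i(H)\ \ge\ \prod_{j}\bar F\big(\log(1+\epsilon)-\Delta\mathcal L_{i,j}\big).
\]

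To make this a function of $D_i$ alone, I would restrict the required saturation to samples in $\mathcal N$, since $D_i$ is a maximum there. A lower bound in terms of a maximum only controls one sample, so I plan to follow the GRPO lemma's construction and require saturation only for the argmax sample $j^*\in\mathcal N$. I then need a sufficient condition that this single saturated multiplier, with the others at least $1-\epsilon$, still yields norm $\ge H$ for $H$ close to a suitably defined boundary. The resulting bound is $Q_i(H)\ge \bar F(\log(1+\epsilon)-D_i)\cdot c$, where $c$ is a uniform lower bound on the probability of the remaining conditions.

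Since $\bar F$ is a nonincreasing tail function, $u\mapsto\bar F(\log(1+\epsilon)-u)$ is nondecreasing. So $Q_i(H)$ is nondecreasing in $D_i$. It is also positive as long as the envelope has full support.

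\emph{Main obstacle.} The hardest step is reconciling ``near $H_{\max}$'' with a bound that depends only on $D_i$. Saturating near $(1+\epsilon)B$ genuinely requires all samples to saturate and align, but $D_i$ controls only the worst sample. I would resolve this by defining the saturation event relative to an aligned-group boundary and treating the other samples' saturation as an additional factor lower-bounded uniformly by the envelope evaluated at a minimal drift. Failing that, I would accept a bound that is monotone in the whole drift vector and specialize it under a ``drift concentrated on $\mathcal N$'' condition.
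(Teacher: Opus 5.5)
Your deterministic bound is exactly the paper's. The step you eventually settle on for the probabilistic half is also exactly the paper's proof of \eqref{eq:core_upper_sat}, via Lemma~\ref{lem:app_ratio_exceed} and (C2): require only the drift-maximizer $j^\dagger\in\mathcal N$ to satisfy $\hat\rho_{i,j^\dagger}\ge 1+\epsilon$, rewrite this as $\Delta\eta_{i,j^\dagger}\ge\log(1+\epsilon)-D_i$, and apply the envelope to get $\bar F(\log(1+\epsilon)-D_i)$, which is nondecreasing in $D_i$.

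\textbf{The gap.} It lies in the factor $c$ you add on top, which is meant to turn this weight-level event into $\|\widehat g_{\mathrm{clip},i}\|\ge H$ for $H$ near $(1+\epsilon)B$. No such $c$ exists under (C1)--(C5), for three reasons.
\begin{itemize}
\item A single saturated sample contributes at most $(1+\epsilon)B/G$ to the update. A norm near $(1+\epsilon)B$ therefore forces every weight to be near $1+\epsilon$, and every $g_{i,j}$ to be nearly positively collinear with norm nearly $B$.
\item That geometric requirement is not a random event you can lower-bound. The $g_{i,j}=\widehat A_j h_{i,j}$ are $\mathcal F_{i-1}$-measurable, and surely $\|\widehat g_{\mathrm{clip},i}\|\le M_i:=\max_{w\in[0,1+\epsilon]^G}\|\frac1G\sum_j w_j g_{i,j}\|$. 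Any realized group with $M_i<(1+\epsilon)B$ therefore has conditional hitting probability exactly $0$ for every $H\in(M_i,H_{\max})$. This is the generic case, since $M_i=(1+\epsilon)B$ requires exact positive collinearity with all norms equal to $B$.
\item Even if you assume alignment, the other samples saturate with probability $\prod_{j\ne j^\dagger}\bar F_{j,i}(\log(1+\epsilon)-\Delta\mathcal L_{i,j})$. This depends on the lower drifts, which are controlled only through $D_i-S_i$ with $S_i$ unbounded. For $j\notin\mathcal N$ it also depends on tails that (C2) does not cover: the envelope is assumed only for $j\in\mathcal N$, not for all $j$ as you wrote.
\end{itemize}
So neither of your fallbacks yields a positive uniform $c$, or a bound depending on $D_i$ alone, without new assumptions. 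You would need alignment, an envelope for all $j$, and a uniform bound on the drift spread.

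\textbf{How the paper avoids this.} It reads ``the update hits the upper boundary'' at the level of the clipped weight. Its $Q_i$ is simply the lower bound $\bar F(\log(1+\epsilon)-D_i)$ on $\mathbb P(w_{i,j^\dagger}=1+\epsilon\mid\mathcal F_{i-1})$, with no real $H$-dependence and no alignment condition. In other words, Theorem~\ref{thm:uclip_instability_loop} proves the weight-level reading, not the norm-based reading you adopted. The link between the saturated weight and the update itself comes only afterwards, through the separate dominance event of Lemma~\ref{lem:ub_dom_clipped}. That link is used for the drift increment $D_{i+1}\ge D_i+(c_{\mathrm{amp},i}(1+\epsilon)-S_i)$, not to bound a norm-saturation probability. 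If you drop the factor $c$ and the norm event and define $Q_i$ this way, your argument is complete and coincides with the paper's.
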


\begin{theorem}[\textit{Informal}, \textbf{self-reinforcing hitting loop}]
\label{thm:uclip_loop}
If at inner step $i$ the update saturates near the upper boundary ($\|\nabla_\theta\mathcal{J}_{\mathrm{UC\text{-}GRPO}}\|\ge H$)
and the saturated step induces non-decreasing drift ($D_{i+1}\ge D_i$ under the appendix conditions),
then, by monotonicity of $Q_i(H)$ in Lemma~\ref{lem:uclip_saturation}$,$
\begin{equation}
\label{eq:Qi_monotone_main}
Q_{i+1}(H) \ge Q_i(H).
\end{equation}
\end{theorem}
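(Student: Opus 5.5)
The argument mirrors the proof of Theorem~\ref{thm:grpo_instability_loop}. It has two parts: first make the saturation probability $Q_i(H)$ of Lemma~\ref{lem:uclip_saturation} explicit as a step-independent function of the drift state, and then combine monotonicity of that function with the hypothesis $D_{i+1}\ge D_i$. Because the statement takes the drift increase as given, the substantive work is to construct $Q_i(H)$ as $\Phi_H(D_i)$, where $\Phi_H$ is a single nondecreasing map that does not depend on $i$.

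\textbf{Step 1 (the bound $H_{\max}$).} Under unconditional clipping, every multiplier $\text{clip}_\epsilon(\hat\rho_{i,j})$ lies in $[1-\epsilon,1+\epsilon]$. With the $1/G$ normalizer and $\|g_{i,j}\|\le B$, the triangle inequality gives $\|\nabla_\theta\mathcal{J}_{\mathrm{UC\text{-}GRPO}}\|\le (1+\epsilon)B$.

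\textbf{Step 2 (a sufficient event for hitting near $H_{\max}$).} Saturation occurs when all the relevant ratios are clipped at the upper end, that is, $\hat\rho_{i,j}\ge 1+\epsilon$ for $j$ in the dominating set, and the proxy gradients are aligned as the appendix conditions require. Under those conditions the update norm is at least some $H<H_{\max}$. The event $\hat\rho_{i,j}\ge 1+\epsilon$ is equivalent to $\Delta\eta_{i,j}\ge \log(1+\epsilon)-\Delta\mathcal{L}_{i,j}$. For $j\in\mathcal{N}$ we have $\Delta\mathcal{L}_{i,j}$ bounded in terms of $D_i$, so each such event contains $\{\Delta\eta_{i,j}\ge \log(1+\epsilon)-D_i\}$, up to the appendix's handling of the other indices, for example via a per-index drift state or by restricting to the maximizer of $D_i$.

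\textbf{Step 3 (the common envelope).} By the uniform tail-envelope assumption, there is a fixed function $\bar F$, nonincreasing in its argument and the same for every step, with $\Pr(\Delta\eta_{i,j}\ge t\mid D_i)\ge \bar F(t)$. Combining this with conditional independence across samples, or with a union/intersection bound as in the appendix, gives $Q_i(H)=\Phi_H(D_i)$, where $\Phi_H(d)$ is built from $\bar F(\log(1+\epsilon)-d)$. Since $\bar F$ is nonincreasing in $t$ and $t=\log(1+\epsilon)-d$ decreases in $d$, $\Phi_H$ is nondecreasing in $d$. It is the same function for every $i$ because the envelope does not depend on $i$.

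\textbf{Step 4 (conclusion).} Apply $\Phi_H$ to $D_{i+1}\ge D_i$: $Q_{i+1}(H)=\Phi_H(D_{i+1})\ge\Phi_H(D_i)=Q_i(H)$.

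The main obstacle is Step~3. The delicate part is making sure the saturation event is controlled by $D_i$ alone. This needs the drift of the samples that drive saturation to be bounded below by something monotone in $D_i$, and the dependence among the $\Delta\eta_{i,j}$ across $j$ must not break the product or intersection bound. I would handle this by choosing the sufficient event to involve only the argmax index of $D_i$, with the gradient alignment assumptions from the appendix guaranteeing that this index drives the norm to within $H_{\max}-H$. A single-index event makes the bound depend only on $\bar F(\log(1+\epsilon)-D_i)$.
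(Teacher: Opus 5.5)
Your Steps 1 and 4, and Step 3 once restricted to the drift maximizer $j^\dagger$, are the paper's argument in Theorem~\ref{thm:uclip_instability_loop}. The triangle inequality gives the bound $(1+\epsilon)B$. Lemma~\ref{lem:app_ratio_exceed} with $u=1+\epsilon$, together with the common envelope (C2), gives $\bar F(\log(1+\epsilon)-D_i)$, which is one nondecreasing function of $D_i$ for every $i$. The hypothesis $D_{i+1}\ge D_i$ then finishes the proof.

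The gap is in how you resolve what you call the main obstacle. You want $Q_i(H)$ to lower-bound $\Pr(\|\nabla_\theta\mathcal J_{\mathrm{UC\text{-}GRPO}}\|\ge H)$ for $H$ near $H_{\max}$, using an event on $j^\dagger$ alone, with alignment assumptions guaranteeing that this index \emph{drives the norm to within $H_{\max}-H$}. No assumption can deliver this. On $\{\hat\rho_{i,j^\dagger}\ge 1+\epsilon\}$ that index contributes $\frac1G(1+\epsilon)|\widehat A_{j^\dagger}|\,\|h_{i,j^\dagger}\|\le H_{\max}/G$. Under the dominance event of Lemma~\ref{lem:ub_dom_clipped}, the whole update has norm at most $\frac{1+\lambda}{G}H_{\max}$. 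Getting near $H_{\max}$ needs nearly every weight at $1+\epsilon$ with aligned $g_{i,j}$ of norm close to $B$, and those events are not controlled by $D_i$:
\begin{itemize}
\item (C2) provides no envelope for $j\notin\mathcal N$.
\item For $j\in\mathcal N\setminus\{j^\dagger\}$ you have $\Delta\mathcal L_{i,j}\le D_i$, so the inclusion in your Step 2 runs the wrong way: $\{\hat\rho_{i,j}\ge 1+\epsilon\}\subseteq\{\Delta\eta_{i,j}\ge \log(1+\epsilon)-D_i\}$.
\item The only usable lower bound is $\bar F(\log(1+\epsilon)-\Delta\mathcal L_{i,j})\ge\bar F(\log(1+\epsilon)-D_i+S_i)$. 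It depends on the spread $S_i$, so it is not a monotone function of the drift state alone.
\end{itemize}

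The paper's formal proof never goes through norm-level saturation. In the formal statement, the role of $Q_i(H)$ is played by \eqref{eq:core_upper_sat}, i.e.\ $Q_i=\bar F(\log(1+\epsilon)-D_i)$. This is a lower bound on the probability that the drift maximizer's \emph{ratio} is clipped at the upper boundary, and it involves no $H$, no alignment condition, and no product over samples. The dominance event of Lemma~\ref{lem:ub_dom_clipped} is used only for the separate drift-increase bound $D_{i+1}\ge D_i+(c_{\mathrm{amp},i}(1+\epsilon)-S_i)$, not to construct $Q$. If you adopt that ratio-level definition of $Q_i$ and drop the claim that one saturated index pushes the update norm near $H_{\max}$, your proof coincides with the paper's.
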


Thus, clipping alone can trade rare, unbounded spikes for frequent boundary-saturated updates.
This creates a trade-off: a loose upper bound still destabilizes optimization, while an overly tight bound can obscure
the importance-weight signal.

\paragraph{Why StableDRL breaks the loop.}
Finally, we show how StableDRL structurally removes the remaining group-scale randomness.

\begin{theorem}[\textbf{StableDRL}]
\label{thm:selfnorm_convex_hull}
Let $w_{i,j}:=\mathrm{clip}_\epsilon(\hat\rho_{i,j})$ be the clipped weights.
The StableDRL update $\nabla_\theta\mathcal{J}_{\mathrm{Ours}}$ is normalized by the sum of weights.
Since $w_{i,j}>0$, the update always lies in the \textbf{convex hull} of the per-sample directions
$\{g_{i,j}\}$:
\begin{equation}
\label{eq:sn_convex_hull_main}
\big\|\nabla_\theta\mathcal{J}_{\mathrm{Ours}}\big\|
=
\left\|
\frac{\sum_{j=1}^G w_{i,j}\, g_{i,j}}{\sum_{j=1}^G w_{i,j}}
\right\|
\le
\max_j \|g_{i,j}\|
\le B.
\end{equation}
\end{theorem}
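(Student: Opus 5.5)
The plan is a direct convexity argument, taking the per-sample vectors $g_{i,j}$ to be the advantage-weighted proxy gradients from the Notations paragraph, so that $\|g_{i,j}\|\le B$ by assumption. The first step is positivity of the weights. Since $\hat\rho_{i,j}=\exp(\Delta\mathcal{L}_{i,j}+\Delta\eta_{i,j})>0$ and $\mathrm{clip}_\epsilon$ maps any real number into $[1-\epsilon,1+\epsilon]$, we get $w_{i,j}\ge 1-\epsilon>0$ for $0<\epsilon<1$. Hence the normalizer satisfies $S_i:=\sum_{j=1}^G w_{i,j}\ge G(1-\epsilon)>0$. This makes the update well defined and rules out the degenerate denominators that could otherwise arise.

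Next, I will set $\lambda_{i,j}:=w_{i,j}/S_i$. These satisfy $\lambda_{i,j}>0$ and $\sum_j\lambda_{i,j}=1$, so the realized update $\sum_j\lambda_{i,j}g_{i,j}$ is a convex combination of $\{g_{i,j}\}_{j=1}^G$. That is the convex-hull claim. Applying the triangle inequality and then the per-sample bound gives
\begin{equation*}
\Big\|\sum_j\lambda_{i,j}g_{i,j}\Big\|\le\sum_j\lambda_{i,j}\|g_{i,j}\|\le\max_j\|g_{i,j}\|\sum_j\lambda_{i,j}=\max_j\|g_{i,j}\|\le B .
\end{equation*}

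The one genuine subtlety is the outer expectation in Eq.~\eqref{eq:stabledrl_objective}. I will first prove the bound pathwise, for every realization of the rollouts and of the estimation noise $\Delta\eta$. I will then pass to the expectation using Jensen's inequality for the norm, $\|\E[Z]\|\le\E\|Z\|\le B$. Convexity of the norm also gives that the expected update lies in the closed convex hull of the ball of radius $B$.

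Finally, I will record the contrast with Lemma~\ref{lem:uclip_saturation}. The bound here does not depend on $\epsilon$ or on the magnitude of the raw ratios. Any common rescaling of all the $w_{i,j}$, which is exactly the group-level fluctuation, cancels between numerator and denominator. Consequently the saturation mechanism behind Theorem~\ref{thm:uclip_loop} has no scale freedom left to exploit. No step here is a real obstacle: the only care needed is stating positivity of the weights (requiring $\epsilon<1$) and handling the expectation via Jensen.
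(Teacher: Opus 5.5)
Your argument is the paper's: normalize the clipped weights into convex coefficients $w_{i,j}/\sum_k w_{i,k}$ and apply the triangle inequality with $\|g_{i,j}\|\le B$. The paper proves the bound pathwise for the realized self-normalized direction, and it identifies $\nabla_\theta\mathcal J_{\mathrm{Ours}}$ with that realized quantity. Your extra Jensen step $\|\E Z\|\le\E\|Z\|\le B$ is therefore a harmless extension the paper does not need.

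The positivity step needs repair. You derive $w_{i,j}>0$ from the lower clip endpoint, $w_{i,j}\ge 1-\epsilon>0$, and then state that the proof requires $\epsilon<1$. That restriction is unnecessary, and it excludes the regime the paper actually uses: $\epsilon=5$ by default, and $\epsilon\in\{1,5,100,1000\}$ in the ablation. There $1-\epsilon\le 0$, so your bounds $w_{i,j}\ge 1-\epsilon$ and $S_i\ge G(1-\epsilon)$ give nothing.

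As in the paper, positivity should come from $\hat\rho_{i,j}=\exp(\cdot)>0$. For every $\epsilon\ge 0$, $w_{i,j}=\max\bigl(1-\epsilon,\min(\hat\rho_{i,j},1+\epsilon)\bigr)\ge\min(\hat\rho_{i,j},1+\epsilon)>0$. In the log-space clip-then-softmax implementation, the weights are exponentials of clipped log-ratios and so are positive automatically. With this fix the convex-hull bound holds for all $\epsilon$. This also makes your closing remark, that the bound does not depend on $\epsilon$, consistent with the rest of your proof.
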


Unlike clipping alone, self-normalization explicitly divides out the random group-scale factor
$\frac{1}{G}\sum_j w_{i,j}$, decoupling the update magnitude from group-level weight fluctuations.
This breaks the instability mechanisms formalized in App.~\ref{app:grpo_instability_loop_proof} and
App.~\ref{app:uclip_instability_loop_proof}. In Sec.~\ref{sec:exp_instable}, we empirically validate these explanations.

\subsection{Generalization to Block Diffusion}
\label{subsec:staircase_attention}


Adapting block diffusion~\citep{wu2025fast,cheng2025sdar} to RL creates a dilemma between training efficiency and information leakage. Valid likelihood proxy estimation requires conditioning each block strictly on its clean history. Naive iterative implementations are prohibitively slow ($\mathcal{O}(K)$), while standard parallel attention invalidates gradient signals by allowing tokens to "cheat" and attend to their own ground truth.


To resolve this, we introduce \textit{staircase attention}, a structured masking primitive that enables leakage-free, single-pass evaluation. By utilizing a dual-stream input of frozen clean context and corrupted target, the mask enforces strict conditional independence through a unique geometry (Figure~\ref{fig:staircase_mask}). A block-lower-triangular "staircase" grants target tokens in block $k$ access to the clean history of preceding blocks ($1 \dots k-1$) while mechanically occluding the current block's ground truth. Simultaneously, a block-diagonal component permits parallel, independent denoising within the target stream. This structure satisfies ELBO requirements within a single computational graph ($\mathcal{O}(1)$), rendering full-parameter RL feasible for long-horizon tasks (formal derivation in Appendix~\ref{app:staircase_details}).

\begin{figure}[H]
    \vspace{-0.2cm}
    \centering
    \includegraphics[width=0.50\linewidth]{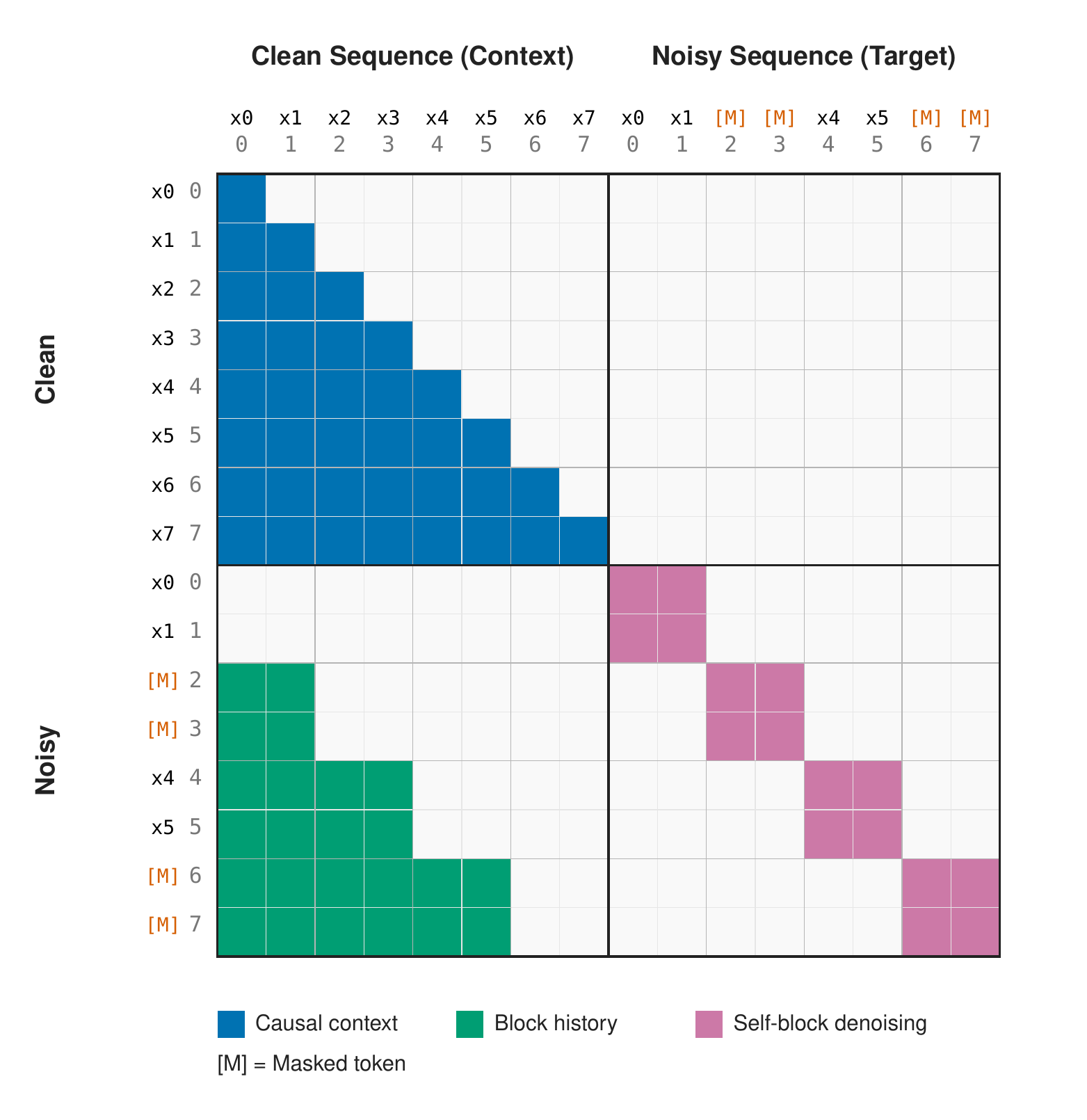}
    \vspace{-20pt}
    \caption{\textbf{Staircase Attention for Efficient Proxy Estimation.} To evaluate the ELBO for block diffusion in a single pass ($O(1)$), we use a dual-stream construction. The \textcolor{blue}{Clean Context} (top rows) provides immutable history. The \textcolor{green}{Corrupted Target} stream (bottom rows) uses a ``staircase'' mask ($M_{\textsc{stair}}$, bottom-left) to attend to valid history without peeking at the ground truth of the current block. The target self-attention ($M_{\textsc{intra}}$, bottom-right) is block-diagonal, ensuring independent parallel denoising.}
    \label{fig:staircase_mask}
    \vspace{-15pt}
\end{figure}

\subsection{Pratical Implementations}\label{sec:implementation}

\paragraph{Score-function surrogates.}
Since dLLMs lack a tractable $\nabla_\theta \log \pi_\theta$, StableDRL reweights gradient directions provided by stable score surrogates, current state-of-the-art dLLM RL methods SPG~\citep{yang2025mmada, ou2025espo,wang2025spg}. For full-attention dLLMs, we implement \emph{block-wise masking}~\citep{wang2025spg} by sampling structured mask blocks consistent with the inference denoising schedule. 
For block diffusion, we sample random positions to mask since the staircase attention runs in $\mathcal{O}(1)$.


\paragraph{Numerically stable log-space weights.}
Direct computation of Eq.~\eqref{eq:stabledrl_objective} is numerically unstable due to the exponentiation of noisy log-ratios. We strictly compute weights in log-space. We define the clipped log-ratios $\tilde{\ell}_{j} = \text{clip}(\widehat{\mathcal{L}}_{\theta}(x_{j}) - \widehat{\mathcal{L}}_{\theta_{\mathrm{old}}}(x_{j}), \log(1-\epsilon), \log(1+\epsilon))$ and compute the normalized coefficients via a stable softmax $\exp\!\big( \tilde{\ell}_{j} - \mathrm{LSE}(\{\tilde{\ell}_{k}\}_{k=1}^G) \big)$, where $\mathrm{LSE}(\cdot)$ is the Log-Sum-Exp function. This \emph{clip-then-softmax} approach preserves numerical precision even when raw probability ratios would underflow or overflow, ensuring stable optimization in mixed-precision training.

\vspace{-0.1cm}
\section{Experiments}\label{sec:exp}

We evaluate StableDRL on two diffusion diffusion architectures: full-attention masked diffusion (\texttt{LLaDA-8B-Instruct}) and semi-autoregressive block diffusion (\texttt{SDAR-8B}). Specifically, we (i) empirically verify the theoretical instability mechanisms analyzed in Sec.~\ref{sec:theory}, (ii) show state-of-the-art reasoning performance on standard benchmarks, and (iii) confirm the architectural generality of our framework. Ablation studies are also 
conduct to dissect the effects of unconditional clipping and self-normalization to training stability.

\paragraph{Experimental setup.} For both \texttt{LLaDA-8B-Instruct} and \texttt{SDAR-8B}, We perform RL fine-tuning using the AdamW optimizer with a learning rate of $1.0 \times 10^{-6}$. To ensure optimization stability while maintaining sample efficiency, we set the unconditional importance weight clipping threshold to $\epsilon=5$ by default. For a comprehensive description of the training infrastructure, model configurations, and hyperparameters, please refer to Appendix~\ref{app:training_setup}.


\subsection{Empirical Verification of Instability Mechanisms}
\label{sec:exp_instable}

\begin{figure*}[h]
    \centering
    \includegraphics[width=0.95\linewidth]{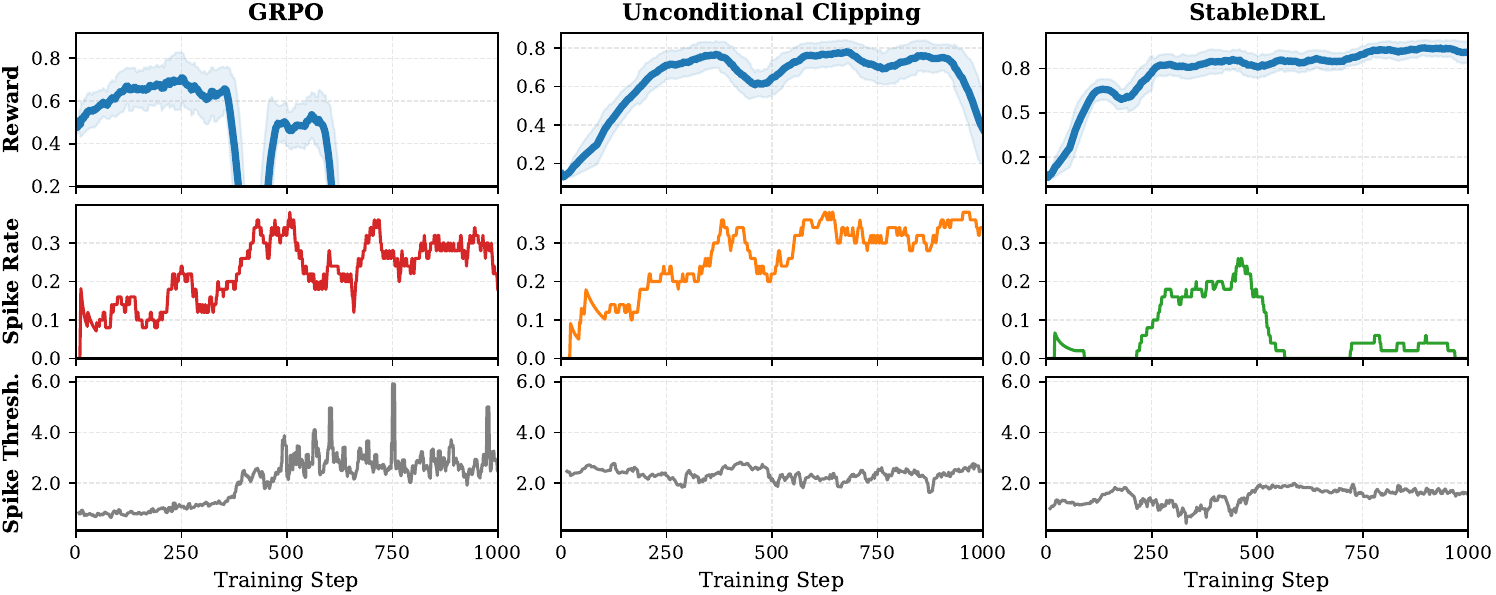}
    \caption{\textbf{Verification of Instability Mechanisms across Methods.} \textbf{Left Column (GRPO):} Unbounded drift (bottom) fuels an accelerating spike rate (middle), causing reward collapse (top). \textbf{Middle Column (Unconditional Clipping):} Clipping saturates the drift (bottom) but induces a high-frequency, stochastic spike regime (middle) that destabilizes learning (top). \textbf{Right Column (StableDRL):} Our method maintains a low, stable spike rate (middle) decoupled from drift (bottom), resulting in monotonic reward improvement (top).}
    \label{fig:instability_analysis}
\end{figure*}

\paragraph{Experimental setup.}
To bridge the gap between our theoretical analysis of self-reinforcing instability in Sec~\ref{sec:theory} and observed training dynamics, we perform a fine-grained analysis of gradient norm evolution. We introduce the \textit{Relative Gradient Spike Rate} to quantify instability. A step $t$ is classified as a ``spike'' if the gradient norm exceeds the local moving average by a margin $\delta$:
\begin{equation*}
    \mathbb{I}_{\text{spike}}^{(t)} = \mathds{1}\left[ \|g_t\| > (1+\delta) \cdot \frac{1}{W} \sum_{k=1}^W \|g_{t-k}\| \right],
    \label{eq:spike_rate}
\end{equation*}
where we set window $W=50$ and $\delta=0.3$. We compare the evolution of reward, spikes rate and the next spike threshold, $(1+\delta) \cdot \frac{1}{W} \sum_{k=1}^W \|g_{t-k}\|$, for GRPO, unconditional clipping, and StableDRL in Figure~\ref{fig:instability_analysis}. We take ESPO~\cite{ou2025espo} as a representative for GRPO and follow its original training setting. We implement unconditional clipping within our training framework with the same clipping threshold $\epsilon$ and settings. For all three methods\footnote{For this Experiment, we train GRPO on GSM8K. For Unconditional Clipping and StableDRL, we train on CountDown.}, we perform full RL finetuning. 

As illustrated in Figure~\ref{fig:instability_analysis}, GRPO exhibits divergent behavior in which unbounded importance ratios drive a steadily rise spike threshold, pushing the optimization into a high-variance regime that requires increasingly large gradients to destabilizing the reward signal. While unconditional clipping bounds the threshold, it results in a high saturation rate where gradient norms frequently impact the clipping limit. This intermittent saturation introduces oscillatory dynamics that corrupt AdamW's momentum history, resulting in reward collapse. Conversely, StableDRL employs a structural convex-hull constraint that maintain a low and stable spike threshold, suppressing relative spikes and enables smooth, monotonic reward gain.

\subsection{Mian Results}
\subsubsection{Full-Attention Diffusion Results}
\label{sec:exp_full_attn}

\begin{table*}[h]
\centering
\caption{\textbf{State-of-the-Art Reasoning Performance on LLaDA-8B-Instruct.} We report pass@1 accuracy under three decoding budgets ($N \in \{128, 256, 512\}$) and the mean performance (\textbf{Avg}) for each dataset. \textbf{Bold} denote the best result and \underline{underline} the second best. StableDRL achieves the highest average accuracy on all four benchmarks, demonstrating superior consistency across generation lengths.}
\label{tab:fullattn_results}
\small
\renewcommand{\arraystretch}{1.15}
\setlength{\tabcolsep}{3.0pt}

\resizebox{\textwidth}{!}{%
\begin{tabular}{l cccc cccc cccc cccc}
\toprule
& \multicolumn{4}{c}{\textbf{\footnotesize{GSM8K}}} & \multicolumn{4}{c}{\textbf{\footnotesize{MATH500}}} & \multicolumn{4}{c}{\textbf{\footnotesize{Countdown}}} & \multicolumn{4}{c}{\textbf{\footnotesize{Sudoku}}} \\
\cmidrule(lr){2-5}\cmidrule(lr){6-9}\cmidrule(lr){10-13}\cmidrule(lr){14-17}
\textbf{Model / Seq Len} & 128 & 256 & 512 & \textbf{Avg} & 128 & 256 & 512 & \textbf{Avg} & 128 & 256 & 512 & \textbf{Avg} & 128 & 256 & 512 & \textbf{Avg} \\
\midrule
LLaDA-8B-Inst. 
& 69.5 & 77.2 & 79.8 & 75.5 
& 28.2 & 32.4 & 34.6 & 31.7 
& 18.8 & 16.8 & 16.8 & 17.5 
& 5.7 & 27.7 & 26.2 & 19.9 \\
LLaDA-1.5 
& 70.4 & 80.5 & 81.9 & 77.6 
& 26.8 & 32.2 & 35.8 & 31.6 
& 31.9 & 21.1 & 21.5 & 24.8 
& 7.4 & 26.9 & 29.0 & 21.1 \\
D1 
& 72.2 & 80.6 & 81.3 & 78.0 
& 31.4 & 36.0 & 39.4 & 35.6 
& 30.9 & 30.9 & 34.4 & 32.1 
& 7.2 & 32.5 & 29.3 & 23.0 \\
WD1 
& 74.6 & 81.5 & 83.0 & 79.7 
& 31.0 & 37.4 & 39.0 & 35.8 
& 48.8 & 52.3 & 50.8 & 50.6 
& 33.1 & 32.1 & 22.5 & 29.2 \\
UniGRPO 
& 74.9 & 82.5 & 82.7 & 80.0 
& 32.4 & 37.4 & 39.4 & 36.4 
& 44.5 & 43.0 & 57.0 & 48.2 
& 59.0 & 67.0 & 62.9 & 63.0 \\
ESPO 
& \underline{80.0} & 82.3 & 83.7 & 82.0 
& \underline{36.0} & 39.0 & \underline{43.4} & \underline{39.5} 
& \textbf{81.6} & \underline{82.0} & \underline{79.3} & \underline{81.0} 
& \textbf{92.7} & 84.7 & 80.5 & 86.0 \\
SPG w/ Mixture 
& 78.5 & \underline{86.1} & \underline{84.5} & \underline{83.0} 
& 33.4 & \underline{40.0} & 41.8 & 38.4 
& 68.8 & 70.7 & 70.3 & 69.9 
& 82.9 & \textbf{94.0} & \textbf{93.1} & \underline{90.0} \\
\midrule
\textbf{StableDRL (Ours)} 
& \textbf{80.2} & \textbf{86.2} & \textbf{86.3} & \textbf{84.2} 
& \textbf{36.2} & \textbf{45.2} & \textbf{44.0} & \textbf{41.8} 
& \underline{81.3} & \textbf{84.4} & \textbf{84.8} & \textbf{83.5} 
& \underline{91.9} & \underline{92.4} & \underline{90.1} & \textbf{91.5} \\
\bottomrule
\end{tabular}%
}
\end{table*}

\paragraph{Experimental setup.}
We follow the experimental protocol of ESPO~\citep{ou2025espo} and SPG~\citep{wang2025spg}, which build on the D1 and WD1 setup ~\cite{zhao2025d1, tang2025wd1}: benchmarks include GSM8K~\cite{cobbe2021training}, MATH500~\cite{lightman2023lets}, Countdown~\cite{pan2025tinyzero}, and Sudoku~\cite{arel2025sudoku}, with the same train and test splits, and evaluation procedure. Concretely, we evaluate at generation lengths $\{128,256,512\}$ and use confidence-based semi-autoregressive decoding with block size $32$ for both RL rollouts and evaluation. We set $\epsilon=5$ for the best performance.

\paragraph{Baselines.} We compare StableDRL against a representative suite of reinforcement learning algorithms for dLLMs. Baselines include \textbf{D1}~\citep{zhao2025d1} and \textbf{UniGRPO}~\citep{yang2025mmada}, which adapt theGRPO framework by approximating the intractable log-likelihood via one-step unmasking or MC estimation of the ELBO. We also include \textbf{WD1}~\citep{tang2025wd1}, which formulates a weighted policy optimization objective to avoid direct likelihood estimation. Finally, we benchmark against \textbf{SPG}~\citep{wang2025spg}, which mitigates gradient bias by sandwiching the policy objective between a tractable Evidence Upper Bound (EUBO) for negative rewards and the ELBO for positive rewards. All methods are initialized from the \texttt{LLaDA-8B-Instruct}~\citep{nie2025large}.

\paragraph{Enabling stable full fine-tuning.}
Unlike ESPO or SPG, which mitigate instability via LoRA or early stopping, we \emph{fully fine-tune} \texttt{LLaDA-8B-Instruct} by explicitly suppressing the gradient spikes that typically destabilize training. This allows StableDRL to optimize the entire model backbone, better unlocking the latent reasoning capabilities of the dLLM. Notably, while our RL training is conducted at a sequence length of 256 tokens, the resulting model achieves consistent, high performance across all evaluated generation lengths ($128$ to $512$ tokens). This suggests that stable, full-parameter reinforcement learning fosters superior length generalization compared to parameter-efficient or variance-constrained alternatives.



\paragraph{State-of-the-Art performances.}
Table~\ref{tab:fullattn_results} demonstrates that StableDRL establishes a new state-of-the-art by achieving the highest average accuracy across all decoding budgets. Specifically, in complex reasoning (MATH500), it secures an average accuracy of 41.8\%, outperforming ESPO and SPG, with a notable +5.2\% margin over SPG at the 256-token budget. In long-horizon planning (Countdown), StableDRL overcomes the off-policy drift that plagues SPG, delivering a massive +13.7\% gain at 256 tokens to reach 84.4\%. Furthermore, unlike baselines such as ESPO that fluctuate significantly on consistency tasks, StableDRL maintains robustness across all lengths, achieving top average scores on both GSM8K (84.2\%) and Sudoku ({91.5\%). These results confirm that resolving ``Noise-Drift" instability is critical for scaling RL in dLLMs.


\subsubsection{Generalization to Block Diffusion}
\label{sec:exp_block_diffusion}

\begin{table}[h]
    \centering
    \caption{\textbf{Block Diffusion Reasoning Performance.} Pass@1 accuracy on MATH500, GSM8K, and AIME, comparing StableDRL (SDAR-8B backbone) against AR baselines (Qwen3) and prior Block Diffusion methods. StableDRL notably outperforms the strong Qwen3-8B AR model on the rigorous AIME benchmark.}
    \label{tab:block_diffusion_results}
    \small
    \setlength{\tabcolsep}{3.5pt}
    \begin{tabular}{lccc}
    \toprule
    \textbf{Method} & \textbf{MATH500} & \textbf{GSM8K} & \textbf{AIME 24} \\
    \midrule
    \multicolumn{4}{l}{\textit{\textbf{Autoregressive (AR) Baselines}}} \\
    Qwen3-4B & 74.1 & 90.7 & 12.9 \\
    Qwen3-8B & 78.4 & \textbf{92.8} & 10.0 \\
    \midrule
    \multicolumn{4}{l}{\textit{\textbf{Block Diffusion (Dynamic Sampling)}}} \\
    SDAR-8B (Base) & 70.6 & 90.4 & \phantom{0}8.3 \\
    Trado & 75.0 & 91.2 & 11.0 \\
    \textbf{StableDRL (Ours)} & 77.8 & 92.1 & 13.3 \\
    \midrule
    \multicolumn{4}{l}{\textit{\textbf{Block Diffusion (Static Sampling)}}} \\
    SDAR-8B (Base) & 75.4 & 91.1 & 11.8 \\
    Trado & 78.5 & 92.3 & 13.3 \\
    \textbf{StableDRL (Ours)} & \textbf{79.2} & 92.4 & \textbf{16.7} \\
    \bottomrule
    \end{tabular}
    \vspace{-0.3cm}
\end{table}

To demonstrate the architectural generality of our framework, we instantiate StableDRL on the \emph{SDAR-8B-Chat} block diffusion model~\citep{cheng2025sdar}. We utilize Staircase Attention (Sec.~\ref{subsec:staircase_attention}) to enable scalable, leakage-free proxy estimation during training.

\paragraph{Experimental setup.}
We follow TraceRL’s training and evaluation conventions for the \emph{SDAR-8B-Chat}~\cite{cheng2025sdar} model ($B=4$). Each RL sampling iteration generates $16$ trajectories per prompt using \emph{dynamic sampling} ($T=0.9$, temp $1.0$). We train on the selected MATH training data~\cite{lightman2023lets}. Evaluation uses both (i) \emph{static} (greedy block-wise) and (ii) \emph{dynamic} (temperature $1.0$) sampling.

\paragraph{Baselines.} 
We benchmark against the supervised base model \textbf{SDAR-8B}, \textbf{Trado}~\citep{wang2025revolutionizing} For fairness, we adopt the model with just TraceRL training to compare, and the autoregressive \textbf{Qwen3}~\cite{yang2025qwen3technicalreport} (4B and 8B Base) to contextualize performance against standard LLMs. We exclude \textbf{DiRL}~\citep{zhu2026dirlefficientposttrainingframework} from our comparison, as it utilizes a fundamentally different data regime and a complex two-stage training pipeline.

\paragraph{Performance analysis.}
Table~\ref{tab:block_diffusion_results} reports the comparative results. StableDRL consistently outperforms prior block diffusion methods. Notably, on the rigorous \emph{AIME 2024} benchmark, StableDRL achieves \emph{16.7\%} (Static), significantly surpassing the base model (11.8\%), Trado (13.3\%), and even the \emph{autoregressive Qwen3-8B} (10.0\%). This indicates that stable on-policy RL can unlock reasoning capabilities often dormant in supervised baselines. Furthermore, while Trado degrades significantly under dynamic sampling (dropping to 11.0\% on AIME), StableDRL maintains superior robustness (\emph{13.3\%}), indicating it effectively shapes the full probability landscape rather than merely optimizing the mode.\footnote{To enable computationally feasible training, we employ custom \texttt{JetEngine}~\cite{cheng2025sdar} inference kernels.}

\subsection{Stress testing exploding importance ratios.}

\begin{figure*}[h]
    \centering
    \includegraphics[width=\textwidth]{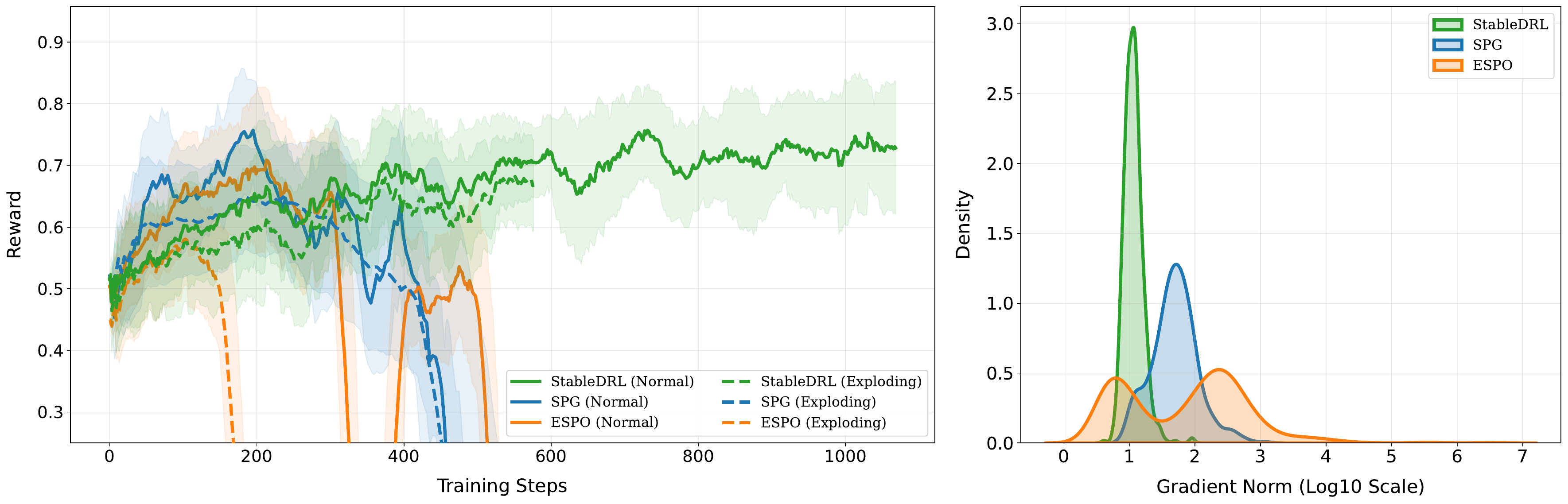}
    \caption{\textbf{Robustness to Proxy Noise: The ``Exploding Weight'' Stress Test (GSM8K).}
    We compare training stability under standard conditions (``Normal'', solid lines) versus an adversarial regime where importance weight variance is artificially amplified (``Exploding'', dashed lines; see App.~\ref{app:stress_test}).
    \textbf{(Left) Reward Trajectories:} StableDRL (\textcolor{green}{Green}) demonstrates \emph{invariant stability}, maintaining monotonic improvement in both regimes. In contrast, ESPO (\textcolor{orange}{Orange}) suffers immediate, \textbf{noise-accelerated collapse}, confirming its sensitivity to ratio outliers. SPG (\textcolor{blue}{Blue}) degrades in both settings, indicating that avoiding ratios (to reduce variance) fatally exposes the model to off-policy bias.
    \textbf{(Right) Gradient Norm Density:} Visualizing the failure mechanism. StableDRL maintains a condensed, low-variance gradient distribution. Conversely, ESPO exhibits a heavy right tail of explosive updates (log-norm $>3$), confirming that the ``Asymmetric Clipping Failure'' allows noise spikes to propagate unchecked.}
    \label{fig:iw_stress_explode}
\end{figure*}

A central hypothesis of this work is that dLLM training instability is driven by heavy-tailed importance weights ($\hat{\rho}$) derived from stochastic ELBO proxies. To isolate this factor, we design an adversarial \emph{Exploding Weight Stress Test} (protocol details in Appendix~\ref{app:stress_test}). This protocol synthesizes ``Exploding'' weights for a subset of trajectories by pairing ``easy'' masking patterns (high ELBO) with ``hard'' masking patterns (low ELBO), amplifying proxy variance without altering the ground-truth data or rewards. Figure~\ref{fig:iw_stress_explode} compares StableDRL, SPG, and ESPO under both ``Normal'' (unbiased) and ``Exploding'' conditions.

\paragraph{StableDRL (Ours): Invariant Stability.}
StableDRL is robust: in the Normal setting (green solid), it achieves the highest final reward; under Exploding weights (green dashed), training remains stable and monotonic, with only a minor performance degradation.

\paragraph{ESPO: Noise-Accelerated Collapse.}
ESPO is highly sensitive to proxy noise: in the Normal setting (orange solid), it collapses later in training, while under Exploding weights (orange dashed) collapse is immediate and catastrophic. This supports our diagnosis that GRPO-style conditional clipping is a primary failure mode under heavy-tailed proxy noise (Sec.~\ref{sec:loop_analysis}).

\paragraph{SPG: Bias-Induced Failure.}
SPG (blue) collapses in \emph{both} settings. Since SPG reuses rollouts without importance-sampling correction (implicitly assuming $\rho=1$), it avoids weight explosions but accumulates off-policy bias as the policy drifts, leading to degradation regardless of proxy noise level.

\subsection{Ablation Studies}
\label{sec:ablation}

\paragraph{Dissecting the stability mechanisms.}
To verify the contributions of Unconditional Clipping and Group Self-Normalization, we analyze the training dynamics on Countdown (Figure~\ref{fig:ablation_mechanisms}). We observe that removing unconditional clipping leads to rapid training failure, as noise-induced outliers dominate the convex combination of gradients. Conversely, removing self-normalization while retaining clipping causes the aggregated update magnitude to oscillate significantly between bounds due to estimation noise, distorting the AdamW momentum history and eventually leading to reward collapse. Only StableDRL, which combines magnitude bounding with geometric constraints, yields a stable and monotonic learning curve.

\begin{figure*}[t]
    \vspace{-15pt}
    \centering
    \begin{minipage}[t]{0.49\textwidth}
        \centering
        \includegraphics[width=\linewidth]{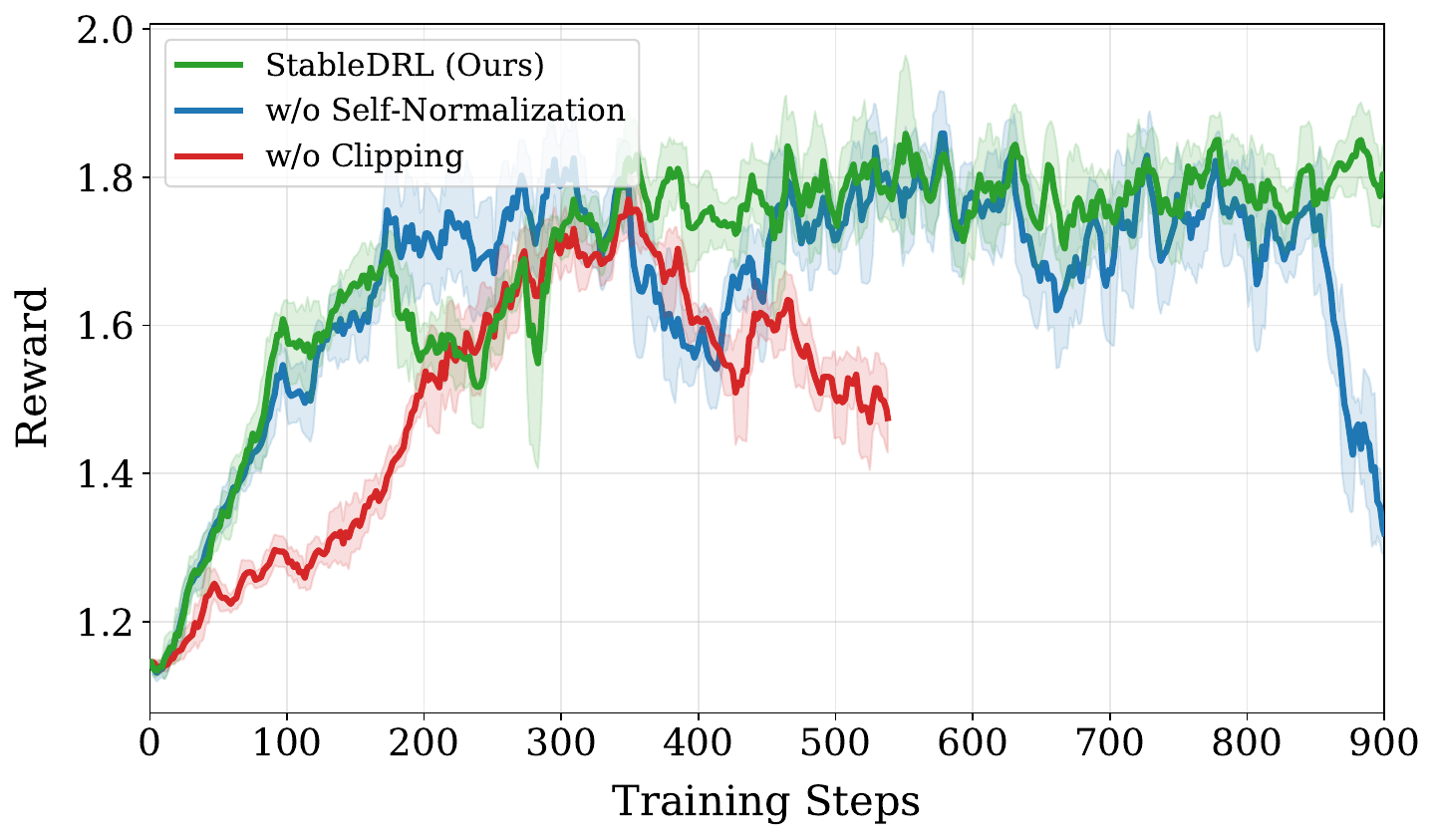}
        \caption{\textbf{Deconstructing Stability Mechanisms.} We isolate the effect of Clipping and Self-Normalization on GSM8K. \textbf{w/o Self-Norm} (Blue): Retaining clipping prevents immediate explosion, but random group scale induces high-variance oscillation that distorts momentum. \textbf{w/o Clipping} (Red): We early-stopped this experiment once observing an unrecoverable collapse in training rewards. Self-normalization alone fails because single noise outliers dominate the convex combination ($\alpha \to 1$), effectively collapsing the sample size and causing rapid failure. 
        \textbf{StableDRL} (Green): Combining both controls yields monotonic stability.}
        \label{fig:ablation_mechanisms}
    \end{minipage}\hfill
    \begin{minipage}[t]{0.49\textwidth}
        \centering
        \includegraphics[width=\linewidth]{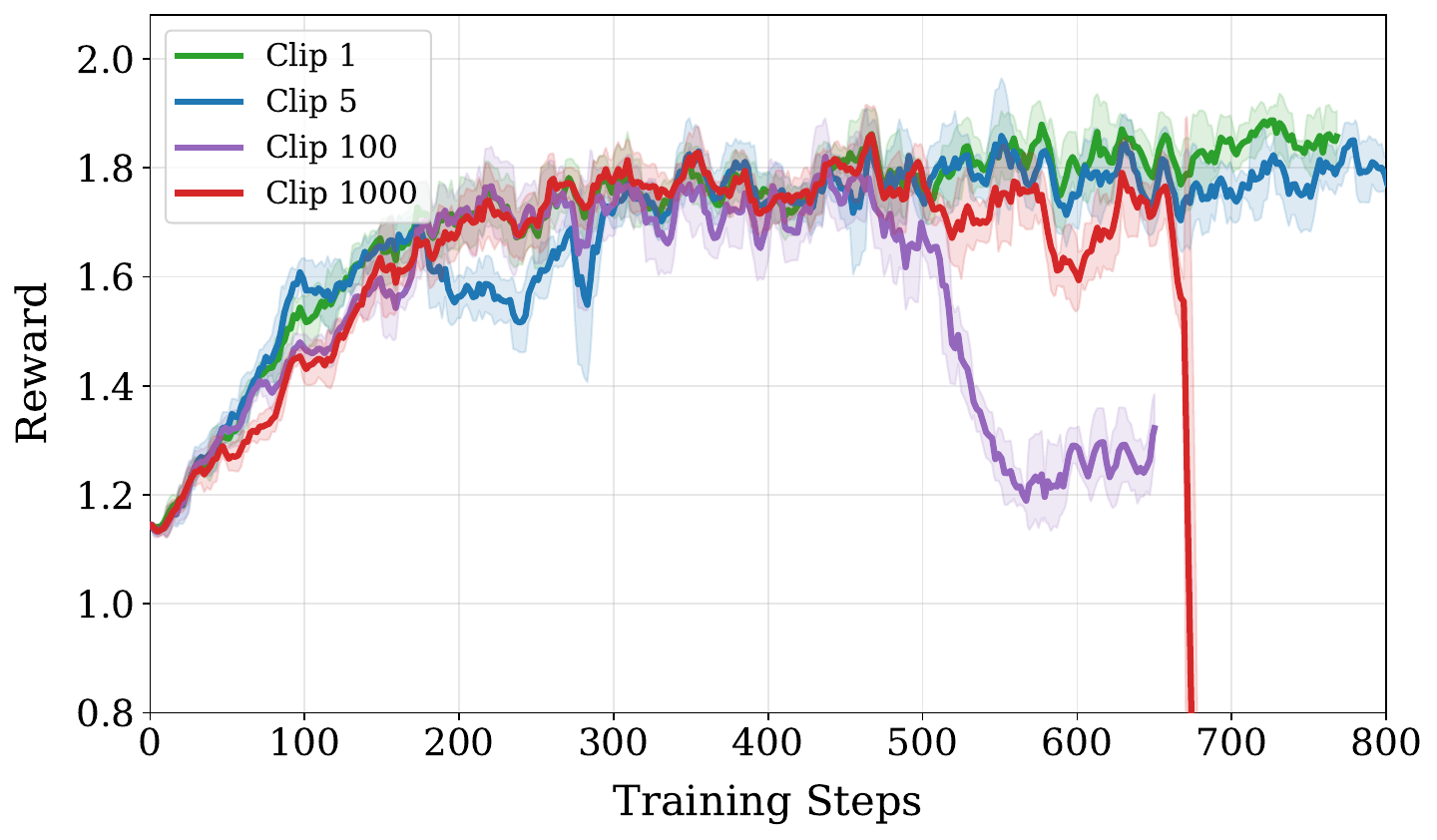}
        \caption{\textbf{Sensitivity to Trust Region Size ($\epsilon$).} We evaluate performance across varying clipping thresholds. 
        \textbf{Small Thresholds} ($\epsilon \in \{1, 5\}$): Training remains stable, with $\epsilon=5$ (Blue) offering a superior bias-variance trade-off compared to the stricter $\epsilon=1$ (Green). 
        \textbf{Large Thresholds} ($\epsilon \in \{100, 1000\}$): As constraints loosen, the ``Trapdoor'' failure re-emerges. Higher thresholds (Purple, Red) allow noise spikes sufficient leverage to destabilize the policy, resulting in sudden, catastrophic collapse.}
        \label{fig:ablation_clipping}
    \end{minipage}
    \vspace{-15pt}
\end{figure*}

\paragraph{Sensitivity to trust region tightness ($\epsilon$).}
We further analyze the trade-off between stability and learning speed across varying clipping thresholds (Figure~\ref{fig:ablation_clipping}). In the small threshold regime ($\epsilon \in \{1, 5\}$), training remains stable, with $\epsilon=5$ providing a superior exploration-stability trade-off than the stricter $\epsilon=1$, achieving faster convergence and higher final rewards by preserving valid learning signals. However, as constraints loosen significantly ($\epsilon \in \{100, 1000\}$),  higher thresholds allow noise spikes sufficient leverage to destabilize the policy before clipping takes effect, resulting in sudden and catastrophic collapse.

\vspace{-0.1cm}
\section{Related Work}
\label{sec:related_work}

\vspace{-0.1cm}
\paragraph{RL post-training for LMs.}
Policy-gradient RL underpins modern alignment and post-training pipelines~\citep{williams1992reinforce,schulman2015trpo,schulman2017ppo}.
RLHF popularized preference-based alignment for AR LMs~\citep{ouyang2022instructgpt}, while RL with verifiable rewards has shown strong gains for mathematical reasoning and long-form solutions~\citep{shao2024deepseekmath,deepseek2025r1}. In the AR context, IR correction~\cite{lingteam2025stepevolvesscalingreinforcement, zheng2025gspo}, targets the staleness in behavior and target policy.


\paragraph{RL for diffusion LMs.}
Other methods such as LLaDa 1.5~\cite{zhu2025llada} proposes a off-policy RL algorithm, which often yields less performance gain then the on-policy ones~\cite{zhao2025d1, ou2025espo}. Meanwhile, models like MDPO~\cite{he2025mdpo} models the diffusion process as a formal markov process, which is computationally expensive and face significant challenges in scaling to large-parameter models.

\paragraph{Importance sampling robustness and off-policy stabilization.}
Variance and tail behavior of importance weights are classical concerns in Monte Carlo and off-policy estimation~\citep{hesterberg1988advances,owen2013monte,elvira2021importance}.
Truncation and clipping control extreme weights~\citep{ionides2008truncated}, while diagnostics and smoothing characterize heavy-tail regimes~\citep{vehtari2015psis}.
In deep RL, clipped corrections such as V-trace and Retrace mitigate off-policy variance and improve stability~\citep{espeholt2018impala,munos2016retrace,liu2018breaking,greensmith2004variance}.
Our work adapts these robustness principles to the \emph{proxy-ratio} setting of dLLM RL, where likelihood estimation noise is exponentiated inside the importance weights.

\section{Conclusion}
\label{sec:conclusion}


This paper studies the instability of Group Relative Policy Optimization (GRPO) when applied to discrete diffusion large language models. We identify that GRPO instability in dLLMs stems from the noisy Monte Carlo importance ratio estimation, which triggers a self-reinforcing instability loop of gradient spikes and policy drift. To break this loop, we propose StableDRL, which employs unconditional clipping and self-normalization to eliminate spikes. Extensive experiments demonstrate that our proposed approach effectively stabilizes the training and significantly unlocks the reasoning potential of dLLMs.

\newpage

\bibliography{ref}
\bibliographystyle{assets/plainnat}

\newpage
\appendix
\onecolumn

\section{Details on Staircase Attention and Proxy Estimation}
\label{app:staircase_details}

In this section, we provide the theoretical details for adapting Reinforcement Learning to Block Diffusion models. We discuss the Monte Carlo estimation of the objective, the efficiency-leakage dilemma, and the formal construction of the Staircase Attention mask.

\subsection{Monte Carlo Estimation of ELBO}
For a fixed context $c$ and sequence $x$, we estimate likelihood proxies by sampling $m$ perturbations. Let $\xi=(t,M_t)$ collect the internal diffusion randomness (time and mask). A generic Monte Carlo (MC) estimator of the ELBO takes the form:
\begin{equation}
\label{eq:mc_elbo_app}
\widehat{\mathcal{L}}_{\text{ELBO}}(x\mid c;\theta)
=
-\frac{1}{m}\sum_{\tau=1}^{m}
\Big[
w(t_\tau)\sum_{i=1}^{n}\mathbf{1}(M_{t_\tau}^{i}=1)\log \pi_{\theta}(x^{i}\mid x_{t_\tau}^{(\tau)},c)
\Big],
\end{equation}
where $x_{t_\tau}^{(\tau)}$ is produced by the forward process using $(t_\tau,M_{t_\tau})$.
In standard full-attention models, the conditional $\log \pi_{\theta}$ is computed under a full bidirectional mask. However, for block diffusion, we must enforce block-wise conditional independence to ensure the estimator remains a valid lower bound.

\subsection{The Efficiency-Leakage Dilemma}
For a sequence divided into $K$ blocks, an exact ELBO estimate requires conditioning each block $B_k$ strictly on its clean history $x_{<B_k}$.
\begin{itemize}
    \item \textbf{Naive Iterative Implementation ($O(K)$):} This necessitates $K$ separate forward passes, masking future tokens sequentially. For long sequences (e.g., 64 blocks), this increases the training cost linearly, rendering iterative RL prohibitively expensive.
    \item \textbf{Standard Single-Pass (Leakage):} Conversely, standard bidirectional attention allows all tokens to attend to the full sequence. If applied naively in a single pass, denoising tokens in block $B_k$ would attend to the ground-truth representations of their own block, mathematically invalidating the variational bound and the gradient signal.
\end{itemize}


\subsection{Dual-Stream Input and Mask Construction}
To achieve $O(1)$ evaluation without leakage, we employ a dual-stream (``2L'') input construction. We concatenate a \emph{clean context stream} $x_{\text{ctx}}$ (frozen history) and a \emph{corrupted target stream} $x_{\text{tgt}}$ (containing mask tokens). Let $\tilde{x}=[x_{\mathrm{ctx}};\,x_{\mathrm{tgt}}]$ be the combined input of length $2n$.

We define a composite attention mask $M \in \{0,1\}^{2n \times 2n}$ that enforces strict block-causal dependency:
\begin{equation}
M \;=\;
\begin{bmatrix}
M_{\textsc{causal}} & \mathbf{0} \\
M_{\textsc{stair}} & M_{\textsc{intra}}
\end{bmatrix}.
\end{equation}

The components are defined as follows:
\begin{enumerate}
    \item \textbf{Top-Left ($M_{\textsc{causal}}$):} Standard causal mask for the clean context stream (Blue regions in Figure~\ref{fig:staircase_mask}).
    \item \textbf{Top-Right ($\mathbf{0}$):} Zero matrix. The clean context cannot attend to the noisy target.
    \item \textbf{Bottom-Right ($M_{\textsc{intra}}$):} A block-diagonal mask where $(M_{\textsc{intra}})_{ij}=1$ iff target positions $i$ and $j$ belong to the same block. This corresponds to the \textbf{Pink} regions in Figure~\ref{fig:staircase_mask} and enables intra-block denoising.
    \item \textbf{Bottom-Left ($M_{\textsc{stair}}$):} The strictly block-lower-triangular component, corresponding to the \textbf{Green} regions in Figure~\ref{fig:staircase_mask}. For a target token in block $k$ and a context token in block $l$:
    \begin{equation}
        (M_{\textsc{stair}})_{k,l} = 
        \begin{cases} 
        1 & \text{if } l < k \quad \text{(Context: Attend to history)} \\
        0 & \text{if } l \ge k \quad \text{(Context: Occlude current/future)}
        \end{cases}
    \end{equation}
\end{enumerate}
This construction allows us to compute gradients for all $K$ blocks simultaneously while mathematically preserving the autoregressive factorization required by the objective.

\section{Proof of Main Results}
\label{app:proof}

\subsection{Formal theorem statements for Sec.~\ref{sec:theory}}
\label{app:theory_formal}

This subsection presents the formal statements of the instability mechanisms identified in Section~\ref{sec:theory}.
Theorems~\ref{thm:grpo_instability_loop_core}--\ref{thm:selfnorm_convex_hull} outline our theoretical framework in three logical steps.
First, \textbf{Theorem~\ref{thm:grpo_instability_loop_core}} formally characterizes the drift–spike feedback loop inherent to standard GRPO.
Second, \textbf{Theorem~\ref{thm:uclip_instability_loop}} states that two-sided unconditional clipping, while mitigating spikes,  may lead to frequent boundary saturation.
Finally, \textbf{Theorem~\ref{thm:selfnorm_convex_hull}} establishes that self-normalization structurally resolves the remaining random group-scale factor.
Detailed proofs are provided in subsequent subsections.

\paragraph{Mathematical setup.}
Fix a behavior policy $\theta_{\mathrm{old}}$ and a rollout group $\mathcal{B}=\{x_1,\ldots,x_G\}$ sampled from $\pi_{\theta_{\mathrm{old}}}$.
GRPO performs updates $\theta_0=\theta_{\mathrm{old}},\theta_1,\theta_2,\ldots$ on this same fixed group.
Write the estimated importance ratio on sample $x_j$ at step $i$ as
\[
\hat\rho_{i,j}=\exp(\Delta\mathcal{L}_{i,j}+\Delta\eta_{i,j}),
\]
where $\Delta\mathcal{L}_{i,j}=\mathcal{L}_{\theta_i}(x_j)-\mathcal{L}_{\theta_{\mathrm{old}}}(x_j)$ is the noise-free drift and
$\Delta\eta_{i,j}$ is the corresponding log-ratio noise term.
Let $\widehat A_j$ be the fixed, group-relative advantage, and define the negative set
\[
\mathcal{N}=\{j:\widehat A_j\le -a_0\}
\quad \text{for some } a_0>0.
\]
Define the drift state, within-negative spread, and drift-maximizer index
\begin{equation}
\label{eq:app_Di_Si_def}
D_i=\max_{j\in\mathcal{N}}\Delta\mathcal{L}_{i,j},
\qquad
S_i=D_i-\min_{j\in\mathcal{N}}\Delta\mathcal{L}_{i,j},
\qquad
j^\dagger \in \arg\max_{j\in\mathcal{N}}\Delta\mathcal{L}_{i,j}.
\end{equation}
Finally, let $\widehat g_{\mathrm{GRPO},i}$ denote the implemented GRPO update direction at step $i$.

\begin{theorem}[GRPO drift--spike feedback loop]
\label{thm:grpo_instability_loop_core}
Assume the standing Conditions (C1)--(C5) in Appendix~\ref{app:grpo_instability_loop_proof}.
Fix any spike threshold $H>0$ and define
\begin{equation}
\label{eq:core_uH_def}
u_H
:=
\max\!\left\{1+\epsilon,\ \frac{GH}{(1-\lambda)a_0b_0},\ u_0\right\},
\end{equation}
where $u_0$ is the deterministic constant defined in Lemma~\ref{lem:dom_from_moment}.
Define the spike-probability lower bound
\begin{equation}
\label{eq:core_Pi_def_new}
P_i(H) \ :=\ \frac{1}{2}\,\bar F\!\big(\log u_H - D_i\big).
\end{equation}
Then, almost surely (conditioning on $\mathcal F_{i-1}$, i.e., the current inner iterate and the fixed rollout group),
\begin{equation}
\label{eq:core_spike_prob_main_new}
\mathbb{P}\!\big(\|\widehat g_{\mathrm{GRPO},i}\|\ge H \,\big|\, \mathcal F_{i-1}\big)\ \ge\ P_i(H),
\qquad \text{a.s.}
\end{equation}
and $P_i(H)$ is a nondecreasing function of the drift state $D_i$.

Moreover, on any realized step where a single negative-advantage outlier dominates the group update and the local smoothness/geometry
conditions in Appendix~\ref{app:grpo_instability_loop_proof} hold for that realized update, there exist step-dependent scalars
$c_{\mathrm{sup},i}>0$ and $c_{\mathrm{amp},i}\in\mathbb{R}$ and indices $j^\star\in\mathcal{N}$ and $j^\diamond\in\mathcal{N}\setminus\{j^\star\}$ such that
\begin{equation}
\label{eq:core_drift_creation_main}
\begin{aligned}
\mathcal{L}_{\theta_i}(x_{j^\star})-\mathcal{L}_{\theta_{i+1}}(x_{j^\star})
&\ge
c_{\mathrm{sup},i}\,\frac{\hat\rho_{i,j^\star}}{G},\\
D_{i+1}
&\ge
D_i + \big(c_{\mathrm{amp},i}\hat\rho_{i,j^\star}-S_i\big).
\end{aligned}
\end{equation}
In particular, if $c_{\mathrm{amp},i}\hat\rho_{i,j^\star}\ge S_i$, then $D_{i+1}\ge D_i$, hence
\begin{equation}
\label{eq:Pi_monotone_formal}
P_{i+1}(H)\ \ge\ P_i(H)
\qquad\text{(on that realized step).}
\end{equation}
\end{theorem}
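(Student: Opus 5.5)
The argument has two parts: a one-step probabilistic lower bound on spikes, and a deterministic drift-creation argument on a realized dominating step. Conditions (C1)--(C5) are not visible to us, so I assume they provide the following. First, the right tail of each $\Delta\eta_{i,j}$ is bounded below by a common envelope, $\mathbb{P}(\Delta\eta_{i,j}\ge s\mid\mathcal F_{i-1})\ge \bar F(s)$, where $\bar F$ is nonincreasing. Second, the per-sample directions satisfy the alignment bound $\|g_{i,j}\|\ge b_0$. Third, a moment condition on the remaining samples: in Lemma~\ref{lem:dom_from_moment}, once the outlier multiplier exceeds $u_0$, the other $G-1$ terms contribute at most a $\lambda$-fraction of the outlier term, with probability at least $1/2$.

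\textbf{Spike bound.} Take $j^\dagger\in\mathcal N$ to be the drift maximizer, so that $\Delta\mathcal L_{i,j^\dagger}=D_i$. Consider the event $E=\{\Delta\eta_{i,j^\dagger}\ge \log u_H - D_i\}$. On $E$ we have $\hat\rho_{i,j^\dagger}\ge u_H\ge 1+\epsilon$. Because $\widehat A_{j^\dagger}\le -a_0<0$, the GRPO min-clip selects the unclipped term $\hat\rho\widehat A$, so the coefficient of $g_{i,j^\dagger}$ has magnitude at least $u_H a_0/G$. Intersecting $E$ with the dominance event from Lemma~\ref{lem:dom_from_moment}, which applies because $u_H\ge u_0$, the reverse triangle inequality gives
\[
\|\widehat g_{\mathrm{GRPO},i}\|\ge (1-\lambda)\frac{u_H a_0 b_0}{G}\ge H,
\]
where the last step uses $u_H\ge GH/((1-\lambda)a_0b_0)$. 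The envelope gives $\mathbb{P}(E\mid\mathcal F_{i-1})\ge\bar F(\log u_H-D_i)$, and the dominance lemma supplies the factor $1/2$ conditionally on $E$; this conditional version is what I take the lemma to state. Together these yield \eqref{eq:core_spike_prob_main_new}. Monotonicity of $P_i(H)$ in $D_i$ follows because $\bar F$ is nonincreasing and $u_H$ does not depend on $i$; this uniformity is exactly what the common envelope buys.

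\textbf{Drift creation.} On a realized dominating step, write $\theta_{i+1}=\theta_i+\alpha\,\widehat g_{\mathrm{GRPO},i}$. Since $\widehat A_{j^\star}<0$, the dominant component is $-\alpha\,\hat\rho_{i,j^\star}|\widehat A_{j^\star}|\,\nabla\mathcal L(x_{j^\star})/G$ plus a remainder of relative size $\lambda$. A first-order expansion with $L$-smoothness (from the local smoothness condition) gives
\[
\mathcal L_{\theta_i}(x_{j^\star})-\mathcal L_{\theta_{i+1}}(x_{j^\star})\ge \frac{\alpha\hat\rho_{i,j^\star}}{G}\Big(a_0 b_0^2(1-\lambda)-\tfrac{L}{2}\alpha\,\|\widehat g_{\mathrm{GRPO},i}\|^2\frac{G}{\hat\rho_{i,j^\star}}\Big).
\]
This quantity is absorbed into $c_{\mathrm{sup},i}>0$, with positivity supplied by the step-size/geometry condition.

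For the amplification inequality, choose $j^\diamond\in\mathcal N\setminus\{j^\star\}$ whose gradient is negatively aligned with $\nabla\mathcal L(x_{j^\star})$; this is the geometry condition. Pushing down $x_{j^\star}$ then raises $\mathcal L(x_{j^\diamond})$ by $c_{\mathrm{amp},i}\hat\rho_{i,j^\star}$, again by Taylor expansion with the smoothness remainder absorbed into $c_{\mathrm{amp},i}$. This gives
\[
D_{i+1}\ge \Delta\mathcal L_{i+1,j^\diamond}\ge \Delta\mathcal L_{i,j^\diamond}+c_{\mathrm{amp},i}\hat\rho_{i,j^\star}\ge D_i-S_i+c_{\mathrm{amp},i}\hat\rho_{i,j^\star},
\]
where the last step uses $\Delta\mathcal L_{i,j^\diamond}\ge \min_{j\in\mathcal N}\Delta\mathcal L_{i,j}=D_i-S_i$. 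Finally, if $c_{\mathrm{amp},i}\hat\rho_{i,j^\star}\ge S_i$, then $D_{i+1}\ge D_i$, and \eqref{eq:Pi_monotone_formal} follows from monotonicity of the common function $D\mapsto\frac12\bar F(\log u_H-D)$.

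\textbf{Main obstacle.} I expect the spike bound to be delicate in justifying the factor $1/2$. The dominance event must hold conditionally on the tail event for $j^\dagger$, which requires some conditional-independence assumption or a moment bound on the other noise terms that is uniform in $\Delta\eta_{i,j^\dagger}$. My first attempt would be a conditional Markov inequality on $\sum_{j\ne j^\dagger}\hat\rho_{i,j}|\widehat A_j|B$ given $\mathcal F_{i-1}$ and $\Delta\eta_{i,j^\dagger}$, with $u_0$ chosen so that this sum is at most $\lambda u_0 a_0 b_0$ with probability at least $1/2$. The drift-creation part is essentially bookkeeping once the geometry condition is stated.
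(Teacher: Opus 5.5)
Your proposal is correct and follows essentially the same route as the paper. The spike bound intersects the drift-maximizer's tail event $\{\hat\rho_{i,j^\dagger}\ge u_H\}$ with the conditional dominance event, and the paper obtains the factor $\tfrac12$ exactly as you anticipate: conditional independence (C3) plus Markov's inequality on $\sum_{j\ne j^\dagger} m_{i,j}$ under the moment bound (C5); the drift part is the same smoothness-based descent/ascent argument, closed with $\Delta\mathcal L_{i,j^\diamond}\ge D_i-S_i$. One small correction: the paper's (C4) only gives $\|\nabla_\theta\mathcal L_{\theta_i}(x_{j^\dagger})\|\ge b_0$ at the drift maximizer, so for a general dominating $j^\star$ you should keep $|\widehat A_{j^\star}|\,\|v\|^2$ in place of $a_0 b_0^2$, as the paper does with $c_{\mathrm{sup},i}=\frac{1-\lambda}{2}\eta_0|\widehat A_{j^\star}|\,\|v\|^2$ under the step-size condition $\eta\le \frac{1-\lambda}{L_\star(1+\lambda)^2}$.
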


\begin{theorem}[Boundary saturation under two-sided clipping]
\label{thm:uclip_instability_loop}
Let $g_{i,j}:=\widehat A_j\nabla_\theta\mathcal{L}_{\theta_i}(x_j)$ and assume $\|g_{i,j}\|\le B$ (Condition (C1)).
Define the two-sided clipped weight
$w_{i,j}:=\mathrm{clip}(\hat\rho_{i,j},\,1-\epsilon,\,1+\epsilon)$ and the clipping-only direction
$\widehat g_{\mathrm{clip},i}:=\frac{1}{G}\sum_{j=1}^G w_{i,j}g_{i,j}$.

First, clipping prevents unbounded spikes: deterministically, $\|\widehat g_{\mathrm{clip},i}\|\le (1+\epsilon)B$.

Second, drift still increases the frequency of hitting the \emph{upper} clipping boundary.
Let $j^\dagger\in\arg\max_{j\in\mathcal{N}}\Delta\mathcal{L}_{i,j}$ be a drift-maximizer in the negative set.
Then, almost surely,
\begin{equation}
\label{eq:core_upper_sat}
\mathbb{P}\!\big(\hat\rho_{i,j^\dagger}\ge 1+\epsilon \,\big|\, \mathcal F_{i-1}\big)
\ \ge\
\bar F\!\big(\log(1+\epsilon)-D_i\big),
\end{equation}
and the right-hand side is nondecreasing in $D_i$.

Under the additional dominance event in Lemma~\ref{lem:ub_dom_clipped} and the same local smoothness/geometry
conditions used in Appendix~\ref{app:grpo_instability_loop_proof}, there exists a step-dependent scalar $c_{\mathrm{amp},i}\in\mathbb{R}$ such that
\[
D_{i+1}\ \ge\ D_i + \big(c_{\mathrm{amp},i}(1+\epsilon)-S_i\big).
\]
Thus clipping alone can replace rare extreme spikes with frequent boundary-saturated updates once drift becomes large.
\end{theorem}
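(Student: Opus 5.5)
The plan has three parts, one for each claim in Theorem~\ref{thm:uclip_instability_loop}.

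\emph{Deterministic bound.} This is immediate. Each clipped weight satisfies $0<1-\epsilon\le w_{i,j}\le 1+\epsilon$. (If $\epsilon\ge 1$, the lower clip is vacuous or zero, but the upper bound is all we use.) Combined with Condition (C1), $\|g_{i,j}\|\le B$, the triangle inequality gives
\[
\|\widehat g_{\mathrm{clip},i}\|\le \frac{1}{G}\sum_{j=1}^G w_{i,j}\|g_{i,j}\|\le (1+\epsilon)B .
\]
This holds pathwise, with no probabilistic input.

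\emph{Upper saturation probability.} Fix $j^\dagger$, which is $\mathcal F_{i-1}$-measurable because $\Delta\mathcal{L}_{i,j}$ depends only on $\theta_i$ and the fixed group. Then
\[
\{\hat\rho_{i,j^\dagger}\ge 1+\epsilon\}=\{\Delta\eta_{i,j^\dagger}\ge \log(1+\epsilon)-\Delta\mathcal{L}_{i,j^\dagger}\}=\{\Delta\eta_{i,j^\dagger}\ge \log(1+\epsilon)-D_i\}.
\]
Next I invoke the uniform tail-envelope condition from the standing assumptions (C1)--(C5). It says that conditionally on $\mathcal F_{i-1}$, $\mathbb P(\Delta\eta_{i,j}\ge s\mid\mathcal F_{i-1})\ge \bar F(s)$ for all $s$, with $\bar F$ a fixed nonincreasing survival-type function. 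Plugging in $s=\log(1+\epsilon)-D_i$ gives \eqref{eq:core_upper_sat} almost surely. Monotonicity in $D_i$ follows because $\bar F$ is nonincreasing and its argument decreases in $D_i$. This mirrors the spike-probability step in the proof of Theorem~\ref{thm:grpo_instability_loop_core}, with $u_H$ replaced by $1+\epsilon$ and without the factor $\tfrac12$, since no dominance event is needed here.

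\emph{Drift creation.} I reuse the drift-creation argument of Theorem~\ref{thm:grpo_instability_loop_core}, substituting the saturated weight $w_{i,j^\star}=1+\epsilon$ for the unclipped $\hat\rho_{i,j^\star}$. The steps are as follows.
\begin{enumerate}
\item On the dominance event of Lemma~\ref{lem:ub_dom_clipped}, write the update as $\theta_{i+1}=\theta_i+\alpha\widehat g_{\mathrm{clip},i}$. Decompose it as $\frac{(1+\epsilon)}{G}g_{i,j^\star}+r_i$, where the residual $r_i$ is controlled by the dominance condition, i.e.\ $\|r_i\|\le \lambda\cdot$ the leading term.
\item Apply the local smoothness condition (a descent-lemma type second-order bound on $\mathcal{L}_\theta(x_j)$ along the step) to each $j\in\mathcal N$. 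This gives
\[
\Delta\mathcal{L}_{i+1,j}-\Delta\mathcal{L}_{i,j}\ge \alpha\langle\nabla\mathcal{L}_{\theta_i}(x_j),\widehat g_{\mathrm{clip},i}\rangle-\tfrac{L\alpha^2}{2}\|\widehat g_{\mathrm{clip},i}\|^2 .
\]
\item Use the geometry condition, which requires some $j^\diamond\in\mathcal N\setminus\{j^\star\}$ to have gradient correlated with the step. This lower-bounds the increment of $\Delta\mathcal{L}_{i+1,j^\diamond}$ by $c_{\mathrm{amp},i}(1+\epsilon)$, where $c_{\mathrm{amp},i}$ absorbs $\alpha$, $a_0$, $G^{-1}$, the inner products, and the smoothness penalty. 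It may be negative, which is why the statement allows $c_{\mathrm{amp},i}\in\mathbb R$.
\item Conclude with
\[
D_{i+1}\ge \Delta\mathcal{L}_{i+1,j^\diamond}\ge \Delta\mathcal{L}_{i,j^\diamond}+c_{\mathrm{amp},i}(1+\epsilon)\ge D_i-S_i+c_{\mathrm{amp},i}(1+\epsilon),
\]
where the last inequality uses the definition of $S_i$.
\end{enumerate}

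The main obstacle is the last part. Its validity depends on the dominance event and the geometry conditions actually transferring from the unclipped case. Under clipping the leading weight is capped at $1+\epsilon$ rather than growing with $\hat\rho$, so dominance is no longer automatic for large outliers. It must be imposed through Lemma~\ref{lem:ub_dom_clipped}, e.g.\ requiring that the other negative samples be lower-clipped and that the positive samples be small or cancelling. I would first try to verify that the constant bookkeeping from the GRPO proof goes through verbatim with $\hat\rho_{i,j^\star}\mapsto 1+\epsilon$. Any extra lower-order terms would be absorbed into $c_{\mathrm{amp},i}$.
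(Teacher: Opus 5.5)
Your proposal is correct and follows the paper's proof essentially step for step. The three parts match: the pathwise triangle-inequality bound using $w_{i,j}\le 1+\epsilon$; the ratio-exceedance identity at $u=1+\epsilon$ combined with the uniform tail envelope (C2); and the cross-sample amplification argument of Lemma~\ref{lem:app_D_update}, rerun on the dominance event of Lemma~\ref{lem:ub_dom_clipped} with effective step $\frac{\eta_0}{G}(1+\epsilon)|\widehat A_{j^\dagger}|$. The only bookkeeping difference is that you absorb the quadratic smoothness penalty into $c_{\mathrm{amp},i}$ instead of using the step-size condition to bound it by $\tfrac{1}{2}\eta\gamma$; this is legitimate here because the saturated weight $1+\epsilon$ is deterministic, so $c_{\mathrm{amp},i}$ still depends only on step-$i$ quantities.
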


\begin{theorem}[Self-normalization removes the random group-scale factor]
\label{thm:selfnorm_convex_hull}
With the same clipped weights $w_{i,j}$ as in Theorem~\ref{thm:uclip_instability_loop}, define the self-normalized direction
\begin{equation}
\label{eq:core_selfnorm_def}
\begin{aligned}
\widehat g_{\mathrm{sn},i}
&:=
\frac{\sum_{j=1}^G w_{i,j}g_{i,j}}{\sum_{j=1}^G w_{i,j}},\\
\widehat g_{\mathrm{clip},i}
&=
\left(\frac{1}{G}\sum_{j=1}^G w_{i,j}\right)\widehat g_{\mathrm{sn},i}.
\end{aligned}
\end{equation}
Since $w_{i,j}>0$ (because $\hat\rho_{i,j}=\exp(\cdot)>0$) and $\sum_j w_{i,j}>0$, the coefficients
$w_{i,j}/\sum_k w_{i,k}$ form a convex combination, so
$\widehat g_{\mathrm{sn},i}$ always lies in the convex hull of $\{g_{i,j}\}_{j=1}^G$.
In particular, if $\|g_{i,j}\|\le B$ then deterministically $\|\widehat g_{\mathrm{sn},i}\|\le B$.
Thus self-normalization explicitly divides out the random group-scale factor $\frac{1}{G}\sum_j w_{i,j}$ that remains
under clipping-only.
\end{theorem}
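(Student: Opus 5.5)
The argument is elementary and needs none of the conditions (C1)--(C5) beyond the norm bound $\|g_{i,j}\|\le B$. I would proceed in three short steps.

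\textbf{Step 1: positivity of the weights.} By construction $\hat\rho_{i,j}=\exp(\Delta\mathcal{L}_{i,j}+\Delta\eta_{i,j})>0$. Whenever $\epsilon<1$, the two-sided clip maps into $[1-\epsilon,1+\epsilon]\subset(0,\infty)$, so $w_{i,j}>0$ and hence $W_i:=\sum_{k=1}^G w_{i,k}\ge G(1-\epsilon)>0$.

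The experiments use $\epsilon=5$, where the interval $[1-\epsilon,1+\epsilon]$ has a negative lower end, so this case needs a separate remark. Here the clip acts as $\min(\hat\rho_{i,j},1+\epsilon)$ on a positive quantity. Equivalently, one can read the log-space implementation of Sec.~\ref{sec:implementation}, where the weights are exponentials of clipped log-ratios. Either way $w_{i,j}>0$ still holds. I regard this as the only delicate point: the statement's parenthetical ``because $\hat\rho_{i,j}>0$'' implicitly relies on it.

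\textbf{Step 2: convex hull and scale factorization.} Define $\alpha_{i,j}:=w_{i,j}/W_i$. These coefficients are positive and satisfy $\sum_j\alpha_{i,j}=1$. Therefore $\widehat g_{\mathrm{sn},i}=\sum_j\alpha_{i,j}g_{i,j}$ is a convex combination of the $g_{i,j}$ and lies in their convex hull.

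The factorization $\widehat g_{\mathrm{clip},i}=\big(\tfrac1G W_i\big)\widehat g_{\mathrm{sn},i}$ is pure algebra: multiply and divide the defining sum of $\widehat g_{\mathrm{clip},i}$ by $W_i$.

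\textbf{Step 3: the norm bound.} By the triangle inequality and positive homogeneity,
\[
\|\widehat g_{\mathrm{sn},i}\|\le\sum_j\alpha_{i,j}\|g_{i,j}\|\le\max_j\|g_{i,j}\|\le B .
\]
Alternatively, one can note that the norm is convex, so its maximum over a polytope is attained at a vertex.

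This bound is deterministic and does not depend on the realized noise $\Delta\eta_{i,j}$. The only random object that distinguishes the clipping-only update from the self-normalized one is the scalar $\tfrac1G W_i$, which lies in $[1-\epsilon,1+\epsilon]$ and recovers the bound $(1+\epsilon)B$ of Theorem~\ref{thm:uclip_instability_loop}. Self-normalization divides this scalar out exactly, which is the sense in which it ``removes the random group-scale factor.''
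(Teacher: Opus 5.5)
Your proof is correct and is essentially the paper's argument. Positivity of the clipped weights follows from $\hat\rho_{i,j}>0$, the coefficients $\alpha_{i,j}=w_{i,j}/\sum_k w_{i,k}$ form a convex combination, and the triangle inequality with (C1) gives $\|\widehat g_{\mathrm{sn},i}\|\le B$; you also verify the scale factorization, which the paper leaves implicit. The case split on $\epsilon$ in your Step 1 is harmless but unnecessary, since $\mathrm{clip}(x,1-\epsilon,1+\epsilon)\ge\min(x,1+\epsilon)>0$ for every $x>0$ and $\epsilon\ge 0$, which is exactly what the paper's one-line justification relies on.
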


\subsection{Proof of Theorem~\ref{thm:grpo_instability_loop_core}}
\label{app:grpo_instability_loop_proof}

We present the proof of Theorem~\ref{thm:grpo_instability_loop_core}.
For clarity, we first state the necessary setup and assumptions.

\paragraph{Deterministic proxy gradients and GRPO effective weights.}
Define deterministic proxy gradients
\[
h_{i,j}:=\nabla_\theta \mathcal{L}_{\theta_i}(x_j),
\qquad
g_{i,j}:=\widehat A_j\,h_{i,j}.
\]
We write the implemented GRPO direction in the equivalent ``effective-weight'' form
\begin{equation}
\label{eq:grpo_effective_weight_form}
\widehat g_{\mathrm{GRPO},i}
:=
\frac{1}{G}\sum_{j=1}^G m_{i,j}\,g_{i,j},
\end{equation}
where the (random) effective multiplier $m_{i,j}$ is
\begin{equation}
\label{eq:def_effective_mij}
m_{i,j}
:=
\begin{cases}
\min(\hat\rho_{i,j},\,1+\epsilon), & \widehat A_j\ge 0,\\
\max(\hat\rho_{i,j},\,1-\epsilon), & \widehat A_j<0.
\end{cases}
\end{equation}
This form is exactly equivalent to the usual GRPO ``min--clip'' surrogate:
for $\widehat A_j\ge 0$ the weight is clipped at $1+\epsilon$, while for $\widehat A_j<0$ the weight is not clipped from above.

We do not assume a particular optimizer beyond the update form
\[
\theta_{i+1}=\theta_i+\eta_0\,\widehat g_{\mathrm{GRPO},i}
\]
for some learning rate $\eta_0>0$.

\paragraph{Filtration.}
Let $\mathcal F_i$ denote the $\sigma$-field generated by all algorithmic randomness up to and including step $i$.
Then $\theta_i$ is $\mathcal F_{i-1}$-measurable; hence each drift value $\Delta\mathcal{L}_{i,j}$ is
$\mathcal F_{i-1}$-measurable.

\paragraph{Standing conditions (C1--C5).}
We work under the following conditions. (C1) is standard and typically enforced by gradient clipping;
(C3) holds when the Monte Carlo proxy evaluations use independent randomness across samples;
and (C5) is \emph{empirically checkable} by monitoring $\sum_{j\neq j^\dagger}m_{i,j}$.

\begin{itemize}
\item \textbf{(C1) Bounded per-sample directions.}
There exists $B<\infty$ such that $\|g_{i,j}\|\le B$ for all inner steps $i$ and all samples $j$.

\item \textbf{(C2) Conditional common right-tail envelope for log-ratio noise.}
For each inner step $i$ and sample $j$, define the conditional survival function
\[
\bar F_{j,i}(z)
:=
\mathbb{P}\big(\Delta\eta_{i,j}\ge z \,\big|\, \mathcal F_{i-1}\big),
\qquad z\in\mathbb{R}.
\]
Assume that for every $j\in\mathcal N$ and every $i$, $\bar F_{j,i}$ has unbounded support in the sense that
$\bar F_{j,i}(z)>0$ for all $z\in\mathbb{R}$ almost surely.
Moreover, assume there exists a deterministic nonincreasing function $\bar F:\mathbb{R}\to(0,1]$
(a \emph{uniform} tail lower envelope) such that almost surely, for all $i$ and all $j\in\mathcal N$,
\[
\bar F_{j,i}(z)\ge \bar F(z)
\qquad \forall z\in\mathbb{R}.
\]

\item \textbf{(C3) Conditional independence across samples.}
For each inner step $i$, conditional on $\mathcal F_{i-1}$, the noises $\{\Delta\eta_{i,j}\}_{j=1}^G$ are independent.
Equivalently, conditional on $\mathcal F_{i-1}$, the ratios $\{\hat\rho_{i,j}\}_{j=1}^G$ (and thus the effective weights $\{m_{i,j}\}$)
are independent across $j$.

\item \textbf{(C4) Nontrivial proxy gradient at the drift-maximizer.}
There exists $b_0>0$ such that for all inner steps $i$, the drift-maximizer in the negative set satisfies
$\|h_{i,j^\dagger}\|\ge b_0$.

\item \textbf{(C5) Residual effective-weight moment control.}
There exists a deterministic constant $W<\infty$ such that for all inner steps $i$,
\[
\mathbb{E}\!\left[\sum_{j\neq j^\dagger} m_{i,j}\ \Big|\ \mathcal F_{i-1}\right]\ \le\ W,
\qquad \text{a.s.}
\]
\end{itemize}

\begin{remark}
Condition (C5) upper-bounds the \emph{expected total effective-weight mass} of the samples \emph{other than the drift-maximizer}.
This quantity is directly measurable in experiments as the sum $\sum_{j\neq j^\dagger}m_{i,j}$.
Controlling this expectation guarantees via Markov's inequality that when $\hat\rho_{i,j^\dagger}$ is large, the drift-maximizer dominates the group update with a constant probability.
\end{remark}

\begin{lemma}[Ratio exceedance identity and drift monotonicity]
\label{lem:app_ratio_exceed}
Fix an inner step $i$, an index $j$, and a threshold $u>0$.
Then
\begin{equation}
\label{eq:app_ratio_exceed_identity}
\mathbb{P}\!\big(\hat\rho_{i,j}\ge u \,\big|\, \mathcal F_{i-1}\big)
=
\mathbb{P}\!\big(\Delta\eta_{i,j} \ge \log u-\Delta\mathcal L_{i,j} \,\big|\, \mathcal F_{i-1}\big)
=
\bar F_{j,i}\!\big(\log u-\Delta\mathcal L_{i,j}\big).
\end{equation}
Moreover, conditional on $\mathcal F_{i-1}$, the map
$\Delta\mathcal L \mapsto \bar F_{j,i}(\log u-\Delta\mathcal L)$ is nondecreasing.
\end{lemma}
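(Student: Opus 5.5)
The statement follows directly from the definition $\hat\rho_{i,j}=\exp(\Delta\mathcal L_{i,j}+\Delta\eta_{i,j})$, the strict monotonicity of $\log$ on $(0,\infty)$, and measurability. We work conditionally on $\mathcal F_{i-1}$. By the filtration paragraph, $\theta_i$ is $\mathcal F_{i-1}$-measurable, and hence so is $\Delta\mathcal L_{i,j}$. Inside the conditional probability, $\Delta\mathcal L_{i,j}$ therefore acts as a known constant, and only $\Delta\eta_{i,j}$ carries randomness.

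First, fix $u>0$. Since $\hat\rho_{i,j}>0$ and $\log$ is strictly increasing on $(0,\infty)$, we have the pointwise event identity
\[
\{\hat\rho_{i,j}\ge u\}=\{\Delta\mathcal L_{i,j}+\Delta\eta_{i,j}\ge \log u\}=\{\Delta\eta_{i,j}\ge \log u-\Delta\mathcal L_{i,j}\}.
\]
Taking conditional probabilities of identical events gives the first equality in \eqref{eq:app_ratio_exceed_identity}. For the second equality, note that the threshold $z^\ast:=\log u-\Delta\mathcal L_{i,j}$ is $\mathcal F_{i-1}$-measurable. We then need
\[
\mathbb P(\Delta\eta_{i,j}\ge z^\ast\mid\mathcal F_{i-1})=\bar F_{j,i}(z)\big|_{z=z^\ast}.
\]
This is the standard ``freezing'' property of conditional expectations for a measurable random argument. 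It can be justified via a regular conditional distribution of $\Delta\eta_{i,j}$ given $\mathcal F_{i-1}$, which exists because $\Delta\eta_{i,j}$ is real-valued. Concretely, we write $\bar F_{j,i}(z)=\kappa(\cdot,[z,\infty))$ for a version $\kappa$ of that kernel, and then apply the substitution lemma to $\mathbf 1\{\eta\ge z\}$, which is jointly measurable in $(\eta,z)$.

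Second, for monotonicity, take $\Delta\mathcal L\le\Delta\mathcal L'$. Then $\log u-\Delta\mathcal L'\le\log u-\Delta\mathcal L$. Every conditional survival function $z\mapsto\bar F_{j,i}(z)$ is nonincreasing, either by monotonicity of conditional probability applied to the nested events $\{\Delta\eta\ge z'\}\supseteq\{\Delta\eta\ge z\}$ for $z'\le z$, or directly from the kernel representation. It follows that $\bar F_{j,i}(\log u-\Delta\mathcal L)\le\bar F_{j,i}(\log u-\Delta\mathcal L')$.

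The only delicate point is measure-theoretic. Monotonicity of a family of conditional probabilities indexed by $z\in\mathbb R$ holds only almost surely for each pair, and uncountably many pairs could in principle accumulate exceptional null sets. We avoid this by fixing the regular-conditional-distribution version from the start, so that monotonicity holds simultaneously for all $z$ on a single full-measure set. Everything else in the argument is a direct rewriting of events.
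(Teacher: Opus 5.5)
Your proposal is correct and follows the paper's own proof. Both use the same pointwise event identity from the strict monotonicity of $\exp$ (equivalently $\log$). Both then get monotonicity by composing the nonincreasing survival function $\bar F_{j,i}$ with the decreasing map $\Delta\mathcal L\mapsto\log u-\Delta\mathcal L$. Your regular-conditional-distribution argument adds rigor the paper leaves implicit in its one-line ``taking conditional probabilities'' step. It justifies plugging the $\mathcal F_{i-1}$-measurable threshold into $\bar F_{j,i}$, and it makes monotonicity hold for all $z$ on a single full-measure set.
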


\begin{proof}
By definition, $\hat\rho_{i,j}=\exp(\Delta\mathcal L_{i,j}+\Delta\eta_{i,j})$.
Since $\exp(\cdot)$ is strictly increasing,
\[
\{\hat\rho_{i,j}\ge u\}
\iff
\{\Delta\mathcal L_{i,j}+\Delta\eta_{i,j}\ge \log u\}
\iff
\{\Delta\eta_{i,j}\ge \log u-\Delta\mathcal L_{i,j}\}.
\]
Taking conditional probabilities given $\mathcal F_{i-1}$ yields \eqref{eq:app_ratio_exceed_identity}.
For monotonicity, conditional on $\mathcal F_{i-1}$ the survival function $\bar F_{j,i}$ is nonincreasing in its argument,
while $\Delta\mathcal L\mapsto \log u-\Delta\mathcal L$ is strictly decreasing; therefore their composition is nondecreasing.
\end{proof}


\begin{lemma}[Dominance from a large drift-maximizer ratio via a moment bound]
\label{lem:dom_from_moment}
Fix an inner step $i$ and let $j^\dagger\in\arg\max_{j\in\mathcal N}\Delta\mathcal L_{i,j}$.
Fix any $\lambda\in[0,1)$ and define the (step-$i$) residual vector
\[
r_i := \frac{1}{G}\sum_{j\neq j^\dagger} m_{i,j} g_{i,j}.
\]
Assume Conditions (C1)--(C5).
Define
\begin{equation}
\label{eq:u0_def}
u_0 := \frac{2BW}{\lambda a_0 b_0}.
\end{equation}
Then for any $u\ge u_0$,
\begin{equation}
\label{eq:dom_prob_half}
\mathbb{P}\!\left(
\|r_i\|\le \lambda\,\frac{1}{G}\,u\,a_0 b_0
\ \Big|\ \mathcal F_{i-1},\ \hat\rho_{i,j^\dagger}\ge u
\right)\ \ge\ \frac{1}{2},
\qquad \text{a.s.}
\end{equation}
Moreover, on the event $\{\hat\rho_{i,j^\dagger}\ge u\}$ with $u\ge 1+\epsilon$, since $j^\dagger\in\mathcal N$ we have
$m_{i,j^\dagger}=\hat\rho_{i,j^\dagger}$ and thus
\begin{equation}
\label{eq:dom_decomp_deterministic}
\widehat g_{\mathrm{GRPO},i}
=
-\frac{1}{G}\,\hat\rho_{i,j^\dagger}\,|\widehat A_{j^\dagger}|\,h_{i,j^\dagger} + r_i.
\end{equation}
\end{lemma}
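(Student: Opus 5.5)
The argument splits into a probabilistic part, the bound \eqref{eq:dom_prob_half}, and a deterministic part, the decomposition \eqref{eq:dom_decomp_deterministic}. I will do the decomposition first, since it is pure bookkeeping. On the event $\{\hat\rho_{i,j^\dagger}\ge u\}$ with $u\ge 1+\epsilon>1-\epsilon$, the index $j^\dagger$ lies in $\mathcal N$. Hence $\widehat A_{j^\dagger}\le -a_0<0$, and by \eqref{eq:def_effective_mij} we get $m_{i,j^\dagger}=\max(\hat\rho_{i,j^\dagger},1-\epsilon)=\hat\rho_{i,j^\dagger}$. Writing $g_{i,j^\dagger}=\widehat A_{j^\dagger}h_{i,j^\dagger}=-|\widehat A_{j^\dagger}|h_{i,j^\dagger}$ and separating the $j^\dagger$ term from the sum in \eqref{eq:grpo_effective_weight_form} gives \eqref{eq:dom_decomp_deterministic} immediately.

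For the probability bound, I first bound the residual crudely. By (C1), $\|r_i\|\le \frac{B}{G}\sum_{j\neq j^\dagger}m_{i,j}$, where all $m_{i,j}>0$. The event $\|r_i\|\le \lambda u a_0b_0/G$ is therefore implied by
\[
\Sigma_i:=\sum_{j\neq j^\dagger}m_{i,j}\le \frac{\lambda u a_0 b_0}{B}.
\]

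Next I remove the conditioning on $\hat\rho_{i,j^\dagger}\ge u$. By (C3), given $\mathcal F_{i-1}$ the weights $m_{i,j}$ for $j\neq j^\dagger$ are independent of $\hat\rho_{i,j^\dagger}$. Note that $j^\dagger$ is $\mathcal F_{i-1}$-measurable, because the drifts are. So conditioning additionally on $\{\hat\rho_{i,j^\dagger}\ge u\}$ leaves the conditional law of $\Sigma_i$ unchanged. This conditioning is well-defined: by (C2) and Lemma~\ref{lem:app_ratio_exceed}, the event has conditional probability $\bar F_{j^\dagger,i}(\log u-\Delta\mathcal L_{i,j^\dagger})>0$ a.s.

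I then apply Markov's inequality together with (C5):
\[
\mathbb P\!\left(\Sigma_i>\tfrac{\lambda u a_0b_0}{B}\,\Big|\,\mathcal F_{i-1}\right)\le \frac{BW}{\lambda u a_0 b_0}\le \frac{BW}{\lambda u_0 a_0b_0}=\frac12
\]
for $u\ge u_0$. This yields \eqref{eq:dom_prob_half}.

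The case $\lambda=0$ needs a remark. There $u_0=\infty$ and the claim is vacuous, so I will treat $\lambda\in(0,1)$ as the substantive case.

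The main obstacle is justifying the independence step rigorously. The index $j^\dagger$ must be fixed before the step-$i$ noise is drawn, and (C3) has to be read as giving conditional independence of $\hat\rho_{i,j^\dagger}$ from $\{m_{i,j}\}_{j\neq j^\dagger}$. I would make this explicit by writing the conditional probability as a ratio and factorizing the numerator.
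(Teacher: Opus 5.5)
Your proposal is correct and follows the paper's proof essentially step for step. You bound $\|r_i\|$ by $\frac{B}{G}\sum_{j\neq j^\dagger}m_{i,j}$ via (C1), use (C3) to drop the conditioning on $\{\hat\rho_{i,j^\dagger}\ge u\}$, apply Markov's inequality with (C5) at threshold $\lambda u a_0 b_0/B$, and read off the decomposition from $m_{i,j^\dagger}=\max(\hat\rho_{i,j^\dagger},1-\epsilon)=\hat\rho_{i,j^\dagger}$. Your extra remarks are points the paper leaves implicit, and they only tighten the argument: that $j^\dagger$ is $\mathcal F_{i-1}$-measurable, that the conditioning event has positive probability by (C2), and that $\lambda=0$ makes $u_0$ degenerate.
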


\begin{proof}
First, by (C1),
\[
\|r_i\|
=
\left\|\frac{1}{G}\sum_{j\neq j^\dagger} m_{i,j}g_{i,j}\right\|
\le
\frac{1}{G}\sum_{j\neq j^\dagger} m_{i,j}\|g_{i,j}\|
\le
\frac{B}{G}\sum_{j\neq j^\dagger} m_{i,j}.
\]
Therefore, the event
\[
\left\{\sum_{j\neq j^\dagger} m_{i,j} \le \frac{\lambda}{B}\,u\,a_0 b_0\right\}
\]
implies $\|r_i\|\le \lambda \frac{1}{G}u a_0 b_0$.

Next, by (C3), conditional on $\mathcal F_{i-1}$ the collection
$\{m_{i,j}\}_{j\neq j^\dagger}$ is independent of $\hat\rho_{i,j^\dagger}$, hence independent of the event
$\{\hat\rho_{i,j^\dagger}\ge u\}$.
Thus, for any threshold $t>0$,
\[
\mathbb{P}\!\left(\sum_{j\neq j^\dagger} m_{i,j} > t\ \Big|\ \mathcal F_{i-1},\ \hat\rho_{i,j^\dagger}\ge u\right)
=
\mathbb{P}\!\left(\sum_{j\neq j^\dagger} m_{i,j} > t\ \Big|\ \mathcal F_{i-1}\right).
\]
Applying Markov's inequality and (C5) yields
\[
\mathbb{P}\!\left(\sum_{j\neq j^\dagger} m_{i,j} > t\ \Big|\ \mathcal F_{i-1}\right)
\le
\frac{\mathbb{E}\!\left[\sum_{j\neq j^\dagger} m_{i,j}\mid \mathcal F_{i-1}\right]}{t}
\le
\frac{W}{t}.
\]
Choose $t=\frac{\lambda}{B}u a_0 b_0$. If $u\ge u_0=\frac{2BW}{\lambda a_0 b_0}$, then $W/t\le 1/2$, hence
\[
\mathbb{P}\!\left(\sum_{j\neq j^\dagger} m_{i,j} \le \frac{\lambda}{B}u a_0 b_0\ \Big|\ \mathcal F_{i-1},\ \hat\rho_{i,j^\dagger}\ge u\right)
\ge \frac{1}{2}.
\]
Combining with $\|r_i\|\le \frac{B}{G}\sum_{j\neq j^\dagger}m_{i,j}$ gives \eqref{eq:dom_prob_half}.

Finally, on $\{\hat\rho_{i,j^\dagger}\ge u\}$ with $u\ge 1+\epsilon$, since $j^\dagger\in\mathcal N$ we have
$m_{i,j^\dagger}=\max(\hat\rho_{i,j^\dagger},1-\epsilon)=\hat\rho_{i,j^\dagger}$.
Also $g_{i,j^\dagger}=\widehat A_{j^\dagger}h_{i,j^\dagger}=-|\widehat A_{j^\dagger}|h_{i,j^\dagger}$.
Substituting into \eqref{eq:grpo_effective_weight_form} yields \eqref{eq:dom_decomp_deterministic}.
\end{proof}


\begin{lemma}[Dominance implies a gradient spike]
\label{lem:app_dom_implies_spike_new}
Fix an inner step $i$ and let $j^\dagger$ be as above.
Assume (C1) and (C4).
On the event $\{\hat\rho_{i,j^\dagger}\ge u\}$ with $u\ge 1+\epsilon$, and on any event where
\begin{equation}
\label{eq:residual_small_event}
\|r_i\|\le \lambda\,\frac{1}{G}\,u\,a_0 b_0,
\end{equation}
we have
\[
\|\widehat g_{\mathrm{GRPO},i}\|
\ge
(1-\lambda)\,\frac{1}{G}\,u\,a_0 b_0.
\]
In particular, if
\begin{equation}
\label{eq:u_implies_H}
u \ \ge\ \frac{GH}{(1-\lambda)a_0 b_0},
\end{equation}
then $\|\widehat g_{\mathrm{GRPO},i}\|\ge H$ holds on the same event.
\end{lemma}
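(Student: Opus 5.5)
The plan is to start from the deterministic decomposition \eqref{eq:dom_decomp_deterministic} of Lemma~\ref{lem:dom_from_moment}. It is valid on the event $\{\hat\rho_{i,j^\dagger}\ge u\}$ with $u\ge 1+\epsilon$, so that $m_{i,j^\dagger}=\hat\rho_{i,j^\dagger}$ is unclipped. On this event the decomposition writes
\[
\widehat g_{\mathrm{GRPO},i}=v_i+r_i,\qquad v_i:=-\frac{1}{G}\,\hat\rho_{i,j^\dagger}\,|\widehat A_{j^\dagger}|\,h_{i,j^\dagger}.
\]
The argument is then a reverse triangle inequality: $\|\widehat g_{\mathrm{GRPO},i}\|\ge \|v_i\|-\|r_i\|$.

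\emph{Lower bound on the dominant term.} Since $j^\dagger\in\mathcal N$, we have $|\widehat A_{j^\dagger}|\ge a_0$. Condition (C4) gives $\|h_{i,j^\dagger}\|\ge b_0$. On the event $\hat\rho_{i,j^\dagger}\ge u$ these combine to
\[
\|v_i\|\ge \frac{1}{G}\,u\,a_0b_0.
\]
This step uses only nonnegativity of all factors and is deterministic on the stated event.

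\emph{Subtracting the residual.} On the additional event \eqref{eq:residual_small_event}, $\|r_i\|\le \lambda\frac{1}{G}ua_0b_0$. Subtracting this from the previous bound gives
\[
\|\widehat g_{\mathrm{GRPO},i}\|\ge (1-\lambda)\frac{1}{G}ua_0b_0,
\]
and since $\lambda<1$ this bound is positive. For the ``in particular'' clause, substituting $u\ge GH/((1-\lambda)a_0b_0)$ makes the right-hand side at least $H$.

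I expect no real obstacle, since the argument is purely deterministic on the intersection of the two events and (C1) is not even needed beyond its role in defining $r_i$. The only point needing care is confirming that the decomposition is legitimate. This requires $u\ge 1+\epsilon$, so that the negative-advantage sample escapes the upper clip and its multiplier is exactly $\hat\rho_{i,j^\dagger}$, and the sign convention $g_{i,j^\dagger}=-|\widehat A_{j^\dagger}|h_{i,j^\dagger}$. Both were already verified in Lemma~\ref{lem:dom_from_moment}.
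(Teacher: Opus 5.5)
Your proposal is correct and follows essentially the same route as the paper. Both arguments invoke the decomposition \eqref{eq:dom_decomp_deterministic} from Lemma~\ref{lem:dom_from_moment}, apply the reverse triangle inequality, lower-bound the dominant term by $\frac{1}{G}ua_0b_0$ using $|\widehat A_{j^\dagger}|\ge a_0$, (C4), and $\hat\rho_{i,j^\dagger}\ge u$, and then subtract the residual bound; the ``in particular'' clause is the same direct substitution.
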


\begin{proof}
On $\{\hat\rho_{i,j^\dagger}\ge u\}$ with $u\ge 1+\epsilon$, Lemma~\ref{lem:dom_from_moment} gives the decomposition
\[
\widehat g_{\mathrm{GRPO},i}
=
-\frac{1}{G}\,\hat\rho_{i,j^\dagger}\,|\widehat A_{j^\dagger}|\,h_{i,j^\dagger} + r_i.
\]
Apply the reverse triangle inequality:
\[
\|\widehat g_{\mathrm{GRPO},i}\|
\ge
\frac{1}{G}\hat\rho_{i,j^\dagger}|\widehat A_{j^\dagger}|\|h_{i,j^\dagger}\| - \|r_i\|.
\]
Since $j^\dagger\in\mathcal N$ implies $|\widehat A_{j^\dagger}|\ge a_0$, and (C4) gives
$\|h_{i,j^\dagger}\|\ge b_0$, and $\hat\rho_{i,j^\dagger}\ge u$, we obtain
\[
\frac{1}{G}\hat\rho_{i,j^\dagger}|\widehat A_{j^\dagger}|\|h_{i,j^\dagger}\|
\ge
\frac{1}{G}u a_0 b_0.
\]
Together with \eqref{eq:residual_small_event} this yields
\[
\|\widehat g_{\mathrm{GRPO},i}\|
\ge
\frac{1}{G}u a_0 b_0 - \lambda\frac{1}{G}u a_0 b_0
=
(1-\lambda)\frac{1}{G}u a_0 b_0.
\]
If \eqref{eq:u_implies_H} holds, then the right-hand side is at least $H$.
\end{proof}


\begin{lemma}[A drift-monotone lower bound on spike probability]
\label{lem:app_spike_prob_lower_new}
Fix a step $i$ and a spike threshold $H>0$.
Let $j^\dagger\in\arg\max_{j\in\mathcal N}\Delta\mathcal L_{i,j}$ so that $\Delta\mathcal L_{i,j^\dagger}=D_i$.
Define $u_H$ as in \eqref{eq:core_uH_def}.
Assume Conditions (C1)--(C5).
Then, almost surely,
\begin{equation}
\label{eq:spike_prob_bound_new}
\mathbb{P}\!\big(\|\widehat g_{\mathrm{GRPO},i}\|\ge H \,\big|\, \mathcal F_{i-1}\big)
\ \ge\
\frac{1}{2}\cdot \mathbb{P}\!\big(\hat\rho_{i,j^\dagger}\ge u_H \,\big|\, \mathcal F_{i-1}\big).
\end{equation}
Moreover, by Lemma~\ref{lem:app_ratio_exceed} and (C2),
\begin{equation}
\label{eq:ratio_tail_bound_new}
\mathbb{P}\!\big(\hat\rho_{i,j^\dagger}\ge u_H \,\big|\, \mathcal F_{i-1}\big)
=
\bar F_{j^\dagger,i}\!\big(\log u_H - D_i\big)
\ge
\bar F\!\big(\log u_H - D_i\big),
\end{equation}
and the right-hand side is nondecreasing in $D_i$.
\end{lemma}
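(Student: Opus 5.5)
The plan is to combine Lemma~\ref{lem:dom_from_moment} and Lemma~\ref{lem:app_dom_implies_spike_new} at the single threshold $u=u_H$, then convert the resulting ratio-exceedance probability into the tail envelope using Lemma~\ref{lem:app_ratio_exceed} and (C2). By definition \eqref{eq:core_uH_def}, $u_H$ satisfies all three requirements at once: $u_H\ge 1+\epsilon$, so the decomposition \eqref{eq:dom_decomp_deterministic} applies; $u_H\ge u_0$, so the moment-based dominance bound \eqref{eq:dom_prob_half} applies; and $u_H\ge GH/((1-\lambda)a_0b_0)$, so \eqref{eq:u_implies_H} holds.

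First I will introduce the events $E_1=\{\hat\rho_{i,j^\dagger}\ge u_H\}$ and $E_2=\{\|r_i\|\le \lambda\frac{1}{G}u_H a_0 b_0\}$. Lemma~\ref{lem:app_dom_implies_spike_new} says that on $E_1\cap E_2$ we have $\|\widehat g_{\mathrm{GRPO},i}\|\ge H$. Monotonicity of conditional probability then gives
\[
\mathbb{P}\big(\|\widehat g_{\mathrm{GRPO},i}\|\ge H\mid\mathcal F_{i-1}\big)\ge \mathbb{P}(E_1\cap E_2\mid\mathcal F_{i-1})=\mathbb{P}(E_2\mid \mathcal F_{i-1},E_1)\,\mathbb{P}(E_1\mid\mathcal F_{i-1}).
\]
The first factor is at least $1/2$ by \eqref{eq:dom_prob_half}, which yields \eqref{eq:spike_prob_bound_new}.

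The step needing most care is this conditioning. Conditioning on $E_1$ requires $\mathbb{P}(E_1\mid\mathcal F_{i-1})>0$. This holds almost surely because (C2) gives $\bar F_{j^\dagger,i}>0$ everywhere. If one prefers to avoid the issue, the product can be written directly: by (C3), $E_2$ depends only on $\{m_{i,j}\}_{j\ne j^\dagger}$. Here I must note that $r_i$ also contains the deterministic, $\mathcal F_{i-1}$-measurable $g_{i,j}$, so $E_2$ is still independent of $\hat\rho_{i,j^\dagger}$ given $\mathcal F_{i-1}$. Hence $\mathbb{P}(E_1\cap E_2\mid\mathcal F_{i-1})=\mathbb{P}(E_1\mid\mathcal F_{i-1})\mathbb{P}(E_2\mid\mathcal F_{i-1})$. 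Markov's inequality with (C5) then bounds the second factor below by $1/2$ directly. A further subtlety is that $j^\dagger$ is $\mathcal F_{i-1}$-measurable, since the drifts are; measurable tie-breaking in the argmax keeps it a legitimate random index.

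For \eqref{eq:ratio_tail_bound_new}, I will apply Lemma~\ref{lem:app_ratio_exceed} with $j=j^\dagger$ and $u=u_H$, using $\Delta\mathcal L_{i,j^\dagger}=D_i$. This gives the identity $\mathbb{P}(E_1\mid\mathcal F_{i-1})=\bar F_{j^\dagger,i}(\log u_H-D_i)$. The envelope in (C2), valid since $j^\dagger\in\mathcal N$, gives the lower bound $\bar F(\log u_H-D_i)$. Since $u_H$ is deterministic and independent of $D_i$, and $\bar F$ is nonincreasing, the map $D_i\mapsto\bar F(\log u_H-D_i)$ is nondecreasing. This completes the plan.
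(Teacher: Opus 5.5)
Your proposal is correct and follows the paper's proof essentially step for step. You intersect $\{\hat\rho_{i,j^\dagger}\ge u_H\}$ with the small-residual event, apply Lemma~\ref{lem:app_dom_implies_spike_new} on that intersection, factor the probability using Lemma~\ref{lem:dom_from_moment} to obtain the $\tfrac12$, and read off the tail bound and its monotonicity in $D_i$ from Lemma~\ref{lem:app_ratio_exceed} and (C2). Your extra remarks are sound refinements that the paper leaves implicit: the conditioning event has positive probability by (C2), the product form follows directly from (C3), and $j^\dagger$ is $\mathcal F_{i-1}$-measurable.
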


\begin{proof}
Work conditionally on $\mathcal F_{i-1}$.
Since $u_H\ge 1+\epsilon$ by definition, on the event $\{\hat\rho_{i,j^\dagger}\ge u_H\}$ we have $m_{i,j^\dagger}=\hat\rho_{i,j^\dagger}$.
By Lemma~\ref{lem:dom_from_moment} with $u=u_H$, we have
\[
\mathbb{P}\!\left(\|r_i\|\le \lambda \frac{1}{G}u_H a_0 b_0 \ \Big|\ \mathcal F_{i-1},\ \hat\rho_{i,j^\dagger}\ge u_H\right)\ge \frac12.
\]
On the intersection of $\{\hat\rho_{i,j^\dagger}\ge u_H\}$ and $\{\|r_i\|\le \lambda \frac{1}{G}u_H a_0 b_0\}$,
Lemma~\ref{lem:app_dom_implies_spike_new} implies $\|\widehat g_{\mathrm{GRPO},i}\|\ge H$ because $u_H\ge GH/((1-\lambda)a_0 b_0)$.
Therefore,
\begin{align*}
\mathbb{P}\!\big(\|\widehat g_{\mathrm{GRPO},i}\|\ge H \,\big|\, \mathcal F_{i-1}\big)
&\ge
\mathbb{P}\!\big(\hat\rho_{i,j^\dagger}\ge u_H,\ \|r_i\|\le \lambda \tfrac{1}{G}u_H a_0 b_0 \,\big|\, \mathcal F_{i-1}\big)\\
&=
\mathbb{P}\!\big(\hat\rho_{i,j^\dagger}\ge u_H \,\big|\, \mathcal F_{i-1}\big)\cdot
\mathbb{P}\!\left(\|r_i\|\le \lambda \tfrac{1}{G}u_H a_0 b_0 \ \Big|\ \mathcal F_{i-1},\ \hat\rho_{i,j^\dagger}\ge u_H\right)\\
&\ge
\frac12\cdot \mathbb{P}\!\big(\hat\rho_{i,j^\dagger}\ge u_H \,\big|\, \mathcal F_{i-1}\big),
\end{align*}
which proves \eqref{eq:spike_prob_bound_new}. The tail identity and lower bound \eqref{eq:ratio_tail_bound_new} follow from
Lemma~\ref{lem:app_ratio_exceed} with $\Delta\mathcal L_{i,j^\dagger}=D_i$, and (C2).
Monotonicity in $D_i$ follows from Lemma~\ref{lem:app_ratio_exceed}.
\end{proof}


\begin{lemma}[Quadratic remainder for $L$-smooth functions]
\label{lem:app_smooth_quad}
Let $f:\mathbb{R}^d\to\mathbb{R}$ be differentiable and $L$-smooth on the segment
$\{\theta+t(\theta'-\theta):t\in[0,1]\}$.
Then
\[
f(\theta') \le f(\theta)+\langle\nabla f(\theta),\theta'-\theta\rangle+\frac{L}{2}\|\theta'-\theta\|^2,
\qquad
f(\theta') \ge f(\theta)+\langle\nabla f(\theta),\theta'-\theta\rangle-\frac{L}{2}\|\theta'-\theta\|^2.
\]
\end{lemma}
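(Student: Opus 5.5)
The claim is the standard two-sided descent-lemma sandwich, so I would derive it from the fundamental theorem of calculus along the segment, using only $L$-smoothness (i.e., $\|\nabla f(a)-\nabla f(b)\|\le L\|a-b\|$ for $a,b$ on the segment). Set $d:=\theta'-\theta$ and $\phi(t):=f(\theta+td)$ for $t\in[0,1]$. Since $f$ is differentiable, $\phi$ is differentiable with $\phi'(t)=\langle\nabla f(\theta+td),d\rangle$.

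The first step is to write
\[
f(\theta')-f(\theta)-\langle\nabla f(\theta),d\rangle=\int_0^1\big\langle\nabla f(\theta+td)-\nabla f(\theta),\,d\big\rangle\,dt .
\]
This requires $\phi'$ to be integrable. Lipschitz continuity of $\nabla f$ along the segment makes $\phi'$ continuous, so the FTC applies.

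The second step applies Cauchy–Schwarz and the Lipschitz bound to the integrand:
\[
\big|\langle\nabla f(\theta+td)-\nabla f(\theta),d\rangle\big|\le L\,t\,\|d\|^2 .
\]
Integrating over $t\in[0,1]$ then gives
\[
\big|f(\theta')-f(\theta)-\langle\nabla f(\theta),d\rangle\big|\le \tfrac{L}{2}\|d\|^2 .
\]
Unpacking the absolute value yields both the upper and the lower inequality at once.

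There is no real obstacle here. The only point needing care is that smoothness is assumed only on the segment, so every gradient comparison must involve points $\theta+td$ on that segment, which is the case above. This lemma will later be applied to $f=\mathcal{L}_\theta(x_j)$ with $\theta'=\theta_{i+1}$ to obtain the drift bounds of Theorem~\ref{thm:grpo_instability_loop_core}.
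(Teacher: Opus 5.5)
Your proposal is correct and follows essentially the same route as the paper's proof: parametrize $\varphi(t)=f(\theta+td)$, apply the fundamental theorem of calculus, add and subtract $\nabla f(\theta)$, and bound the integrand by $Lt\|d\|^2$ via Cauchy--Schwarz and the Lipschitz gradient. Compared with the paper, you get both inequalities at once from a single absolute-value bound, and you note that Lipschitz continuity of $\nabla f$ on the segment makes $\varphi'$ continuous. That remark justifies the FTC step, which the paper uses without comment.
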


\begin{proof}
Let $d := \theta'-\theta$ and define the univariate function
\[
\varphi(t) := f(\theta + t d), \qquad t\in[0,1].
\]
Since $f$ is differentiable on the segment $\{\theta+td:t\in[0,1]\}$, $\varphi$ is differentiable and
\[
\varphi'(t) = \left\langle \nabla f(\theta + t d),\, d \right\rangle.
\]
By the fundamental theorem of calculus,
\[
f(\theta')-f(\theta) = \varphi(1)-\varphi(0)=\int_0^1 \varphi'(t)\,dt
= \int_0^1 \left\langle \nabla f(\theta + t d),\, d \right\rangle dt.
\]
Add and subtract $\nabla f(\theta)$ inside the inner product:
\[
f(\theta')-f(\theta)
=
\left\langle \nabla f(\theta),\, d \right\rangle
+
\int_0^1 \left\langle \nabla f(\theta + t d)-\nabla f(\theta),\, d \right\rangle dt.
\]
Using Cauchy--Schwarz and $L$-smoothness (i.e., $\|\nabla f(u)-\nabla f(v)\|\le L\|u-v\|$ on the segment),
for each $t\in[0,1]$ we have
\[
\Big|\left\langle \nabla f(\theta + t d)-\nabla f(\theta),\, d \right\rangle\Big|
\le
\|\nabla f(\theta + t d)-\nabla f(\theta)\|\,\|d\|
\le
L\,t\,\|d\|^2.
\]
Therefore,
\[
\int_0^1 \left\langle \nabla f(\theta + t d)-\nabla f(\theta),\, d \right\rangle dt
\le
\int_0^1 L t \|d\|^2\,dt
=
\frac{L}{2}\|d\|^2,
\]
which gives
\[
f(\theta') \le f(\theta)+\langle\nabla f(\theta),\theta'-\theta\rangle+\frac{L}{2}\|\theta'-\theta\|^2.
\]
Similarly, using
$\left\langle \nabla f(\theta + t d)-\nabla f(\theta),\, d \right\rangle \ge - L t \|d\|^2$
yields
\[
f(\theta') \ge f(\theta)+\langle\nabla f(\theta),\theta'-\theta\rangle-\frac{L}{2}\|\theta'-\theta\|^2.
\]
\end{proof}


\begin{theorem}[One-step decrease of $\mathcal L$ on a dominating sample]
\label{thm:app_component_jump}
Fix a step $i$ and an index $j^\star\in\mathcal N$.
Assume that at this realized step the group update is dominated by $j^\star$ in the sense that
\begin{equation}
\label{eq:app_dom_decomp_star}
\widehat g_{\mathrm{GRPO},i}
=
-\frac{1}{G}\,\hat\rho_{i,j^\star}\,|\widehat A_{j^\star}|\,h_{i,j^\star} + r_i^\star,
\qquad
\|r_i^\star\|\le \lambda\,\frac{1}{G}\,\hat\rho_{i,j^\star}\,|\widehat A_{j^\star}|\,\|h_{i,j^\star}\|.
\end{equation}
Define
\[
v:= h_{i,j^\star}=\nabla_\theta \mathcal L_{\theta_i}(x_{j^\star}),
\qquad
\eta := \frac{\eta_0}{G}\,\hat\rho_{i,j^\star}\,|\widehat A_{j^\star}|,
\qquad
\delta := \eta_0 r_i^\star.
\]
Then $\theta_{i+1}=\theta_i-\eta v+\delta$ and $\|\delta\|\le \lambda\eta\|v\|$.
Let $f_\star(\theta):=\mathcal L_\theta(x_{j^\star})$.
Assume $f_\star$ is $L_\star$-smooth on the realized segment $[\theta_i,\theta_{i+1}]$.
Then
\begin{equation}
\label{eq:app_component_jump_bound}
\mathcal L_{\theta_i}(x_{j^\star})-\mathcal L_{\theta_{i+1}}(x_{j^\star})
\ge
\eta\|v\|^2\left((1-\lambda)-\frac{L_\star}{2}(1+\lambda)^2\eta\right).
\end{equation}
In particular, if $\eta\le \frac{1-\lambda}{L_\star(1+\lambda)^2}$, then
\begin{equation}
\label{eq:app_component_jump_simplified}
\mathcal L_{\theta_i}(x_{j^\star})-\mathcal L_{\theta_{i+1}}(x_{j^\star})
\ge
\frac{1-\lambda}{2}\,\eta\,\|v\|^2.
\end{equation}
\end{theorem}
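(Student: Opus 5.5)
The plan is to apply the lower quadratic bound of Lemma~\ref{lem:app_smooth_quad} to $f_\star$ along the realized segment, with $\theta=\theta_i$ and $\theta'=\theta_{i+1}$. The decrease then splits into a first-order term and a curvature penalty, and each is controlled through the dominance decomposition \eqref{eq:app_dom_decomp_star}.

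\textbf{Step 1: rewrite the update and bound the perturbation.} The update rule $\theta_{i+1}=\theta_i+\eta_0\widehat g_{\mathrm{GRPO},i}$ combined with \eqref{eq:app_dom_decomp_star} gives $\theta_{i+1}-\theta_i=-\eta v+\delta$ directly. Multiplying the residual bound in \eqref{eq:app_dom_decomp_star} by $\eta_0$ gives
\[
\|\delta\|=\eta_0\|r_i^\star\|\le \lambda\,\frac{\eta_0}{G}\,\hat\rho_{i,j^\star}|\widehat A_{j^\star}|\,\|v\|=\lambda\eta\|v\|.
\]
The triangle inequality then yields $\|\theta_{i+1}-\theta_i\|\le(1+\lambda)\eta\|v\|$.

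\textbf{Step 2: apply the smoothness bound.} The upper bound of Lemma~\ref{lem:app_smooth_quad} gives
\[
f_\star(\theta_{i+1})\le f_\star(\theta_i)+\langle v,-\eta v+\delta\rangle+\tfrac{L_\star}{2}\|\theta_{i+1}-\theta_i\|^2 .
\]
Rearranging, the decrease $f_\star(\theta_i)-f_\star(\theta_{i+1})$ is at least $\eta\|v\|^2-\langle v,\delta\rangle-\tfrac{L_\star}{2}(1+\lambda)^2\eta^2\|v\|^2$. By Cauchy--Schwarz and Step~1, $\langle v,\delta\rangle\le\lambda\eta\|v\|^2$. Substituting this gives exactly \eqref{eq:app_component_jump_bound}.

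\textbf{Step 3: the simplified bound.} If $\eta\le\frac{1-\lambda}{L_\star(1+\lambda)^2}$, then $\tfrac{L_\star}{2}(1+\lambda)^2\eta\le\tfrac{1-\lambda}{2}$. The bracket in \eqref{eq:app_component_jump_bound} is then at least $\tfrac{1-\lambda}{2}$, which yields \eqref{eq:app_component_jump_simplified}.

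The only delicate point is signs. The decrease $\mathcal L_{\theta_i}-\mathcal L_{\theta_{i+1}}$ requires the \emph{upper} quadratic bound of Lemma~\ref{lem:app_smooth_quad}, not the lower one. The inner product $\langle v,\delta\rangle$ must be bounded from above, which is handled by Cauchy--Schwarz. Everything else is routine algebra.
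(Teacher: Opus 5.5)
Your proposal is correct and matches the paper's proof step for step. You apply the upper quadratic bound of Lemma~\ref{lem:app_smooth_quad} at $(\theta_i,\theta_{i+1})$, bound $\langle v,\delta\rangle\le\lambda\eta\|v\|^2$ by Cauchy--Schwarz and $\|-\eta v+\delta\|\le(1+\lambda)\eta\|v\|$ by the triangle inequality, and then use the step-size condition; you also spell out the check of $\theta_{i+1}=\theta_i-\eta v+\delta$ and $\|\delta\|\le\lambda\eta\|v\|$, which the paper leaves implicit. One slip: your opening sentence says ``lower quadratic bound,'' but Step~2 correctly uses the upper one, as your closing remark notes.
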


\begin{proof}
Apply Lemma~\ref{lem:app_smooth_quad} to $f_\star$ at $(\theta,\theta')=(\theta_i,\theta_{i+1})$:
\[
f_\star(\theta_{i+1}) \le f_\star(\theta_i)+\langle\nabla f_\star(\theta_i),\theta_{i+1}-\theta_i\rangle+\frac{L_\star}{2}\|\theta_{i+1}-\theta_i\|^2.
\]
Rearrange:
\[
f_\star(\theta_i)-f_\star(\theta_{i+1})
\ge
-\langle v,-\eta v+\delta\rangle-\frac{L_\star}{2}\|-\eta v+\delta\|^2
=
\eta\|v\|^2-\langle v,\delta\rangle-\frac{L_\star}{2}\|-\eta v+\delta\|^2.
\]
Bound $\langle v,\delta\rangle\le\|v\|\|\delta\|\le\lambda\eta\|v\|^2$.
Also $\|-\eta v+\delta\|\le \eta\|v\|+\|\delta\|\le (1+\lambda)\eta\|v\|$.
Substitute to obtain \eqref{eq:app_component_jump_bound}.
If $\eta\le \frac{1-\lambda}{L_\star(1+\lambda)^2}$, then the bracket is at least $(1-\lambda)/2$, yielding \eqref{eq:app_component_jump_simplified}.
\end{proof}


\begin{definition}[Anti-alignment and directional curvature]
\label{def:app_antialign}
Fix distinct indices $j^\star\neq j^\diamond$ and define
$x^\star:= x_{j^\star}$ and $x^\diamond:= x_{j^\diamond}$.
Let
\[
v:= \nabla_\theta \mathcal L_{\theta_i}(x^\star),
\qquad
u:= \nabla_\theta \mathcal L_{\theta_i}(x^\diamond),
\qquad
\gamma:= -\langle u,v\rangle.
\]
\end{definition}

\begin{theorem}[Cross-sample amplification with residual (proxy drift increase)]
\label{thm:app_cross_sample_amp_fixed}
Fix a step $i$ and two indices $j^\star\neq j^\diamond$ in $\mathcal N$.
Assume the outlier-dominance decomposition \eqref{eq:app_dom_decomp_star} holds on this realized step, and define
\begin{gather*}
v=\nabla_\theta \mathcal L_{\theta_i}(x_{j^\star}),
\qquad
u=\nabla_\theta \mathcal L_{\theta_i}(x_{j^\diamond}),
\qquad
\gamma=-\langle u,v\rangle>0, 
\\
\eta=\frac{\eta_0}{G}\hat\rho_{i,j^\star}\,|\widehat A_{j^\star}|,
\qquad
\theta_{i+1}=\theta_i-\eta v+\delta,
\qquad
\|\delta\|\le \lambda\eta\|v\|.
\end{gather*}
Let $f_\diamond(\theta):=\mathcal L_\theta(x_{j^\diamond})$ and assume $f_\diamond$ is $L_\diamond$-smooth on the realized segment
$[\theta_i,\theta_{i+1}]$.
If
\begin{equation}
\label{eq:app_amp_stepsize_cond_fixed}
\eta\le \frac{\gamma}{(1+\lambda)^2\,L_\diamond\|v\|^2},
\end{equation}
then on this realized step we have
\begin{equation}
\label{eq:app_amp_bound_fixed}
\mathcal L_{\theta_{i+1}}(x_{j^\diamond})-\mathcal L_{\theta_i}(x_{j^\diamond})
\ge
\eta\Big(\frac{\gamma}{2}-\lambda\|u\|\|v\|\Big),
\end{equation}
and consequently,
\begin{equation}
\label{eq:app_amp_drift_bound_fixed}
\Delta\mathcal L_{i+1,j^\diamond}
\ge
\Delta\mathcal L_{i,j^\diamond}
+\eta\Big(\frac{\gamma}{2}-\lambda\|u\|\|v\|\Big).
\end{equation}
\end{theorem}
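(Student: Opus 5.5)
The argument mirrors the proof of Theorem~\ref{thm:app_component_jump}, but uses the \emph{lower} quadratic bound of Lemma~\ref{lem:app_smooth_quad}, applied to $f_\diamond$ rather than $f_\star$. Set $d:=\theta_{i+1}-\theta_i=-\eta v+\delta$. Because $f_\diamond$ is $L_\diamond$-smooth on the realized segment, the lemma gives
\[
f_\diamond(\theta_{i+1})-f_\diamond(\theta_i)\ \ge\ \langle u,-\eta v+\delta\rangle-\frac{L_\diamond}{2}\|-\eta v+\delta\|^2 .
\]
The plan is to bound each of the three resulting terms separately.

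\textbf{First-order part.} The leading term is $\langle u,-\eta v\rangle=\eta\gamma$, by the definition $\gamma=-\langle u,v\rangle$. For the residual term, Cauchy--Schwarz together with $\|\delta\|\le\lambda\eta\|v\|$ gives $\langle u,\delta\rangle\ge-\|u\|\|\delta\|\ge-\lambda\eta\|u\|\|v\|$. Together the linear part is at least $\eta(\gamma-\lambda\|u\|\|v\|)$.

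\textbf{Quadratic part.} Exactly as in Theorem~\ref{thm:app_component_jump}, $\|-\eta v+\delta\|\le(1+\lambda)\eta\|v\|$. The penalty is therefore at most $\tfrac{L_\diamond}{2}(1+\lambda)^2\eta^2\|v\|^2$. The step-size condition \eqref{eq:app_amp_stepsize_cond_fixed} says precisely that $(1+\lambda)^2L_\diamond\eta\|v\|^2\le\gamma$, so the penalty is at most $\eta\gamma/2$. Subtracting it from the linear lower bound yields $\eta(\gamma/2-\lambda\|u\|\|v\|)$, which is \eqref{eq:app_amp_bound_fixed}.

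\textbf{Drift form.} Since $\Delta\mathcal L_{k,j}=\mathcal L_{\theta_k}(x_j)-\mathcal L_{\theta_{\mathrm{old}}}(x_j)$ and the reference term $\mathcal L_{\theta_{\mathrm{old}}}(x_{j^\diamond})$ is the same at steps $i$ and $i+1$, it cancels in the difference. Hence $\Delta\mathcal L_{i+1,j^\diamond}-\Delta\mathcal L_{i,j^\diamond}$ equals the left-hand side of \eqref{eq:app_amp_bound_fixed}, which gives \eqref{eq:app_amp_drift_bound_fixed}.

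No step here is a real obstacle. The only care needed is getting the directions right: the update moves \emph{against} $v$, and $u$ is anti-aligned with $v$, so the first-order change in $f_\diamond$ is positive. One must also apply the smoothness inequality in its lower-bound form, and check that the factor $(1+\lambda)^2$ in the step-size condition matches the norm bound on the full step.
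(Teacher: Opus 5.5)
Your proposal is correct and matches the paper's proof essentially line for line. It applies the lower-bound form of Lemma~\ref{lem:app_smooth_quad} to $f_\diamond$, bounds $\langle u,\delta\rangle\ge-\lambda\eta\|u\|\|v\|$ and $\|-\eta v+\delta\|\le(1+\lambda)\eta\|v\|$, uses the step-size condition to cap the quadratic penalty by $\eta\gamma/2$, and cancels the $\theta_{\mathrm{old}}$ reference term to get the drift form.
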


\begin{proof}
Apply Lemma~\ref{lem:app_smooth_quad} (lower bound form) to $f_\diamond$ at $(\theta,\theta')=(\theta_i,\theta_{i+1})$:
\[
f_\diamond(\theta_{i+1})
\ge
f_\diamond(\theta_i)+\langle\nabla f_\diamond(\theta_i),\theta_{i+1}-\theta_i\rangle-\frac{L_\diamond}{2}\|\theta_{i+1}-\theta_i\|^2.
\]
Substitute $\nabla f_\diamond(\theta_i)=u$ and $\theta_{i+1}-\theta_i=-\eta v+\delta$:
\[
f_\diamond(\theta_{i+1})-f_\diamond(\theta_i)
\ge
\langle u,-\eta v+\delta\rangle-\frac{L_\diamond}{2}\|-\eta v+\delta\|^2
=
\eta\gamma+\langle u,\delta\rangle-\frac{L_\diamond}{2}\|-\eta v+\delta\|^2.
\]
Use $\langle u,\delta\rangle\ge -\|u\|\|\delta\|\ge -\lambda\eta\|u\|\|v\|$ and
$\|-\eta v+\delta\|\le (1+\lambda)\eta\|v\|$ to get
\[
f_\diamond(\theta_{i+1})-f_\diamond(\theta_i)
\ge
\eta\gamma-\lambda\eta\|u\|\|v\|-\frac{L_\diamond}{2}(1+\lambda)^2\eta^2\|v\|^2.
\]
Under \eqref{eq:app_amp_stepsize_cond_fixed}, the quadratic term is at most $\frac12\eta\gamma$, yielding \eqref{eq:app_amp_bound_fixed}.
Equation \eqref{eq:app_amp_drift_bound_fixed} is just rewriting in terms of $\Delta\mathcal L$.
\end{proof}

\begin{lemma}[From amplification of one sample to an increase in $D_i$]
\label{lem:app_D_update}
Fix a step $i$ and suppose Theorem~\ref{thm:app_cross_sample_amp_fixed} applies for some $j^\star\neq j^\diamond$ in $\mathcal N$.
Define
\[
\eta=\frac{\eta_0}{G}\hat\rho_{i,j^\star}\,|\widehat A_{j^\star}|.
\]
Define the $\mathcal F_{i-1}$-measurable coefficient
\[
c_{\mathrm{amp},i}
:=
\frac{\eta_0|\widehat A_{j^\star}|}{G}
\Big(\frac{\gamma}{2}-\lambda\|u\|\|v\|\Big)
\in\mathbb{R},
\]
where $u=\nabla_\theta \mathcal L_{\theta_i}(x_{j^\diamond})$,
$v=\nabla_\theta \mathcal L_{\theta_i}(x_{j^\star})$, and $\gamma=-\langle u,v\rangle>0$.
Then on this realized step,
\begin{equation}
\label{eq:app_D_increment}
D_{i+1}
\ge
D_i + \big(c_{\mathrm{amp},i}\hat\rho_{i,j^\star} - S_i\big).
\end{equation}
\end{lemma}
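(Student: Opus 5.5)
The plan is to combine Theorem~\ref{thm:app_cross_sample_amp_fixed} with the definitions of $D_i$ and $S_i$ in \eqref{eq:app_Di_Si_def}. The mechanism is simple: a lower bound on the drift of the single sample $j^\diamond$ at step $i+1$ lower-bounds the maximum $D_{i+1}$. We then use $S_i$ to convert the step-$i$ drift of $j^\diamond$, which could be the smallest in $\mathcal N$, back into $D_i$.

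First, since $j^\diamond\in\mathcal N$ and $D_{i+1}=\max_{j\in\mathcal N}\Delta\mathcal L_{i+1,j}$, we have
\[
D_{i+1}\ \ge\ \Delta\mathcal L_{i+1,j^\diamond}.
\]
Second, \eqref{eq:app_amp_drift_bound_fixed} gives
\[
\Delta\mathcal L_{i+1,j^\diamond}\ \ge\ \Delta\mathcal L_{i,j^\diamond}+\eta\Big(\tfrac{\gamma}{2}-\lambda\|u\|\|v\|\Big).
\]
Substituting $\eta=\frac{\eta_0}{G}\hat\rho_{i,j^\star}|\widehat A_{j^\star}|$, the increment equals exactly $c_{\mathrm{amp},i}\hat\rho_{i,j^\star}$ by the definition of $c_{\mathrm{amp},i}$. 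This is a purely algebraic regrouping, and it is valid regardless of the sign of the bracket, since $\hat\rho_{i,j^\star}>0$ only rescales it.

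Third, the definition of $S_i$ gives
\[
\Delta\mathcal L_{i,j^\diamond}\ \ge\ \min_{j\in\mathcal N}\Delta\mathcal L_{i,j}\ =\ D_i-S_i.
\]
Chaining the three inequalities yields \eqref{eq:app_D_increment}.

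The only point needing care is the claim that $c_{\mathrm{amp},i}$ is $\mathcal F_{i-1}$-measurable. Its ingredients are $\eta_0$, $G$, $\widehat A_{j^\star}$, $\lambda$, and gradients evaluated at $\theta_i$, all of which are $\mathcal F_{i-1}$-measurable. The exception is the index $j^\star$, which is selected on the realized step. I would therefore treat $j^\star$ and $j^\diamond$ as fixed indices, as in the hypotheses of Theorem~\ref{thm:app_cross_sample_amp_fixed}, so that measurability holds for each fixed pair. Beyond this, no genuine obstacle is expected; the bound is deterministic on the realized step.
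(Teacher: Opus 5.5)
Your proposal is correct and follows the paper's proof essentially line for line: apply \eqref{eq:app_amp_drift_bound_fixed}, regroup $\eta\bigl(\frac{\gamma}{2}-\lambda\|u\|\|v\|\bigr)$ as $c_{\mathrm{amp},i}\hat\rho_{i,j^\star}$, use $D_{i+1}\ge\Delta\mathcal L_{i+1,j^\diamond}$, and use $\Delta\mathcal L_{i,j^\diamond}\ge\min_{j\in\mathcal N}\Delta\mathcal L_{i,j}=D_i-S_i$. Your remark that $j^\star$ and $j^\diamond$ should be treated as fixed indices for the measurability claim is a reasonable clarification; the paper asserts $\mathcal F_{i-1}$-measurability of $c_{\mathrm{amp},i}$ without comment.
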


\begin{proof}
By Theorem~\ref{thm:app_cross_sample_amp_fixed},
\[
\Delta\mathcal L_{i+1,j^\diamond}
\ge
\Delta\mathcal L_{i,j^\diamond}
+\eta\Big(\frac{\gamma}{2}-\lambda\|u\|\|v\|\Big)
=
\Delta\mathcal L_{i,j^\diamond}+c_{\mathrm{amp},i}\hat\rho_{i,j^\star}.
\]
Since $D_{i+1}=\max_{j\in\mathcal N}\Delta\mathcal L_{i+1,j}\ge \Delta\mathcal L_{i+1,j^\diamond}$, we have
\[
D_{i+1}\ge \Delta\mathcal L_{i,j^\diamond}+c_{\mathrm{amp},i}\hat\rho_{i,j^\star}.
\]
By definition of $S_i$ in \eqref{eq:app_Di_Si_def},
\[
\Delta\mathcal L_{i,j^\diamond}
\ge
\min_{j\in\mathcal N}\Delta\mathcal L_{i,j}
=
D_i-S_i.
\]
Substituting yields \eqref{eq:app_D_increment}.
\end{proof}

\begin{proof}[Proof of Theorem~\ref{thm:grpo_instability_loop_core}]
The spike-probability bound \eqref{eq:core_spike_prob_main_new} follows from Lemma~\ref{lem:app_spike_prob_lower_new} and (C2):
\[
\mathbb{P}\!\big(\|\widehat g_{\mathrm{GRPO},i}\|\ge H \,\big|\, \mathcal F_{i-1}\big)
\ge
\frac{1}{2}\,\bar F(\log u_H-D_i).
\]
Monotonicity in $D_i$ holds by Lemma~\ref{lem:app_ratio_exceed}.

For the one-step decrease bound on a dominating sample, consider a realized step $i$ where a negative-advantage sample
$j^\star\in\mathcal N$ with $\hat\rho_{i,j^\star}\ge 1+\epsilon$ dominates the group update in the sense of
\eqref{eq:app_dom_decomp_star} and where the local smoothness/step-size condition of
Theorem~\ref{thm:app_component_jump} holds, including \eqref{eq:app_component_jump_simplified}.
Then \eqref{eq:app_component_jump_simplified} gives
\[
\mathcal L_{\theta_i}(x_{j^\star})-\mathcal L_{\theta_{i+1}}(x_{j^\star})
\ge
\frac{1-\lambda}{2}\,\eta\,\|v\|^2
=
\frac{1-\lambda}{2}\,\frac{\eta_0}{G}\,\hat\rho_{i,j^\star}|\widehat A_{j^\star}|\,\|v\|^2,
\]
with $v=\nabla_\theta \mathcal L_{\theta_i}(x_{j^\star})$.
Thus the first inequality in \eqref{eq:core_drift_creation_main} holds with
\[
c_{\mathrm{sup},i}
:=
\frac{1-\lambda}{2}\,\eta_0\,|\widehat A_{j^\star}|\,\|v\|^2
>0.
\]

For the drift-state increment, Lemma~\ref{lem:app_D_update} gives
\[
D_{i+1}\ge D_i+\big(c_{\mathrm{amp},i}\hat\rho_{i,j^\star}-S_i\big),
\]
establishing the second inequality in \eqref{eq:core_drift_creation_main}.

Finally, if $c_{\mathrm{amp},i}\hat\rho_{i,j^\star}\ge S_i$ then $D_{i+1}\ge D_i$.
Since $D\mapsto \bar F(\log u_H-D)$ is nondecreasing (Lemma~\ref{lem:app_ratio_exceed}),
the lower bound $P_i(H)=\frac12\bar F(\log u_H-D_i)$ cannot decrease from step $i$ to step $i{+}1$ on that realized step,
i.e., \eqref{eq:Pi_monotone_formal} holds.
\end{proof}

\subsection{Proof of Theorem~\ref{thm:uclip_instability_loop}}
\label{app:uclip_instability_loop_proof}

We prove Theorem~\ref{thm:uclip_instability_loop} for the \emph{two-sided} unconditional clipping rule
$w_{i,j}=\mathrm{clip}(\hat\rho_{i,j},\,1-\epsilon,\,1+\epsilon)$ and
$\widehat g_{\mathrm{clip},i}:=\frac{1}{G}\sum_{j=1}^G w_{i,j}g_{i,j}$.

\begin{lemma}[A sufficient upper-bound dominance event under two-sided clipping]
\label{lem:ub_dom_clipped}
Fix an inner step $i$ and let $j^\dagger\in\arg\max_{j\in\mathcal N}\Delta\mathcal L_{i,j}$.
Assume (C1) and (C4).
Define the residual
\[
r_i^{\mathrm{clip}} := \frac{1}{G}\sum_{j\neq j^\dagger} w_{i,j}g_{i,j}.
\]
On any realized step where $\hat\rho_{i,j^\dagger}\ge 1+\epsilon$ and
\begin{equation}
\label{eq:ub_dom_event}
\sum_{j\neq j^\dagger} w_{i,j} \ \le\ \frac{\lambda a_0 b_0}{B}\,(1+\epsilon),
\end{equation}
we have the deterministic decomposition
\[
\widehat g_{\mathrm{clip},i}
=
-\frac{1}{G}(1+\epsilon)\,|\widehat A_{j^\dagger}|\,h_{i,j^\dagger} + r_i^{\mathrm{clip}},
\qquad
\|r_i^{\mathrm{clip}}\|
\le
\lambda\,\frac{1}{G}(1+\epsilon)\,|\widehat A_{j^\dagger}|\,\|h_{i,j^\dagger}\|.
\]
\end{lemma}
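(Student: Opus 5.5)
The plan is to mirror the proof of Lemma~\ref{lem:dom_from_moment}, with the random multiplier $\hat\rho_{i,j^\dagger}$ replaced by the clipped weight. Because of clipping this becomes purely deterministic; no probability is needed.

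\textbf{Step 1: identify the clipped weight.} On the event $\hat\rho_{i,j^\dagger}\ge 1+\epsilon$, the two-sided clip gives $w_{i,j^\dagger}=\mathrm{clip}(\hat\rho_{i,j^\dagger},1-\epsilon,1+\epsilon)=1+\epsilon$ exactly. Since $j^\dagger\in\mathcal N$, we have $\widehat A_{j^\dagger}\le -a_0<0$. Hence
\[
g_{i,j^\dagger}=\widehat A_{j^\dagger}h_{i,j^\dagger}=-|\widehat A_{j^\dagger}|h_{i,j^\dagger}.
\]

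\textbf{Step 2: split off the residual.} Separate the $j^\dagger$ term from the sum defining $\widehat g_{\mathrm{clip},i}$. This yields
\[
\widehat g_{\mathrm{clip},i}=-\tfrac{1}{G}(1+\epsilon)|\widehat A_{j^\dagger}|h_{i,j^\dagger}+r_i^{\mathrm{clip}},
\]
which is the claimed decomposition.

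\textbf{Step 3: bound the residual norm.} Use the triangle inequality, (C1), and the fact that each $w_{i,j}>0$:
\[
\|r_i^{\mathrm{clip}}\|\le \tfrac{1}{G}\sum_{j\neq j^\dagger}w_{i,j}\|g_{i,j}\|\le \tfrac{B}{G}\sum_{j\neq j^\dagger}w_{i,j}.
\]
Now insert the dominance event \eqref{eq:ub_dom_event}. This gives $\|r_i^{\mathrm{clip}}\|\le \lambda\tfrac1G(1+\epsilon)a_0b_0$.

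\textbf{Step 4: convert to the stated form.} Since $|\widehat A_{j^\dagger}|\ge a_0$ (membership in $\mathcal N$) and $\|h_{i,j^\dagger}\|\ge b_0$ by (C4), we have $a_0b_0\le|\widehat A_{j^\dagger}|\,\|h_{i,j^\dagger}\|$. This yields the stated bound
\[
\|r_i^{\mathrm{clip}}\|\le\lambda\tfrac1G(1+\epsilon)|\widehat A_{j^\dagger}|\,\|h_{i,j^\dagger}\|.
\]

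There is no real obstacle; the argument is a direct calculation. The only points needing care are the last step, where the constants $a_0,b_0$ are upgraded to the actual $|\widehat A_{j^\dagger}|$ and $\|h_{i,j^\dagger}\|$ using the correct direction of each inequality, and checking that $w_{i,j^\dagger}$ equals exactly $1+\epsilon$. The latter is what makes the decomposition match \eqref{eq:app_dom_decomp_star}, with $\hat\rho$ replaced by $1+\epsilon$, so that Theorems~\ref{thm:app_component_jump}--\ref{thm:app_cross_sample_amp_fixed} apply later.
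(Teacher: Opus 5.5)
Your proposal is correct and follows the paper's proof essentially step for step. The paper likewise sets $w_{i,j^\dagger}=1+\epsilon$ and $g_{i,j^\dagger}=-|\widehat A_{j^\dagger}|h_{i,j^\dagger}$, splits off the residual, bounds it by $\frac{B}{G}\sum_{j\neq j^\dagger}w_{i,j}$ via (C1), inserts the dominance event, and upgrades $a_0b_0$ to $|\widehat A_{j^\dagger}|\,\|h_{i,j^\dagger}\|$ using $j^\dagger\in\mathcal N$ and (C4).
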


\begin{proof}
On $\hat\rho_{i,j^\dagger}\ge 1+\epsilon$, we have $w_{i,j^\dagger}=1+\epsilon$ and
$g_{i,j^\dagger}=\widehat A_{j^\dagger}h_{i,j^\dagger}=-|\widehat A_{j^\dagger}|h_{i,j^\dagger}$.
Thus
\[
\widehat g_{\mathrm{clip},i}
=
\frac{1}{G}w_{i,j^\dagger}g_{i,j^\dagger}
+
\frac{1}{G}\sum_{j\neq j^\dagger} w_{i,j}g_{i,j}
=
-\frac{1}{G}(1+\epsilon)|\widehat A_{j^\dagger}|h_{i,j^\dagger}
+
r_i^{\mathrm{clip}}.
\]
Moreover, by (C1),
\[
\|r_i^{\mathrm{clip}}\|
\le
\frac{1}{G}\sum_{j\neq j^\dagger} w_{i,j}\|g_{i,j}\|
\le
\frac{B}{G}\sum_{j\neq j^\dagger} w_{i,j}.
\]
Under \eqref{eq:ub_dom_event}, this yields
\[
\|r_i^{\mathrm{clip}}\|
\le
\frac{B}{G}\cdot \frac{\lambda a_0 b_0}{B}(1+\epsilon)
=
\lambda\,\frac{1}{G}(1+\epsilon)a_0 b_0
\le
\lambda\,\frac{1}{G}(1+\epsilon)|\widehat A_{j^\dagger}|\|h_{i,j^\dagger}\|,
\]
since $|\widehat A_{j^\dagger}|\ge a_0$ and $\|h_{i,j^\dagger}\|\ge b_0$ by (C4).
\end{proof}

\begin{proof}[Proof of Theorem~\ref{thm:uclip_instability_loop}]
Fix an inner step $i$.
By (C1) we have $\|g_{i,j}\|\le B$ for all $j$.
Moreover, since $\hat\rho_{i,j}>0$ and $w_{i,j}=\mathrm{clip}(\hat\rho_{i,j},1-\epsilon,1+\epsilon)$,
we have $0<w_{i,j}\le 1+\epsilon$. Therefore,
\[
\|\widehat g_{\mathrm{clip},i}\|
=
\Big\|\frac{1}{G}\sum_{j=1}^G w_{i,j}\,g_{i,j}\Big\|
\le
\frac{1}{G}\sum_{j=1}^G w_{i,j}\,\|g_{i,j}\|
\le
(1+\epsilon)B,
\]
which proves the deterministic boundedness claim.

Let $j^\dagger\in\arg\max_{j\in\mathcal N}\Delta\mathcal L_{i,j}$ so that $\Delta\mathcal L_{i,j^\dagger}=D_i$.
By Lemma~\ref{lem:app_ratio_exceed} with $u=1+\epsilon$ and (C2),
\[
\mathbb{P}\!\big(\hat\rho_{i,j^\dagger}\ge 1+\epsilon \,\big|\, \mathcal F_{i-1}\big)
=
\bar F_{j^\dagger,i}\!\big(\log(1+\epsilon)-D_i\big)
\ge
\bar F\!\big(\log(1+\epsilon)-D_i\big),
\]
and the right-hand side is nondecreasing in $D_i$ by Lemma~\ref{lem:app_ratio_exceed},
which establishes \eqref{eq:core_upper_sat}.

Finally, on any realized step where the sufficient dominance event in Lemma~\ref{lem:ub_dom_clipped} holds and where the
local smoothness/geometry conditions required by Theorem~\ref{thm:app_cross_sample_amp_fixed}
(with the effective step size $\eta=\frac{\eta_0}{G}(1+\epsilon)|\widehat A_{j^\dagger}|$) hold for some $j^\diamond\in\mathcal N\setminus\{j^\dagger\}$,
the same argument as Lemma~\ref{lem:app_D_update} yields
\[
D_{i+1}\ \ge\ D_i + \big(c_{\mathrm{amp},i}(1+\epsilon)-S_i\big).
\]
This completes the proof.
\end{proof}

\subsection{Proof of Theorem~\ref{thm:selfnorm_convex_hull}}
\label{app:selfnorm_convex_hull_proof}

\begin{proof}[Proof of Theorem~\ref{thm:selfnorm_convex_hull}]
Fix an inner step $i$ and define the two-sided clipped weights
\[
w_{i,j} := \mathrm{clip}(\hat\rho_{i,j},\,1-\epsilon,\,1+\epsilon),
\qquad j=1,\ldots,G.
\]
Since $\hat\rho_{i,j}>0$, we have $w_{i,j}>0$ and thus $\sum_{k=1}^G w_{i,k}>0$.
Define
\[
\widehat g_{\mathrm{sn},i}
:=
\frac{\sum_{j=1}^G w_{i,j}\,g_{i,j}}{\sum_{j=1}^G w_{i,j}}.
\]
Let $\alpha_{i,j} := w_{i,j}/\sum_{k=1}^G w_{i,k}$. Then $\alpha_{i,j}\ge 0$ and $\sum_{j=1}^G \alpha_{i,j}=1$, hence
\[
\widehat g_{\mathrm{sn},i}
=
\sum_{j=1}^G \alpha_{i,j} g_{i,j}
\in
\operatorname{conv}\{g_{i,1},\ldots,g_{i,G}\}.
\]
By (C1), $\|g_{i,j}\|\le B$ for all $j$, therefore
\[
\|\widehat g_{\mathrm{sn},i}\|
\le
\sum_{j=1}^G \alpha_{i,j} \|g_{i,j}\|
\le
\sum_{j=1}^G \alpha_{i,j} B
=
B.
\]
This proves the deterministic bound and the convex-hull property.
\end{proof}

\newpage
\section{Experimental Details}
\label{app:exp}

\subsection{Training and Hyperparameter Setup}
\label{app:training_setup}

We provide detailed configurations for our experiments on both Full-Attention Diffusion and Block Diffusion architectures to ensure reproducibility. All experiments were conducted using the StableDRL framework, with hyperparameters chosen to isolate the contribution of our stability mechanisms.

\subsubsection{Full-Attention Diffusion (LLaDA-8B-Instruct)}

We fine-tune the \texttt{LLaDA-8B-Instruct} model using iterative decoding with a generation length of 256 tokens and a block size of 32. Optimization is performed using AdamW with a learning rate of $1.0 \times 10^{-6}$ and a linear decay schedule over 2,000 steps. Crucially, we enable Self-Normalized Importance Sampling (SNIS) with an unconditional importance weight clipping threshold of 5.0.

Table~\ref{tab:hyperparams_full_attn} summarizes the complete hyperparameter configuration.

\begin{table}[h]
    \centering
    \caption{Hyperparameter Configuration for Full-Attention Diffusion (LLaDA-8B-Instruct)}
    \label{tab:hyperparams_full_attn}
    \begin{tabular}{l|l}
        \toprule
        \textbf{Category} & \textbf{Value} \\
        \midrule
        \multicolumn{2}{l}{\textit{Model \& Initialization}} \\
        Base Model & LLaDA-8B-Instruct \\
        Precision & bfloat16 \\
        Activation Checkpointing & Whole Layer \\
        \midrule
        \multicolumn{2}{l}{\textit{Generation (Rollout)}} \\
        Decoding Strategy & Iterative (128 steps) \\
        Generation Length & 256 tokens \\
        Block Size & 32 \\
        Temperature & 0.9 \\
        Rollout Scale & 8 generations $\times$ 2 repeats \\
        \midrule
        \multicolumn{2}{l}{\textit{Training \& Optimization}} \\
        Optimizer & AdamW ($\beta_1=0.9, \beta_2=0.99, \lambda=0.1$) \\
        Learning Rate & $1.0 \times 10^{-6}$ (Linear Decay) \\
        Batch Size & 1 per GPU (Grad Accumulation = 4) \\
        Gradient Clipping & 0.2 \\
        Inner Updates & 2 per rollout cycle \\
        Total Steps & 2000 \\
        \midrule
        \multicolumn{2}{l}{\textit{StableDRL Specifics}} \\
        Loss Function & Sandwiched ($\beta=1.5, \omega=0.5$) \\
        ELBO Estimation & 2 MC samples (perturbation $p=0.15$) \\
        Stabilization & SN enabled, Clip Threshold = 5.0 \\
        \bottomrule
    \end{tabular}
\end{table}

\subsubsection{Block Diffusion (SDAR-8B-Chat)}

We instantiate StableDRL on the \texttt{SDAR-8B-Chat} architecture, following the conventions of TraceRL extended with our stability mechanisms. We utilize dynamic sampling with a threshold of $\tau=0.9$ and a temperature of 1.0. The model is trained using AdamW with a learning rate of $1.0 \times 10^{-6}$ and no weight decay. To stabilize the group-wise updates, we employ Group-wise SNIS with an asymmetric log-clipping threshold of 5.0 (log-space). We also enable mask resampling in the trainer to maintain valid drift coupling during optimization.

Table~\ref{tab:hyperparams_block_diff} details the configuration for the block diffusion experiments.

\begin{table}[h]
    \centering
    \caption{Hyperparameter Configuration for Block Diffusion (SDAR-8B-Chat)}
    \label{tab:hyperparams_block_diff}
    \begin{tabular}{l|l}
        \toprule
        \textbf{Category} & \textbf{Value} \\
        \midrule
        \multicolumn{2}{l}{\textit{Model \& Initialization}} \\
        Base Model & JetLM/SDAR-8B-Chat \\
        Architecture & Block Diffusion ($B=4$) \\
        Precision & bf16 (TF32 enabled) \\
        \midrule
        \multicolumn{2}{l}{\textit{Generation (Rollout)}} \\
        Sampling Strategy & Dynamic ($\tau=0.9, T=1.0$) \\
        Denoising Steps & 4 per block \\
        Rollout Scale & 16 responses per task \\
        \midrule
        \multicolumn{2}{l}{\textit{Training \& Optimization}} \\
        Optimizer & AdamW ($lr=1\text{e-}6, \beta_2=0.999$, no decay) \\
        Scheduler & Linear Decay \\
        Micro Batch Size & 1 (Gradient Accumulation = 2) \\
        Gradient Clipping & 1.0 \\
        \midrule
        \multicolumn{2}{l}{\textit{StableDRL Specifics}} \\
        Advantage Mode & Raw Centered \\
        Importance Sampling & Group-wise SNI \\
        Clip Threshold & 5.0 (log-space) \\
        Mask Resampling & Enabled \\
        \bottomrule
    \end{tabular}
\end{table}

\subsection{Details of the Exploding Importance Weight Protocol}
\label{app:stress_test}

To validate the robustness of StableDRL against the heavy-tailed noise characteristic of dLLMs, we use a controlled adversarial protocol that artificially inflates the variance of the importance ratio $\hat{\rho}$.

\subsection{Mechanism: Asymmetric Masking}
The importance ratio is estimated as $\hat{\rho} = \exp(\hat{\mathcal{L}}_\theta - \hat{\mathcal{L}}_{\text{old}})$. We induce "exploding" weights by breaking the symmetry of the Monte Carlo estimation for a random 70\% subset of the batch (the "stressed" samples). We employ two decoupled masking policies:
\begin{enumerate}
    \item \textbf{Numerator ($\hat{\mathcal{L}}_\theta$) $\rightarrow$ "Easy" Masking:} We bias masking towards high-confidence regions (e.g., the sequence tail) and select the \emph{minimum} number of masked tokens ($t_{\min}$). This yields a tighter, optimistic ELBO estimate.
    \item \textbf{Denominator ($\hat{\mathcal{L}}_{\text{old}}$) $\rightarrow$ "Hard" Masking:} We bias masking towards low-confidence regions (e.g., the sequence head) and select the \emph{maximum} number of masked tokens ($t_{\max}$). This yields a looser, pessimistic ELBO estimate.
\end{enumerate}
This systematic gap ensures that $\hat{\mathcal{L}}_\theta \gg \hat{\mathcal{L}}_{\text{old}}$, driving $\hat{\rho} \to \infty$ purely due to estimation variance, independent of the actual policy probability.

\subsection{Implementation}
We operationalize "Easy" vs. "Hard" based on the diffusion formulation (Block vs. Random Token). Algorithm~\ref{alg:exploding_iw} details the generation process.

\begin{algorithm}[h]
\caption{Adversarial Generation of Exploding Importance Weights}
\label{alg:exploding_iw}
\begin{algorithmic}[1]
\REQUIRE Batch $X$, Group size $G$, Coverage fraction $\gamma=0.7$
\REQUIRE Bias Strength $\beta=6.0$ (for Random), Masking Policy $\mathcal{P} \in \{\text{Block}, \text{Random}\}$
\FOR{each group $g$ in Batch}
    \STATE Select indices $S_g \subset g$ with size $\lceil \gamma \cdot G \rceil$ to stress.
    \FOR{each sample $x_i$ in group $g$}
        \IF{$i \in S_g$}
            \STATE \textcolor{gray}{// 1. Numerator: "Easy" Masking (Tail Bias + Min Count)}
            \IF{$\mathcal{P}$ is Block}
                \STATE $M_{\text{num}} \gets$ Mask indices of the \textbf{Last Block} (Max Context)
            \ELSE
                \STATE $W[k] \propto \exp(+\beta \cdot k/L)$ \COMMENT{Tail Position Bias}
                \STATE $M_{\text{num}} \sim \text{Multinomial}(W)$
                \STATE $\text{Count}(M_{\text{num}}) \gets t_{\min}$ \COMMENT{Min Masked Tokens}
            \ENDIF
            
            \STATE \textcolor{gray}{// 2. Denominator: "Hard" Masking (Head Bias + Max Count)}
            \IF{$\mathcal{P}$ is Block}
                \STATE $M_{\text{den}} \gets$ Mask indices of the \textbf{First Block} (Min Context)
            \ELSE
                \STATE $W[k] \propto \exp(-\beta \cdot k/L)$ \COMMENT{Head Position Bias}
                \STATE $M_{\text{den}} \sim \text{Multinomial}(W)$
                \STATE $\text{Count}(M_{\text{den}}) \gets t_{\max}$ \COMMENT{Max Masked Tokens}
            \ENDIF
        \ELSE
            \STATE \textcolor{gray}{// Control: Standard Uniform Masking}
            \STATE $M_{\text{num}}, M_{\text{den}} \sim \text{UniformRandom}(x_i)$
        \ENDIF
        
        \STATE $\hat{\mathcal{L}}_\theta \gets \text{ComputeELBO}(x_i, \pi_\theta, M_{\text{num}})$
        \STATE $\hat{\mathcal{L}}_{\text{old}} \gets \text{ComputeELBO}(x_i, \pi_{\text{old}}, M_{\text{den}})$
        \STATE $\hat{\rho}_i \gets \exp(\hat{\mathcal{L}}_\theta - \hat{\mathcal{L}}_{\text{old}})$
    \ENDFOR
\ENDFOR
\STATE \textbf{return} Importance Weights $\hat{\rho}$
\end{algorithmic}
\end{algorithm}

\subsection{Visual Diagnosis of Gradient Instability}
\label{app:gradient_diagnosis}

To empirically validate the ``Instability Feedback Loop'' and the structural failures diagnosed in Section 3.1, we visualize the joint distribution of importance weights ($\log_{10} \rho$) and gradient norms ($\log_{10} \|\hat{g}\|$) recorded during training. Figure~\ref{fig:gradient_instability} presents a comparative diagnostic of ESPO, SPG-IS, and StableDRL, offering a direct geometric validation of our theoretical analysis.

\textbf{The ``Chimney'' Failure in ESPO.} As observed in the left panel, ESPO exhibits a pathological ``chimney'' distribution. While the majority of samples cluster in a low-variance region, a sparse subset of noise-induced outliers (importance weights $\rho > 10^6$) drives gradient norms to catastrophic levels ($\|\hat{g}\| > 10^4$). This empirically confirms \textit{Failure 1 (Asymmetric Failure of the Clipped Surrogate)} described in Section 3.1: when a sample with a large noise-induced importance weight has a negative advantage, it falls into the unclipped branch of the objective. Consequently, these ``trapdoor'' outliers bypass the trust region and act as unbounded multipliers on the step size, injecting massive shocks that destabilize the policy.

\textbf{Drift-Variance Correlation in SPG-IS.} The center panel displays the dynamics of SPG-IS. Although SPG avoids explicit ratio computation to mitigate the ``chimney'' effect, the visualization reveals a strong positive correlation between the implicit weight magnitude and the update norm. This indicates that the method remains sensitive to policy drift: as the target policy diverges from the behavior policy, the accumulated ``rollout-reuse bias'' scales the variance of the updates proportionally. This prevents convergence, as the method lacks the structural constraints to mechanically decouple the update magnitude from distribution shifts.

\textbf{Geometric Stability in StableDRL.} In contrast, the right panel demonstrates the efficacy of our proposed framework. StableDRL displays a compact, bounded distribution where gradient norms remain consistently low ($< 10^{1.8}$) regardless of the importance weight magnitude. This confirms the effect of our dual stability mechanisms: \textit{Unconditional Clipping} strictly censors extreme ratios before aggregation, while \textit{Self-Normalization} ensures the update remains a convex combination of per-sample gradients. As predicted by \textit{Theorem 3.1}, StableDRL effectively confines the update to the convex hull of the samples, maintaining deterministic stability even in the presence of heavy-tailed proxy noise.

\begin{figure}[h]
    \centering
    \includegraphics[width=\linewidth]{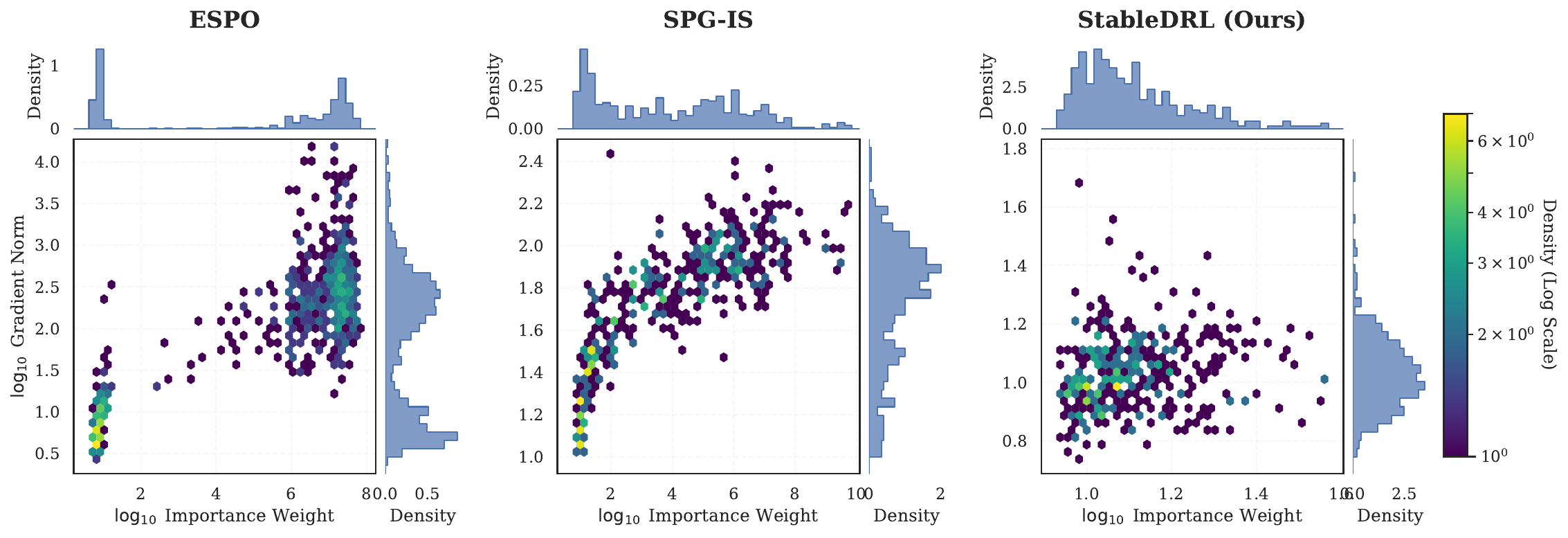}
    \caption{\textbf{Diagnosing Gradient Instability in dLLM Training.} We visualize the joint distribution of importance weights ($\log_{10} \rho$) and gradient norms ($\log_{10} \|\hat{g}\|$) during training. \textbf{(Left) ESPO:} Exhibits a characteristic ``chimney'' failure where rare, noise-induced outliers bypass clipping on negative advantages, acting as unbounded step-size multipliers that drive gradients to explosion ($> 10^4$). \textbf{(Center) SPG-IS:} Despite avoiding explicit ratios, the update variance is strongly correlated with policy drift, confirming that rollout-reuse bias accumulates to destabilize training. \textbf{(Right) StableDRL (Ours):} By enforcing strict clipping and self-normalization, our method decouples update magnitude from proxy noise, confining gradients to the convex hull of the samples (Theorem 3.1) and maintaining deterministic stability.}
    \label{fig:gradient_instability}
\end{figure}

\end{document}